\newtheorem*{theorem*}{Theorem}
\newtheorem*{lemma*}{Lemma}
\newtheorem*{corollary*}{Corollary}
\newcommand{\olive}{\textsc{Olive}\xspace}
\newcommand{\rfolive}{\textsc{RFOlive}\xspace}
\newcommand{\jolive}{\textsc{JointOlive}\xspace}
\newcommand{\golf}{\textsc{Golf}\xspace}
\newcommand{\fqi}{\textsc{FQI}\xspace}
\newcommand{\veps}{\varepsilon}
\newcommand{\unif}{\mathrm{unif}}
\newcommand{\etamin}{\eta_{\mathrm{min}}}
\newcommand{\poly}{\mathrm{poly}}
\newcommand{\defeq}{:=}
\newcommand{\ecover}{\varepsilon_{\textrm{elim}}/64}
\newcommand{\nactv}{n_{\textrm{actv}}}
\newcommand{\nelim}{n_{\textrm{elim}}}
\newcommand{\eactv}{\varepsilon_{\textrm{actv}}}
\newcommand{\eelim}{\varepsilon_{\textrm{elim}}}
\newcommand{\dqbe}{\mathrm{dim}_{\mathrm{qbe}}}
\newcommand{\dvbe}{\mathrm{dim}_{\mathrm{vbe}}}
\newcommand{\dqbee}{d_{\mathrm{qbe}}}
\newcommand{\dvbee}{d_{\mathrm{vbe}}}
\newcommand{\dde}{d_{\mathrm{de}}}
\newcommand{\dbr}{d_{\mathrm{br}}}
\newcommand{\EcalR}{\mathcal{E}^{R}}
\newcommand{\EcalQ}{\mathcal{E}^{R}_{\mathrm{Q}}}
\newcommand{\EcalV}{\mathcal{E}^{R}_{\mathrm{V}}}
\newcommand{\hEcalQ}{\hat{\mathcal{E}}^{R}_{\mathrm{Q}}}
\newcommand{\hEcalsqzero}{\hat{\Ecal}_{\square}^\zero}
\newcommand{\EcalQzero}{\mathcal{E}^{\zero}_{\mathrm{Q}}}
\newcommand{\EcalVzero}{\mathcal{E}^{\zero}_{\mathrm{V}}}
\newcommand{\tEcalQzero}{\tilde{\mathcal{E}}^{\zero}_{\mathrm{Q}}}
\newcommand{\tEcalVzero}{\tilde{\mathcal{E}}^{\zero}_{\mathrm{V}}}
\newcommand{\hEcalQzero}{\hat{\mathcal{E}}^{\zero}_{\mathrm{Q}}}
\newcommand{\hEcalVzero}{\hat{\mathcal{E}}^{\zero}_{\mathrm{V}}}
\newcommand{\Fon}{\mathcal{F}_{\mathrm{on}}}
\newcommand{\Foff}{\mathcal{F}_{\mathrm{off}}}
\newcommand{\Zon}{\mathcal{Z}_{\mathrm{on}}}
\newcommand{\Pon}{\Pi_{\mathrm{on}}}
\newcommand{\Fsur}{\mathcal{F}_{\mathrm{sur}}}
\newcommand{\philc}{\phi^{\mathrm{lc}}}
\newcommand{\philci}{\phi^{\mathrm{lc},i}}
\newcommand{\philr}{\phi^{\mathrm{lr}}}
\newcommand{\mulr}{\mu^{\mathrm{lr}}}
\newcommand{\Philr}{\Phi^{\mathrm{lr}}}
\newcommand{\Philc}{\Phi^{\mathrm{lc}}}
\newcommand{\dlr}{d_{\mathrm{lr}}}
\newcommand{\dlc}{d_{\mathrm{lc}}}
\newcommand{\ca}{c_1} 
\newcommand{\cb}{c_2} 
\newcommand{\ce}{c_3} 
\newcommand{\cf}{c_4} 
\newcommand{\cc}{c_5} 
\newcommand{\cd}{c_6} 
\newcommand{\cg}{c_7} 
\newcommand{\tabincell}[2]{\begin{tabular}{@{}#1@{}}#2\end{tabular}}
\newcommand{\para}[1]{\textbf{#1}~}
\definecolor{darkred}{rgb}{0.7,0,0}
\definecolor{darkgreen}{rgb}{0,0.5,0}
\definecolor{orange}{rgb}{0.7,0.4,0}
\definecolor{purple}{rgb}{0.8,0.0,0.8}
\definecolor{mycolor}{rgb}{0.2,0.8,0.8}
\newcommand{\kibitz}[2]{\ifnum\Comments=1{\textcolor{#1}{\textsf{\footnotesize #2}}}\fi}
\title{On the Statistical Efficiency of Reward-Free Exploration in Non-Linear RL}
\author{%
  Jinglin Chen \thanks{Equal contribution} \\
    Department of Computer Science\\
    University of Illinois Urbana-Champaign\\
  \texttt{jinglinc@illinois.edu} \\
 \And
 Aditya Modi $^*$
 \\
    Microsoft \\
 \texttt{admodi@umich.edu} \\
\AND Akshay Krishnamurthy
\\
       Microsoft Research\\
       \texttt{akshaykr@microsoft.com} \\
    \And Nan Jiang \\
      Department of Computer Science\\
       University of Illinois Urbana-Champaign\\
       \texttt{nanjiang@illinois.edu}\\
  \AND 
  Alekh Agarwal \\
  Google Research  \\
  \texttt{alekhagarwal@google.com} \\
}
\begin{document}

\maketitle

\begin{abstract}
We study reward-free reinforcement learning (RL) under general non-linear function approximation, and establish sample efficiency and hardness results under various standard structural assumptions. On the positive side, we propose the \rfolive (Reward-Free \olive) algorithm for sample-efficient reward-free exploration under minimal structural assumptions, which covers the previously studied settings of linear MDPs \citep{jin2019provably}, linear completeness \citep{zanette2020provably} and low-rank MDPs with unknown representation \citep{modi2021model}.  Our analyses indicate that the \emph{explorability} or \emph{reachability} assumptions, previously made for the latter two settings, are not necessary statistically for reward-free exploration. On the negative side, we provide a statistical hardness result for both reward-free and reward-aware exploration under linear completeness assumptions when the underlying features are unknown, showing an exponential separation between low-rank and linear completeness settings. 
\end{abstract}

\section{Introduction}
Designing a reward function which faithfully captures the task of interest remains a central practical hurdle in reinforcement learning (RL) applications. To address this, a series of recent works~\citep{jin2020reward,zhang2020task,wang2020on,zanette2020provably,qiu2021reward} investigate the problem of reward-free exploration, where the agent initially interacts with its environment to collect experience (``online phase''), that enables it to perform \emph{offline learning} of near optimal policies for any reward function from a potentially pre-specified class (``offline phase''). 
Reward-free exploration also provides a basic form of multitask RL, enabling zero-shot generalization, across diverse rewards, and provides a useful primitive in tasks such as representation learning~\citep{agarwal2020flambe,modi2021model}. So far, most of the study of reward-free RL has focused on tabular and linear function approximation settings, in sharp contrast with the literature on reward-aware RL, where abstract structural conditions identify when general function approximation can be used in a provably sample-efficient manner \citep{jiang2017contextual,jin2021bellman,du2021bilinear}.

In this paper, we seek to bridge this gap and undertake a systematic study of reward-free RL in a model-free setting with general function approximation. 
We devise an algorithm, \rfolive, which is non-trivially adapted from its reward-aware counterpart \citep{jiang2017contextual}, and provide polynomial sample complexity guarantees under general conditions that significantly relax the assumptions needed by prior reward-free RL works. Our results produce both algorithmic contributions and important insights about the tractability of reward-free RL, as we summarize below (see also \Cref{tab:summary}).

\begin{table}[t]
\renewcommand{\arraystretch}{1.5}
\centering
\begin{tabular}{p{0.05\linewidth}|p{0.6\linewidth}|p{0.25\linewidth}} \hline
& \textbf{Setting} & \textbf{Reference} \\ \hline
1 & Linear \tikzmark{a} MDP  &  \citet{wang2020on} \\ \hline
2 & \tikzmark{h}Linear co\tikzmark{b}mpleteness + {\color{red} explorability} & \citet{zanette2020provably} \\ \hline
3 & Comple\tikzmark{c}teness + \tikzmark{d}Q-type B-E dimension & \cellcolor[gray]{0.85} \textbf{\pref{thm:rfolive_q}} \\\hline
4 & Complete\tikzmark{i}ness + \tikzmark{e}V-type B-E dimension + small  $|\mathcal{A}|$ & \cellcolor[gray]{0.85} \textbf{\pref{thm:rfolive_v}} \\ \hline
5 & Low-rank\tikzmark{f} MDP ($\phi^* \in \Phi$) + small $|\mathcal{A}|$ + {\color{red} reachability} & \citet{modi2021model}\\ \hline
6 & \tabincell{c}{\tikzmark{j}Linear co\tikzmark{g}mpleteness ($\phi^* \in \Phi$) + small  $|\mathcal{A}|$ + reward-aware    \vspace{-.5em}  \\ + explorability + reachability  + generative model \quad \quad \quad \quad \quad} \hfill & \cellcolor[gray]{0.85} \textbf{\pref{thm:rf_lower_bound}} (intractable) \\ \hline
\end{tabular}
\begin{tikzpicture}[overlay, remember picture, shorten >=.5pt, shorten <=.5pt, transform canvas={yshift=.25\baselineskip}]
\draw [->, thick, blue] ([yshift=-3pt]{pic cs:a}) to ([yshift=3pt]{pic cs:b});
\draw [->, thick, blue] ([yshift=-3pt,xshift=-1pt]{pic cs:b}) -- ([yshift=3pt, xshift=1pt]{pic cs:c});
\draw [->, thick, blue] ([yshift=-3pt,xshift=-1pt]{pic cs:b}) -- ([yshift=3pt, xshift=1pt]{pic cs:d});
\draw [->, thick, blue] ([yshift=3pt]{pic cs:f}) -- ([yshift=-3pt,xshift=1pt]{pic cs:e});
\draw [->, thick, blue] ([yshift=3pt]{pic cs:f}) -- ([yshift=-3pt,xshift=-1pt]{pic cs:i});
\draw [->, thick, blue] ([yshift=-4pt,xshift=-1pt]{pic cs:f}) -- ([yshift=3pt, xshift=-1pt]{pic cs:g});
\draw [->, thick, blue] ([xshift=-3pt]{pic cs:h}) [bend right] to ([xshift=-3pt]{pic cs:j});
\end{tikzpicture}
\caption{Summary of our results and comparisons to most closely related works in reward-free exploration. Blue arrows 
represent implication ($A$ {\color{blue} $\to$} $B$ means $B$ is a consequence of and hence weaker condition than $A$), and the {\color{red} red assumptions} are what prior works need that are avoided by us. For linear settings, the true feature $\phi^*$ is assumed known unless otherwise specified (e.g., in Rows 5 \& 6, $\phi^*$ is unknown but belongs to a feature class $\Phi$). ``B-E'' stands for (low) Bellman Eluder dimension \citep{jin2021bellman}. Row 6 has many assumptions, which make it strong since it is a negative result. The detailed comparisons of existing sample complexity rates and our corollaries can be found in \pref{app:comparison}.
\label{tab:summary}}
\vspace{-1em}
\end{table}

\para{Algorithmic contribution: beyond linearity} A unique challenge in reward-free RL is that the agent must exhaustively explore the environment during the online phase, since it does not know which states will be rewarding in the offline phase. A natural idea to tackle this challenge is to deploy a reward-aware RL ``base algorithm'' with the $\zero$ reward function, since this algorithm must explore to certify that there is indeed no reward. Prior works adopt this idea with optimistic value-iteration (VI) approaches, which use proxy reward functions to drive the agent to new states. However these optimistic methods rely heavily on linearity assumptions to construct the proxy reward, and it is difficult to extend them to general function approximation. Instead of using optimistic VI, our basic building block is the \olive\footnote{We use the Q-type and V-type versions of \olive from \citet{jin2021bellman} as their structural assumption of low Bellman Eluder dimension subsumes the low Bellman rank assumption in \citet{jiang2017contextual} (see \pref{prop:br_be}).} algorithm of~\citet{jiang2017contextual}, a constraint-gathering and elimination algorithm that is a central workhorse for reward-aware RL with general function approximation. In the online phase of \rfolive, we run this algorithm with the $\zero$ reward function, and we save the set of constraints gathered (in the form of \textit{separate} datasets) for use in the offline phase.

\para{Algorithmic contribution: novel offline module}
Prior works for reward-free RL typically use regression approaches \citep{ernst2005tree,chen2019information,jin2019provably} in the offline phase, e.g., \fqi \citep{modi2021model,zanette2020provably}, or its optimistic variants~\citep{zhang2020task,wang2020on}.
In the offline phase of \rfolive, rather than relying on regression, we enforce the constraints gathered in the online phase, which amounts to eliminating functions that have large average Bellman errors on state-distributions visited in the online phase. This generic elimination scheme does not rely on tabular or linear structures and allows us to move beyond these assumptions to obtain reward-free guarantees in much more general settings.

\para{Implications: positive results} The major assumptions that enable our sample complexity guarantees are Bellman-completeness (\pref{assum:completeness_F}) and low Bellman Eluder dimension 
(\pref{def:dim_qbe} and \pref{def:dim_vbe}); see Rows 3 and 4 in \Cref{tab:summary}. 
These conditions significantly relax prior assumptions in the more restricted settings. Furthermore, prior works in the linear completeness and low-rank MDP settings require \textit{explorability/reachability} assumptions \citep{zanette2020provably,modi2021model}, which, roughly speaking, assert that every direction in the state-action feature space can be visited with sufficient probability. These assumptions are often not needed in reward-aware RL but suspected to be necessary for model-free reward-free settings. Our results do not depend on such assumptions, showing that they are not necessary for sample-efficient reward-free exploration either. 

\para{Implications: negative results} We develop lower bounds, showing that some of the structural assumptions made here are not easily relaxed further. While the settings of linear completeness with known features (Row 3), and low-rank MDPs with unknown features  (Row 4) are both independently tractable, we show a hardness result against learning under linear completeness when the features are unknown, even under a few additional assumptions (Row 6). 

Taken together, our results take a significant step in bridging the sizeable gap in our understanding of reward-aware and reward-free settings and bring the two closer to an equal footing. 

\paragraph{Related work}
In recent years, we have seen a wide range of results for reward-aware RL under general function approximation~\citep{jiang2017contextual,dann2018oracle,sun2019model,wang2020reinforcement,jin2021bellman,du2021bilinear}. These works develop statistically efficient algorithms using structural assumptions on the function class. Despite their generality, a trivial extension to the reward-free setting incurs an undesirable linear dependence on the size of the reward class. 

There also exists a line of research on reward-free RL in various settings: tabular MDPs \citep{jin2020reward,zhang2020task, kaufmann2021adaptive,menard2021fast,yin2021optimal,wu2022gap}, MDPs with the linear structure \citep{wang2020on,zhang2021reward,zanette2020provably,huang2021towards,wagenmaker2022reward}, kernel MDPs \citep{qiu2021reward}, block/low-rank MDPs \citep{misra2020kinematic,agarwal2020flambe,modi2021model}, and multi-agent settings \citep{bai2020provable,liu2021sharp}. Many of these settings can be subsumed by our more general setup.

Our offline module uses average Bellman error constraints, which is related to a line of work in offline RL \citep{xie2020q,jiang2020minimax,chen2022offline,zanette2022bellman}. However, there is only one dataset in the standard offline RL setting, and these works form multiple average Bellman error constraints using an additional helper class for reweighting, and need to impose additional realizability- or even completeness-type assumptions on such a class. In contrast, we naturally collect \textit{multiple} datasets in the online phase, so we do not require a parametric class for reweighting during offline learning.

\section{Preliminaries}
\paragraph{Markov Decision Processes (MDPs)}
We consider a finite-horizon episodic Markov decision process (MDP) defined as $M = (\Xcal, \Acal, P, H)$, where $\Xcal$ is the state space, $\Acal$ is the action space, $P = (P_0,\ldots,P_{H-1})$ with $P_h: \Xcal \times \Acal \rightarrow \Delta(\Xcal)$ is the transition dynamics, and $H$ is the number of timesteps in each episode. If the number of actions is finite, we denote the cardinality  $\abr{\Acal}$ by $K$. In each episode, an agent generates a trajectory $\tau = \rbr{x_0,a_0,x_1,\ldots, x_{H-1}, a_{H-1}, x_H}$ by taking a sequence of actions $a_0,\ldots,a_{H-1}$, where $x_0$ is a fixed starting state and $x_{h+1} \sim P_h(\cdot\mid x_h,a_h)$. For simplicity, we will use $a_{i:j}$ to denote $a_i,\dots,a_j$ and use the notation $[H]$ to refer to $\cbr{0,1,\ldots, H-1}$.
We use the notation $\pi$ to denote a collection of $H$ (deterministic) policy functions $\pi = (\pi_0,\ldots, \pi_{H-1})$, where $\pi_h: \Xcal \rightarrow \Acal$. For any $h \in [H]$ with $h'>h$, we use the notation $\pi_{h:h'}$ to denote the policies $(\pi_h, \pi_{h+1} \ldots,\pi_{h'})$. For any policy $\pi$ and reward function\footnote{We consider deterministic reward and initial state for simplicity. Our results easily extend to stochastic versions.} $R =(R_0,\ldots,R_{H-1})$ with $R_h:\Xcal\times\Acal\rightarrow[0,1]$, we define the policy-specific action-value (or Q-) function as $Q^\pi_{R,h}(x,a) = \EE_{\pi}[\sum_{h'=h}^{H-1} R(x_{h'},a_{h'})\mid x_h=x, a_h=a]$ and state-value function as $V^\pi_{R,h}(x) = \EE_{\pi}[Q^\pi_{R,h}(x,a_h)\mid x_h=x, a_h \sim \pi]$. 
We also use $v^\pi_R = V^\pi_{R,0}(x_0)$ to denote the expected return of policy $\pi$. For any fixed reward function $R$, there exists a policy $\pi^*_R$ such that $v^*_R=V^{\pi^*_R}_{R,h}(x) = \sup_{\pi} V^\pi_{R,h}(x)$ for all $x \in \Xcal$ and $h \in [H]$, where $v^*_R$ denotes the optimal expected return under $R$. 
We use $\Tcal^R_h$ to denote the reward-dependent Bellman operator: $\forall f_{h+1} \in \RR^{\Xcal\times\Acal}$, $(\Tcal^R_h f_{h+1})(x,a) \defeq R_h(x,a) + \EE\sbr{\max_{a' \in \Acal} f_{h+1}(x',a') \mid x' \sim P_h(\cdot\mid x,a)}$ 
and similarly define $\Tcal^\zero_h$ for the operator with zero reward.
The optimal action-value function (under reward $R$) $Q^*_R$ satisfies the Bellman optimality equation $Q^*_{R,h} = \Tcal^R_h Q^*_{R,h+1},\forall h\in[H]$. 

\paragraph{Reward-free RL with function approximation} We study reward-free RL with value function approximation, wherein, the agent is given a function class 
$\Fcal=\Fcal_0\times\ldots\times\Fcal_{H-1}$ 
where $\Fcal_h: \Xcal \times \Acal \rightarrow [-(H-h-1), H-h-1],\forall h\in[H]$.\footnote{Since it is natural to use $\Fcal$ to capture the reward-independent component (\pref{assum:realizability_F}) in our reward-free setting, we assume $\Fcal_h$ is upper bounded by $H-h-1$. We include the negative range to simplify the discussions for various instantiations. Our main results also hold if we assume $\Fcal_h:\Xcal\times\Acal\rightarrow[0,H-h-1]$.}
Without loss of generality, we assume $\zero\in\Fcal_h,\forall h\in[H]$ and
$f_{H}\equiv \zero,\forall f\in\Fcal$. 
For any $f\in\Fcal$, we use $V_{f,h}$ to denote its induced state-value function, i.e., $V_{f,h}(x)=\max_a f_h(x,a)$ and $\pi_f(x)$ as its greedy policy, i.e., $\pi_{f,h}(x)=\argmax_a f_h(x,a)$. When these functions take $x_h$ as input and there is no confusion, we may drop the subscript $h$ and use $V_f(x_h)$ and $\pi_f(x_h)$.

In reward-free RL, the agent is given access to a reward class $\Rcal$, but the specific reward function is only selected after the agent finishes interacting with the environment. 
Specifically, the agent operates in two phases: an \emph{online} phase where it explores the given MDP $M$ to collect a dataset of trajectories $\Dcal$ without the reward information, and an \emph{offline} phase, where it uses the collected dataset $\Dcal$ to optimize for any revealed reward function $R \in \Rcal$. 

Our goal is to investigate the statistical efficiency of reward-free RL with general non-linear function approximation: how many trajectories does the agent need to collect in the online phase such that in the offline phase, with probability at least $1-\delta$, for any $R \in \Rcal$, it can compute a near-optimal policy $\pi_R$ satisfying $v^{\pi_R}_R \geq v^*_R - \veps$?
We measure the statistical efficiency in terms of the structural complexity of function class $\Fcal$, reward class $\Rcal$, horizon $H$, accuracy $\veps$ and failure probability $\delta$. 

As for expressivity assumptions, we assume the function class $\Fcal$ is realizable and complete. 
Realizability requires that the optimal function $Q^*_R$ belongs to the reward-appended class $\Fcal + R$, which is natural in the reward-free setting where the agent uses $\Fcal$ to capture reward-independent information.
Completeness requires that the Bellman backups of 
and 
$\Fcal_{h+1} + R_{h+1}$ belong to $\Fcal_h$, and additionally that the Bellman backup of $\Fcal_{h+1}-\Fcal_{h+1}$ belongs to $\Fcal_{h}-\Fcal_{h}$.
\begin{assum}[Realizability of the function class]
\label{assum:realizability_F}
We assume
$\forall R\in\Rcal, h\in[H]$, $Q^*_{R,h}\in \Fcal_h + R_h$, where $\Fcal_h+R_h=\{f_h+R_h:f_h\in\Fcal_h\}$.
\end{assum}
\begin{assum}[Completeness]
\label{assum:completeness_F}
We assume $\forall h\in[H]$, $\Tcal_{h}^\zero \Fcal_{h+1},\Tcal_{h}^\zero (\Fcal_{h+1}+\Rcal_{h+1})\subseteq \Fcal_h$ and $\Tcal_{h}^\zero (\Fcal_{h+1}-\Fcal_{h+1})\subseteq \Fcal_h-\Fcal_h$, 
where $\Fcal_h-\Fcal_h=\{f_h-f_h':f_h,f_h'\in\Fcal_h\}$.
\end{assum}

Next we define the covering number, which measures the statistical capacities of function classes. 
\begin{definition}[$\veps$-covering number, e.g., \citet{wainwright2019high}]
We use $\Ncal_\Fcal(\veps)$ to denote the $\veps$-covering number of a set $\Fcal=\Fcal_0\times\ldots\times\Fcal_{H-1}$ under metric $\sigma(f,f')=\max_{h\in[H]}\|f_h-f'_h\|_\infty$ for $f,f'\in\Fcal$. We define it as $\Ncal_\Fcal(\veps) = \min \abr{\Fcal_{\textrm{cover}}}$ such that $\Fcal_{\textrm{cover}} \subseteq \Fcal$ and for any $f \in \Fcal$, there exists $f' \in \Fcal_{\textrm{cover}}$ that satisfies $\sigma(f, f') \le \veps$. For the reward class $\Rcal$, $\Ncal_\Rcal(\veps)$ is defined in the same way.
\end{definition}

Finally, as our guarantees depend on Bellman Eluder (BE) dimensions---which are structural properties of the MDP that enable sample-efficient exploration---we will need the following definitions  \citep[see][]{russo2013eluder,jin2021bellman} which the later definitions of BE dimensions will build on.
\begin{definition}[$\veps$-independence between distributions]
\label{def:ind_dist}
	Let $\Fcal'$ be a function class defined on some space $\Xcal'$, and	$\nu,\mu_1,\ldots,\mu_n$ be probability measures over $\Xcal'$.
	We say 	$\nu$ is $\veps$-independent of $\{\mu_i\}_{i=1}^n$ w.r.t. $\Fcal'$ if  $\exists \, f'\in\Fcal'$ such that
	$\sqrt{\sum_{i=1}^{n} ( \EE_{\mu_i} [f'])^2}\le \veps$, but $|\EE_{\nu}[f']| > \veps$. 
\end{definition}

\begin{definition}[Distributional Eluder (DE) dimension]
\label{def:DE}
Let $\Fcal'$ be a function class defined on some space $\Xcal'$, and $\Gamma'$ be a family of probability measures over $\Xcal'$. 
	The DE dimension $\dde(\Fcal',\Gamma',\veps)$ is the length of the longest sequence $\{\rho_i\}_{i=1}^n \subseteq \Gamma'$ s.t. $\exists \, \veps'\ge\veps$ where $\rho_i$ is $\veps'$-independent of $\{\rho_j\}_{j=1}^{i-1},\forall i= 1,\ldots,n$.
\end{definition}
We also introduce the notation
$\Dcal_{\Fcal}:=\{\Dcal_{\Fcal,h}\}_{h\in[H]}$, where $\Dcal_{\Fcal,h}$ denotes the collection of all possible roll-in distributions at the $h$-th step generated by $\pi_f$ for some $f\in\Fcal$. Formally, $\Dcal_{\Fcal,h} := \{d_h^{\pi_f}\}_{f \in \Fcal}$ where $d_h^{\pi_f}(x,a) = \PP_{\pi_f}[x_h=x,a_h=a]$ is the state-action occupancy measure. 

\section{\rfolive algorithm and results}
In this section, we describe our main algorithm \rfolive, a reward-free variant of \olive \citep{jiang2017contextual,jin2021bellman}.  The algorithmic template for \rfolive is shown in the pseudocode (\pref{alg:rf_olive}) and it can be instantiated with both Q-type and V-type versions of \olive from \citet{jin2021bellman}.\footnote{The Q/V-type algorithms differ in whether to use uniform actions during exploration, and the distinction is needed to handle different settings of interest (see \pref{app:q_vs_v} as well as \Cref{tab:summary}).
} 
In the pseudocode, we use $\square$ as a placeholder for the respective Q/V-type definitions. For clarity, we will describe the Q-type \rfolive algorithm and its results in \pref{sec:main_q} and then state the differences for the V-type version and corresponding results in \pref{sec:main_v}.

Before introducing our algorithm, we define the following average Bellman error:
\begin{definition}[Average Bellman error]
\label{def:abe}
We denote $\EcalR$ as the average Bellman error under reward $R$: 
\begin{align*}
\Ecal^R(f,\pi,\pi',h)=\EE\sbr{f_h(x_h,a_h)-R_h(x_h,a_h)-V_{f}(x_{h+1})\mid a_{0:h-1}\sim\pi, a_h \sim \pi'}.    
\end{align*}
As shorthand, we use $\EcalQ(f,\pi,h) = \Ecal^R(f,\pi,\pi,h)$ to represent the Q-type average Bellman error and $\EcalV(f,\pi,h) = \Ecal^R(f,\pi,\pi_f,h)$ to represent the V-type average Bellman error \citep{jin2021bellman}. We use $\Ecal^\zero$ to represent the average Bellman errors under $\zero$ reward. 
\end{definition}

\begin{algorithm}[htb]
    \caption{\rfolive ($\Fcal,\varepsilon,\delta$): Reward-Free \olive}
    \label{alg:rf_olive}
    \begin{algorithmic}[1]
        \STATEx \textcolor{blue}{Online phase}, no reward information.
        \STATE Set $\eactv,\eelim,\nactv,\nelim$ according to Q-type/V-type and construct 
		$\Fon=\Fcal-\Fcal$. \label{line:set}
		\STATE Initialize $\Fcal^0 \gets \Fon$ (Q-type) or $\Fcal^0 \gets \Zon$, where $\Zon$ is an $(\eelim/64)$-cover of $\Fon$ (V-type).
		\FOR{$t=0,1,\ldots$}
		 \STATE Choose policy $\pi^t = \pi_{f^t}$, where $f^t = \argmax_{f \in \Fcal^t} V_f(x_0)$. \label{line:optimism}
            \STATE Collect $\nactv$ trajectories $\{(x^{(i)}_0,a^{(i)}_0,\ldots, x^{(i)}_{H-1},a^{(i)}_{ H-1})\}_{i=1}^{\nactv}$ by following $\pi^t$ for all $h \in [H]$ and form estimates $\tilde \Ecal^\zero(f^t,\pi^t,\pi^t,h)$ for each $h \in [H]$ via~\cref{eq:on_policy_Qberr}. \label{line:estimate_BE}
            \IF{ $\sum_{h=0}^{H-1} \tilde \Ecal^\zero(f^t,\pi^t,\pi^t,h) \leq H\eactv$}\label{line:valid}
                \STATE Set $T = t$ and exit the loop. \label{line:set_T}
            \ENDIF
            \STATE Pick any $h^t \in [H]$ for which $\tilde \Ecal^\zero(f^t,\pi^t,\pi^t,h^t) > \eactv$. \label{line:deviation}
            \STATE Set $\pi_{\textrm{est}} = \pi^t$ (Q-type) or $\pi_{\textrm{est}} = \textrm{Unif}(\Acal)$, i.e., draw actions uniformly at random (V-type).
            \STATE Collect $\nelim$ samples $\Dcal^t = \{(x_{h^t}^{(i)}, a_{h^t}^{(i)}, x_{h^t+1}^{(i)})\}_{i=1}^{\nelim}$ where $a_{0:h^t-1} \sim \pi^t$ and $a_{h^t} \sim \pi_{\textrm{est}}$.
            \label{line:collect_nelim}
            \STATE For all $f \in \Fcal^t$, compute estimate $\hEcalsqzero(f,\pi^t,h^t)$ via~\cref{eq:est_on_policy_Qberr} (Q-type) or \cref{eq:est_on_policy_Vberr} (V-type).\label{line:estimate_allBE}
            \STATE Update $\Fcal^{t+1} = \{f \in \Fcal^t: |\hEcalsqzero(f,\pi^t,h^t)|\leq \eelim \}$. \label{line:update_vspace}
		\ENDFOR
		\STATE Save the collected tuples $\{(h^t,\pi^t,\Dcal^t)\}_{t=0}^{T-1}$ for the offline phase.
		\STATEx \textcolor{blue}{Offline phase}, the reward function $R = (R_0,\ldots,R_{H-1})$ is revealed.
		\STATE Construct $\Foff(R)=\Fcal + R$, set $\Pi_{\textrm{est}}^t = \{\pi^t\}$ (Q-type) or \hspace{-.2em} $\Pi_{\textrm{est}}^t =\Pon\coloneqq \cbr{\pi_f: f \in \Zon}$ (i.e., the greedy policies induced by $\Zon $) (V-type). \label{line:elim_Pi}
		
		\STATE For each $t \in [T]$, $g \in \Foff(R)$, and $\pi \in \Pi_{\textrm{est}}^t$, compute estimate $\hat \Ecal^R(g,\pi^t,\pi,h^t)$ via~\cref{eq:Qest_bellman_errs} (Q-type) or \cref{eq:Vest_bellman_errs} (V-type). 
		\STATE Set $\Fsur(R) = \{g \in \Foff(R): \forall t \in [T], \forall \pi \in \Pi_{\textrm{est}}^t, |\hat\Ecal^R(g,\pi^t,\pi,h^t)| \leq \eelim/2\} $. \label{line:off_elim}
		\STATE Return policy $\hat \pi = \pi_{\hat g}$, where $\hat{g} = \argmax_{g \in \Fsur(R)} V_g(x_0)$.\label{line:return}
    \end{algorithmic}
\end{algorithm}

\subsection{Q-type \rfolive}
\label{sec:main_q}

Our algorithm, reward-free \olive (\rfolive) described in \pref{alg:rf_olive}, takes the function class $\Fcal$, the accuracy parameter $\veps$, and the failure probability $\delta$ as input. As we are in the reward-free setting, it operates in two phases: an online exploration phase where it collects a dataset without an explicit reward signal, and an offline phase where it computes a near-optimal policy after the reward function $R$ is revealed. Below, we describe the two phases and the intuition behind the algorithm design.

\paragraph{Online exploration phase} During the online phase, we first set elimination thresholds $\eactv,\eelim$ and sample sizes $\nactv,\nelim$ and construct the following function class $\Fon$ used in the online phase:
\begin{align*}
\Fon = \Fcal-\Fcal \defeq \cbr{(f_0-f_0',\ldots,f_{H-1}-f_{H-1}'):f_h,f_h'\in\Fcal_h, \forall h\in[H]}.
\end{align*}
The detailed specification of these parameters are deferred to \pref{thm:rfolive_q} and \pref{thm:rfolive_v}. Subsequently, we simulate Q-type \olive with the function class $\Fon$ using the zero reward function $R=\zero$ and the specified parameters. Similar to \olive, we initialize $\Fcal^0 = \Fon$ and maintain a version space $\Fcal^t\subseteq \Fcal^{t-1}\subseteq \Fon$ of surviving functions after each iteration. In each iteration, we first find the optimistic function $f^t \in \Fcal^t$ (\pref{line:optimism}) and set $\pi^t = \pi_{f^t}$. In \pref{line:estimate_BE}, we collect $\nactv$ trajectories to estimate the Q-type average Bellman error $\tEcalQzero(f^t,\pi^t,h) = \tilde \Ecal^\zero(f^t,\pi^t,\pi^t,h)$ under zero reward: 
\begin{align}
    \label{eq:on_policy_Qberr}
    \tilde \Ecal^\zero(f^t,\pi^t,\pi^t,h) = \frac{1}{\nactv}\sum_{i=1}^{\nactv} \sbr{f^t_h\rbr{x^{(i)}_h,a^{(i)}_h} - V_{f^t}\rbr{x^{(i)}_{h+1}}}.
\end{align}
If the low average Bellman error condition in \pref{line:valid} is satisfied, then we terminate the online phase and otherwise, we pick a step $h^t$ where the estimate $\tilde \Ecal^\zero(f^t,\pi^t,\pi^t,h^t) > \eactv$ (\pref{line:deviation}). Then we collect $\nelim$ trajectories using $a_{0:h^t} \sim \pi^t$ and set $\Dcal^t$ as the transition tuples at step $h^t$. Using $\Dcal^t$, we construct the Q-type average Bellman error estimates $\hEcalQzero(f,\pi^t,h^t)$ for all $f \in \Fcal^t$ in \pref{line:estimate_allBE}:
\begin{align}
    \label{eq:est_on_policy_Qberr}
    \hEcalQzero(f,\pi^t,h^t) = \frac{1}{\nelim}\sum_{i=1}^{\nelim} \sbr{f_{h^t}\rbr{x^{(i)}_{h^t},a^{(i)}_{h^t}} - V_f\rbr{x^{(i)}_{h^t+1}}}.
\end{align}
Finally, in \pref{line:update_vspace}, we eliminate all the $f \in \Fcal^t$ whose average Bellman error estimate $\hEcalQzero(f,\pi^t,h^t) > \eelim$.

The online phase returns tuples $\{(h^t,\pi^t,\Dcal^t)\}_{t=0}^{T-1}$ where $T$ is the total number of iterations and each dataset $\Dcal^t$ consists of $\nelim$ transition tuples. 

\paragraph{Offline elimination phase}  In the offline phase, the reward function $R$ is revealed, and we first construct the reward-appended function class $\Foff(R)=\Fcal+R\defeq \{(f_0+R_0,\ldots,f_{H-1}+R_{H-1}):f_h\in\Fcal_h,\forall h\in[H]\}$. Using the class $\Pi_{\textrm{est}}^t = \cbr{\pi^t}$ from \pref{line:elim_Pi} and the collected tuples $\{(h^t,\pi^t,\Dcal^t)\}_{t=0}^{T-1}$, we estimate the reward-dependent average Bellman error (\pref{def:abe}) for all iterations $t \in [T]$ of the online phase:
\begin{align}
    \hat \Ecal^R(g, \pi^t,\pi^t,h^t) = \frac{1}{\nelim} \sum_{i=1}^{\nelim} \sbr{g_{h^t}\rbr{x_{h^t}^{(i)},a_{h^t}^{(i)}} - R_{h^t}\rbr{x_{h^t}^{(i)},a_{h^t}^{(i)}} - V_g\rbr{x_{h^t+1}^{(i)}}}.
    \label{eq:Qest_bellman_errs}
\end{align}
\rfolive eliminates all $g \in \Foff(R)$ whose average Bellman error estimates are large (\pref{line:off_elim}) and returns the optimistic function $\hat g$ from the surviving set (\pref{line:return}).

\paragraph{Remark} Similar to its counterparts in reward-aware general function approximation setting \citep{jiang2017contextual,dann2018oracle,jin2021bellman,du2021bilinear}, \rfolive is in general not computationally efficient. We leave addressing computational tractability as a future direction.

\subsubsection{Main results for Q-type \rfolive}
\label{sec:main_general_q}
In this part, we present the theoretical guarantee of Q-type \rfolive. We start with introducing the Q-type Bellman Eluder (BE) dimension \citep{jin2021bellman}.
\begin{definition}[Q-type BE dimension]
\label{def:dim_qbe}
Let $(I-\Tcal_h^R)\Fcal:=\{f_h-\Tcal_h^R f_{h+1}: \ f\in\Fcal\}$ be the set of Bellman differences of $\Fcal$ at step $h$, and 
$\Gamma=\{\Gamma_h\}_{h=0}^{H-1}$ 
where $\Gamma_h$ is a set of distributions over $\Xcal\times\Acal$.
The $\veps$-BE dimension of $\Fcal$ w.r.t. $\Gamma$ is  defined as 
$\dqbe^R(\Fcal,\Gamma,\veps) := 
	\max_{h\in[H]} \dde\big((I-\Tcal_h^R)\Fcal,\Gamma_h,\veps\big).$
\end{definition}

We can now state our sample complexity result for Q-type \rfolive.
To simplify presentation, we state the result here assuming parametric growth of the covering numbers, that is $\log(\Ncal_{\Fcal}(\veps)) \leq d_{\Fcal} \log(1/\veps)$ and $\log(\Ncal_{\Rcal}(\veps)) \leq d_{\Rcal} \log(1/\veps)$. 
\begin{theorem}[Q-type \rfolive, parametric case]
\label{thm:rfolive_q}
Fix $\delta \in (0,1)$. Given a reward class $\Rcal$ and a function class $\Fcal$ that satisfies \pref{assum:realizability_F} and \pref{assum:completeness_F}, with probability at least $1-\delta$, for any $R \in \Rcal$, Q-type \rfolive (\pref{alg:rf_olive}) outputs a policy $\hat\pi$ that satisfies $v^{\hat \pi}_R \ge v^*_R - \veps$. The required number of episodes is\footnote{The $\tilde{O}(\cdot)$ notation suppresses poly-logarithmic factors in its argument.}
\begin{align*}
    \tilde{O}\rbr{\rbr{ H^7 d_{\Fcal} + H^5 d_{\Rcal}} \dqbee^2 \log(1/\delta)/\veps^2},
\end{align*}
where $\dqbee=\dqbe^\zero(\Fcal-\Fcal,\Dcal_{\Fcal-\Fcal},\veps/(4H))$.
\end{theorem}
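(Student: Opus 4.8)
The conceptual starting point is a \emph{reduction} making $\Fon=\Fcal-\Fcal$ and \pref{assum:completeness_F} the right objects. For $g=f+R\in\Foff(R)$ with $f\in\Fcal$, a direct computation gives $g_h-\Tcal^R_h g_{h+1}=f_h-\Tcal^\zero_h(f_{h+1}+R_{h+1})$, and by completeness $\Tcal^\zero_h(f_{h+1}+R_{h+1})\in\Fcal_h$, so this Bellman difference lies in $\Fcal_h-\Fcal_h=\Fon_h$; moreover, using the clause $\Tcal^\zero_h(\Fcal_{h+1}-\Fcal_{h+1})\subseteq\Fcal_h-\Fcal_h$ one can recursively exhibit $\phi\in\Fon$ with $\phi_h-\Tcal^\zero_h\phi_{h+1}=g_h-\Tcal^R_h g_{h+1}$ for all $h$. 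Hence the reward-dependent Bellman-difference class $(I-\Tcal^R_h)\Foff(R)$ is captured (up to constants) by the reward-free class $(I-\Tcal^\zero_h)\Fon$, so its distributional Eluder structure against the roll-in family $\Dcal_{\Fon}$ is governed by $\dqbee$. This is exactly why the online phase runs \olive on $\Fon$ under the $\zero$ reward: the constraints it gathers are ``worst-case over rewards.''

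\emph{Step 1 (concentration).} On a good event that holds with probability $\ge 1-\delta$ once $\nactv\gtrsim H^2\log(\Ncal_{\Fon}/\delta)/\eactv^2$ and $\nelim\gtrsim H^2\log(\Ncal_{\Fon}\Ncal_{\Fcal}\Ncal_{\Rcal}/\delta)/\eelim^2$, every empirical quantity ($\tilde\Ecal^\zero$, $\hat\Ecal^\zero_\square$, $\hat\Ecal^R$) is within, say, $\eelim/8$ of its population value, simultaneously over $(\eelim/64)$-covers of $\Fon$, of $\Foff(R)$, and of $\Rcal$; this uses Hoeffding (each summand is $O(H)$), a union bound, $\Ncal_{\Fon}(\cdot)\le\Ncal_{\Fcal}(\cdot/2)^2$, invariance of covering numbers under a fixed reward shift, and a discretization argument to pass from covers to all functions/rewards. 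Condition on this event.

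\emph{Step 2 (online phase).} This is Q-type \olive with $R=\zero$ on $\Fon$. At every non-terminating iteration $t$ the optimistic $f^t\in\Fcal^t$ has $|\Ecal^\zero(f^t,\pi^t,\pi^t,h^t)|\ge\eactv/2$, yet $f^t$ survived every earlier elimination, so $|\EE_{d^{\pi^s}_{h^s}}[(I-\Tcal^\zero_{h^s})f^t]|\le 2\eelim$ for all $s<t$; choosing parameters so that $2\sqrt{T}\,\eelim$ stays below $\eactv/2$ and below the independence scale $\veps/(4H)$, each $d^{\pi^t}_{h^t}$ is $\veps'$-independent ($\veps'\ge\veps/(4H)$) of the previously collected distributions at the same step w.r.t.\ $(I-\Tcal^\zero_h)\Fon$; the definition of $\dqbee$ plus the standard independent-subsequence counting lemma gives $T\le\tilde O(H\,\dqbee)$. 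The same mechanism yields the \emph{core-set property}: the collected $\{d^{\pi^t}_{h^t}\}$ are rich enough that any distribution in $\Dcal_{\Fon,h}$ is \emph{not} $(\sqrt{T}\,\eelim)$-independent of $\{d^{\pi^t}_{h^t}:h^t=h\}$ w.r.t.\ $(I-\Tcal^\zero_h)\Fon$ (otherwise the termination test could not have fired). This last point is the delicate exploration lemma.

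\emph{Step 3 (offline phase and rates).} Fix the revealed $R$ and let $\hat g\in\Fsur(R)$ be returned. By \pref{assum:realizability_F}, $Q^*_R\in\Foff(R)$ and its population Bellman errors vanish, so $|\hat\Ecal^R(Q^*_R,\pi^t,\pi^t,h^t)|\le\eelim/8\le\eelim/2$; hence $Q^*_R\in\Fsur(R)$ and $V_{\hat g}(x_0)\ge v^*_R$ (optimism). Since $\hat g$ survived offline elimination, $|\EE_{d^{\pi^t}_{h^t}}[(I-\Tcal^R_{h^t})\hat g]|\le\eelim$ for all $t$, so $\sqrt{\sum_{t:\,h^t=h}(\cdot)^2}\le\sqrt{T}\,\eelim$; combining the reduction of the first paragraph with the core-set property yields $|\Ecal^R(\hat g,\pi_{\hat g},\pi_{\hat g},h)|=|\EE_{d^{\pi_{\hat g}}_h}[(I-\Tcal^R_h)\hat g]|\le\sqrt{T}\,\eelim\le\veps/H$ for each $h$. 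The telescoping value-difference identity (using $f_H\equiv\zero$), $V_{\hat g}(x_0)-v^{\pi_{\hat g}}_R=\sum_{h}\Ecal^R(\hat g,\pi_{\hat g},\pi_{\hat g},h)\le\veps$, combined with optimism gives $v^{\hat\pi}_R\ge v^*_R-\veps$. Finally, choosing $\eactv,\eelim=\Theta(\veps/\poly(H,\dqbee))$ to satisfy the slack/independence-scale constraints of Steps~2--3 makes $T=\tilde O(H\dqbee)$, with $\nactv$ and $\nelim$ each carrying one factor of $\dqbee$ and the appropriate powers of $H$; the total episode count $T(\nactv+\nelim)$ is then $\tilde O((H^7 d_\Fcal+H^5 d_\Rcal)\dqbee^2\log(1/\delta)/\veps^2)$, where $d_\Rcal$ enters only through the offline union bound over the cover of $\Rcal$ (and $d_\Fcal$ gets the larger power because the online termination test demands higher accuracy). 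The ``for any $R$'' guarantee follows since the good event was built uniformly over the cover of $\Rcal$.

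\textbf{Main obstacle.} The crux is the core-set property of Step~2 together with its transfer to reward $R$ in Step~3: showing that \olive with the $\zero$ reward on $\Fcal-\Fcal$ collects a distribution set that certifies near-optimality for \emph{every} reward offline. This rests on (1) the completeness-driven inclusion $(I-\Tcal^R_h)\Foff(R)\subseteq(I-\Tcal^\zero_h)\Fon$ (and $\Dcal_{\Foff(R)}$ controlled by $\Dcal_{\Fon}$), which is precisely why $\Fon=\Fcal-\Fcal$; and (2) a careful ``\olive has explored enough'' argument linking its termination condition to the absence of Eluder-independent directions, whose constants (and the Q-type vs.\ V-type choice of $\pi_{\textrm{est}}$) must be tracked to land the claimed $\poly(H)$ exponents.
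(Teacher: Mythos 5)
Your skeleton matches the paper's (online phase $=$ Q-type \olive on $\Fon=\Fcal-\Fcal$ with reward $\zero$; offline elimination; $Q^*_R$ survives by \pref{assum:realizability_F}; optimism plus the policy-loss decomposition finishes), but the crux of your argument -- the ``core-set property'' in Step~2 -- has a genuine gap, and it is exactly the step the paper proves by a different mechanism. Termination of \olive does \emph{not} imply that every distribution in $\Dcal_{\Fon,h}$ fails to be $(\sqrt{T}\,\eelim)$-independent of the collected $\{d^{\pi^t}_{h^t}: h^t=h\}$: the stopping test only certifies that the \emph{current optimistic} $f^T$ has small on-policy error, and there can be Eluder-independent directions that are simply never visited because the corresponding functions do not predict large value at $x_0$. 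The parenthetical ``otherwise the termination test could not have fired'' is not a valid justification, and no proof is supplied. Moreover, even granting such a coverage property, your Step~3 applies non-independence to $d^{\pi_{\hat g}}_h$ where $\pi_{\hat g}$ is greedy for $\hat g\in\Fcal+R$; this distribution need not belong to $\Dcal_{\Fcal-\Fcal}$ (greedy policies of $\Fcal+R$ are not in general greedy policies of $\Fcal-\Fcal$), so the independence statement does not even type-check against the collected set. A smaller issue: your recursive construction of a single $\phi\in\Fon$ with $\phi_h-\Tcal^\zero_h\phi_{h+1}=g_h-\Tcal^R_h g_{h+1}$ simultaneously for all $h$ is dubious because $\Tcal^\zero$ is nonlinear; only a per-level identification (taking $\phi_{h+1}=\zero$) is available, which fortunately is all that is needed.

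The paper closes this gap without any coverage claim, by exploiting optimism-driven elimination through a surrogate construction. For a bad $g=f+R\in\Foff(R)$ with $|\EcalQ(g,\pi_g,h)|\ge\veps/H$, it defines $\tilde f\in\Fon$ by $\tilde f_h=f_h-\Tcal^\zero_h g_{h+1}$ (in $\Fcal_h-\Fcal_h$ by \pref{assum:completeness_F}), zero-reward Bellman backups at steps $h'<h$ (staying in $\Fcal_{h'}-\Fcal_{h'}$ by the difference-class clause of completeness), and $\zero$ above $h$. Three facts then do the work: (i) $\EcalQ(g,\pi,h)=\EcalQzero(\tilde f,\pi,h)$ for \emph{every} roll-in $\pi$; (ii) the backup structure makes $\tilde f$'s average Bellman error vanish at every step other than $h$, so it can only be eliminated at step $h$; and (iii) $V_{\tilde f}(x_0)\ge\EE[\tilde f_h\mid a_{0:h}\sim\pi_g]\ge\veps/H$, so $\tilde f$ must be eliminated in the online phase (all survivors predict less than $\veps/H$). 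The very constraint $(\pi^t,h^t=h)$ that eliminates $\tilde f$ online then witnesses $|\EcalQ(g,\pi^t,h)|>7\eelim/8$ and hence eliminates $g$ offline. Until you replace your coverage lemma with an argument of this kind (or actually prove a coverage statement, which \olive does not provide), the proposal does not establish the theorem.
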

The more general statement along with the specific values of $\eactv,\eelim,\nactv,\nelim$ are deferred to \pref{app:general_q}, where we also present the proof.
We remark that we only need the covering number of $\Rcal$ to set these parameters and do not use any other information about the reward class.

We pause to compare~\pref{thm:rfolive_q} to the reward-aware case. First, our BE dimension involves the ``difference'' function class $\Fcal - \Fcal$ under zero reward as opposed to the original class with the given reward, and our completeness assumption is also related to such a ``difference'' function class. As we will see, these differences are inconsequential for our examples of interest. Second, our sample complexity has an additional $H^4$ dependence because (a) we consider a different reward normalization from~\citet{jiang2017contextual,jin2021bellman} and (b) we use a smaller threshold in the online phase to ensure sufficient exploration. Similar gaps in $H$ factors between reward-free and reward-aware learning also appear in \citet{wagenmaker2022reward}. We also pay for the complexity of $\Rcal$ in a lower order term, which is standard in reward-free RL \citep{zhang2020learning,modi2021model}.
We believe that a similar adaptation of \golf \citep{jin2021bellman} for the reward-free setting may provide a sharper result with improved dependence on $H$ and $\dqbee$, analogously to the reward-aware setting.

\subsubsection{Q-type \rfolive for known representation linear completeness setting}
\label{sec:main_linear_completeness}
Here, we instantiate the general guarantee of Q-type \rfolive to the linear completeness setting.\footnote{\citet{zanette2020provably} only define linear completeness for $B=1$. It can be easily verified that it is equivalent for any choice of $B$. More discussion can be found in \pref{app:linear_completeness}.}
\begin{definition}[Linear completeness setting \citep{zanette2020provably}]
\label{def:lin_completeness}
We call feature $\philc=(\philc_0,\ldots,\philc_{H-1})$ with $\philc_h:\Xcal\times\Acal\rightarrow \RR^{\dlc},\|\philc_h(\cdot)\|_2\le 1,\forall h\in[H]$ a linearly complete feature, if for any $B>0, h \in [H-1]$ and $\forall f_{h+1}\in \Qcal_{h+1}(\{\philc\},B)$ we have: $\min_{f_h\in \Qcal_{h}(\{\philc\},B)}\|f_h-\Tcal_{h}^\zero f_{h+1}\|_{\infty}=0$, where $\Qcal_h(\{\philc\},B)=\{\langle \philc_h,\theta_h\rangle:\|\theta_h\|_2\le B\sqrt {\dlc}\}$.
\end{definition}

When the linearly complete features (\pref{def:lin_completeness}) $\philc$ are known, we can construct the function class  $\Fcal(\{\philc\})=\Fcal_0(\{\philc\},H-1)\times\ldots\times \Fcal_{H-1}(\{\philc\},0)$, where $\Fcal_h(\{\philc\},B_h) = \big\{f_h(x_h,a_h) = \inner{\philc_h(x_h,a_h)}{\theta_h} : \|\theta_h\|_2 \le B_h\sqrt{\dlc},\langle \philc_h(\cdot),\theta_h\rangle \in[-B_h, B_h] \big\}$ consists of appropriately bounded linear functions of $\philc$. Here superscript and subscript $\mathrm{lc}$ imply that the notations are related to the linear completeness setting. It is easy to verify that $\Fcal(\{\philc\})$ satisfies the assumptions in \pref{thm:rfolive_q}. This gives us the following corollary (see the full statement and the proof in \pref{app:linear_completeness}):

\begin{corollary}[\textit{Informal}]
\label{corr:linear_complete}
Fix $\delta \in (0,1)$. Consider an MDP $M$ that satisfies linear completeness (\pref{def:lin_completeness}) with known feature $\philc$, and the linear reward class $\Rcal=\Rcal_1\times\ldots\times\Rcal_h$, where $\Rcal_h=\cbr{\langle\philc_h,\eta_h\rangle:\|\eta_h\|_2\le\sqrt{\dlc},\langle\philc_h(\cdot),\eta_h\rangle\in[0,1] }$. 
With probability at least $1-\delta$, for any $R\in\Rcal$, Q-type \rfolive (\pref{alg:rf_olive}) with $\Fcal=\Fcal(\{\philc\})$ outputs a policy $\hat\pi$ that satisfies $v^{\hat \pi}_R \ge v^*_R - \veps$ . The required number of samples is $\tilde O\rbr{H^8\dlc^3\log(1/\delta)/\varepsilon^2}$.
\end{corollary}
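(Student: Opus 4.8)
The plan is to derive Corollary~\ref{corr:linear_complete} as a direct instantiation of Theorem~\ref{thm:rfolive_q}, so the work is entirely in verifying the hypotheses of that theorem for the choice $\Fcal = \Fcal(\{\philc\})$ and the linear reward class $\Rcal$, and then bounding each of the structural quantities $d_\Fcal$, $d_\Rcal$, and $\dqbee$ that appear in the sample-complexity bound. First I would check \textbf{realizability} (\pref{assum:realizability_F}): for any $R \in \Rcal$ and any $h$, since $Q^*_{R,h} = \Tcal^R_h Q^*_{R,h+1}$ and $Q^*_{R,H} \equiv 0$, a backward induction using linear completeness (\pref{def:lin_completeness}) applied to the zero-reward backup plus the fact that $R_h = \langle \philc_h, \eta_h\rangle$ is itself linear in $\philc_h$ shows $Q^*_{R,h} \in \Qcal_h(\{\philc\}, B_h)$ for an appropriate norm bound $B_h$; one then has to check the bookkeeping that $Q^*_{R,h} \in \Fcal_h(\{\philc\}, B_h) + R_h$ with $B_h = H-h-1$, using that value functions under rewards in $[0,1]$ over $H-h$ remaining steps are bounded in $[0, H-h]$ and that subtracting the linear reward term leaves something in $[-(H-h-1), H-h-1]$. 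Next I would check \textbf{completeness} (\pref{assum:completeness_F}): $\Tcal^\zero_h$ maps linear functions of $\philc_{h+1}$ to linear functions of $\philc_h$ by the defining property of a linearly complete feature, and it maps differences of such functions to differences (linearity of the operator on the zero-reward part), and adding a linear reward $\Rcal_{h+1}$ before backing up still lands in $\Qcal_h$; again the only delicate part is tracking the radius/range constraints so the image lands inside $\Fcal_h(\{\philc\}, B_h)$ rather than merely in $\Qcal_h$.

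Having established the assumptions, I would bound the three complexity parameters. For the \textbf{covering numbers}: $\Fcal_h(\{\philc\})$ is parametrized by $\theta_h$ in a Euclidean ball of radius $O(H\sqrt{\dlc})$ in $\RR^{\dlc}$, and since $\|\philc_h\|_2 \le 1$ an $\veps$-net in $\theta$-space gives an $\veps$-net in sup-norm; standard volumetric arguments give $\log \Ncal_{\Fcal_h}(\veps) \le O(\dlc \log(H\dlc/\veps))$, hence $\log \Ncal_\Fcal(\veps) \le O(H \dlc \log(H\dlc/\veps))$, so $d_\Fcal = \tilde O(H\dlc)$; similarly $d_\Rcal = \tilde O(H\dlc)$ for the linear reward class. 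For the \textbf{Bellman Eluder dimension} $\dqbee = \dqbe^\zero(\Fcal - \Fcal, \Dcal_{\Fcal-\Fcal}, \veps/(4H))$: the key observation is that $\Fcal - \Fcal$ consists of linear functions of $\philc$ (with a doubled radius), and crucially, by completeness the Bellman difference set $(I - \Tcal^\zero_h)(\Fcal - \Fcal) = \{f_h - \Tcal^\zero_h f_{h+1} : f \in \Fcal - \Fcal\}$ is also a set of linear functions of $\philc_h$ (with radius $O(H\sqrt{\dlc})$). Then I would invoke the standard fact (as in \citealp{jin2021bellman,russo2013eluder}) that the distributional Eluder dimension of a class of $d$-dimensional linear functions, against \emph{any} family of distributions, is $O(d \log(1/\veps) \cdot \mathrm{poly}\log)$ — i.e.\ $\dqbee = \tilde O(\dlc)$.

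Plugging $d_\Fcal = \tilde O(H \dlc)$, $d_\Rcal = \tilde O(H\dlc)$, and $\dqbee = \tilde O(\dlc)$ into the Theorem~\ref{thm:rfolive_q} bound $\tilde O((H^7 d_\Fcal + H^5 d_\Rcal)\dqbee^2 \log(1/\delta)/\veps^2)$ gives $\tilde O((H^8 \dlc + H^6 \dlc)\dlc^2 \log(1/\delta)/\veps^2) = \tilde O(H^8 \dlc^3 \log(1/\delta)/\veps^2)$, which is exactly the claimed rate. I expect the \textbf{main obstacle} to be not any single hard inequality but the careful propagation of the norm bounds $B_h = H-h-1$ and the range constraints $\langle \philc_h(\cdot), \theta_h\rangle \in [-B_h, B_h]$ through the backward induction for realizability and through the completeness verification — in particular making sure that appending and then stripping the linear reward, and taking differences of classes, never violates the prescribed radius, so that one genuinely stays inside $\Fcal(\{\philc\})$ and its difference class rather than in some larger linear class; a secondary subtlety is confirming that the linearly complete feature property, stated for the zero-reward operator, suffices to handle $\Tcal^\zero_h(\Fcal_{h+1} + \Rcal_{h+1})$ after observing that this class is again linear in $\philc_{h+1}$. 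The full details, including the exact constants in $B_h$ and the instantiated values of $\eactv, \eelim, \nactv, \nelim$, would be carried out in \pref{app:linear_completeness}.
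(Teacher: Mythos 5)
Your proposal follows essentially the same route as the paper's proof in \pref{app:linear_completeness}: verify \pref{assum:realizability_F} by backward induction using \pref{def:lin_completeness} and the linearity of the reward, verify \pref{assum:completeness_F} with careful radius/range bookkeeping, bound $\log\Ncal_\Fcal,\log\Ncal_\Rcal = \tilde O(H\dlc)$ by covering the $\theta$-balls at each of the $H$ levels, bound $\dqbee = \tilde O(\dlc)$ by exploiting that the zero-reward Bellman differences are linear in $\philc_h$ (the paper does this by bounding the Q-type Bellman rank and invoking \pref{prop:br_be}, which is equivalent to your linear-Eluder argument), and plug into \pref{thm:rfolive_q}; your arithmetic reproduces the $\tilde O(H^8\dlc^3\log(1/\delta)/\veps^2)$ rate. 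One justification you give is not right as stated: $\Tcal^\zero_h$ is \emph{not} linear (it contains a max over actions), so it does not ``map differences to differences by linearity.'' The correct argument, as in the paper, is that $\Fcal_{h+1}-\Fcal_{h+1}$ is itself a class of linear functions of $\philc_{h+1}$ with doubled radius, that \pref{def:lin_completeness} applies at any scale $B$, and that the resulting single linear backup $\langle\philc_h,\theta_h''\rangle$ with $\|\theta_h''\|_2\le 2(H-h-2)\sqrt{\dlc}$ is then written as the difference $\langle\philc_h,\theta_h''/2\rangle-\langle\philc_h,-\theta_h''/2\rangle$ of two functions each satisfying the norm and range constraints of $\Fcal_h$, so that the image lands in $\Fcal_h-\Fcal_h$ (it need not land in $\Fcal_h$ itself). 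This is exactly the kind of radius propagation you flag as the main obstacle, and with that fix your argument matches the paper's.
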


The reward normalization above, called \emph{explicit regularity} in~\citet{zanette2020provably}, is standard. Compared to that work, our result
implies that \emph{explorability} is not necessary,
which significantly relaxes the existing assumptions for this setting. Our result can also be easily extended to handle approximately linearly complete features (i.e., low inherent Bellman error). 
On the other hand, our algorithm is not computationally efficient owing to our general function approximation setting. Although our sample complexity bound \emph{appears} to be worse in $H$ factors compared with their upper bound of $\tilde{O}\rbr{\dlr^3H^5\log(1/\delta)/\veps^2}$, it is indeed incomparable because their bound only holds when $\veps\le \tilde O(\nu_{\min}/\sqrt {\dlc})$ ($\nu_{\min}$ is their explorability factor). Thus, there is an implicit dependence on $1/\nu_{\min}$ in their result, which could make the bound arbitrarily worse than ours. More discussions are deferred to \pref{app:comparison} and \pref{app:linear_completeness}.

\subsection{V-type \rfolive}
\label{sec:main_v}
In this section, we describe the instantiation of \rfolive with V-type definitions. 
For V-type \rfolive, we also assume that the action space is finite with size $K$. 
\paragraph{Online exploration phase} Instead of using $\Fon$, we use its $(\eelim/64)$-cover $\Zon$ and maintain a version space $\Fcal^t$ across iterations.\footnote{Following \citet{jin2021bellman}, we run V-type \olive with the discretized class $\Zon$ for the ease of presentation.} Since the on-policy version of Q-type and V-type Bellman errors are the same, the termination check in~\pref{line:estimate_BE} and~\pref{line:valid} are unchanged. 
If the algorithm does not terminate in \pref{line:valid}, we again identify a deviation step $h^t$ such that $\tEcalVzero(f^t, \pi^t, h^t) =\tilde \Ecal^\zero(f^t,\pi^t,\pi^t,h^t) > \eactv$. Instead of using $\pi^t$ to collect trajectories, we use $a_{0:h^t-1} \sim \pi^t$ and choose $a_{h^t}$ uniformly at random to collect the dataset of $\nelim$ transition tuples at step $h^t$. Compared to Q-type \rfolive, we estimate $\hEcalVzero$ for all $f\in\Fcal^t$ in \pref{line:estimate_allBE} using importance sampling (IS):
\begin{align}
    \label{eq:est_on_policy_Vberr}
   \hEcalVzero(f,\pi^t,h^t) = \frac{1}{\nelim}\sum_{i=1}^{\nelim} \frac{\mathbf{1}[a_{h^t}^{(i)} = \pi_f(x_{h^t}^{(i)})]}{1/K} \sbr{f_{h^t}\rbr{x^{(i)}_{h^t},a^{(i)}_{h^t}} - V_{f}\rbr{x^{(i)}_{h^t+1}}}.
\end{align}
Finally, in \pref{line:update_vspace}, we eliminate all $f \in \Fcal^t$ whose V-type average Bellman error estimates are large. 

\paragraph{Offline elimination phase} In the offline phase, we consider the same reward-appended function class $\Foff(R)$ when reward $R \in \Rcal$ is revealed. For V-type \rfolive, in \pref{line:elim_Pi}, we define the policy class $\Pi^t_{\mathrm{est}} = \Pon$ which consists of greedy policies with respect to all $f\in\Zon$. Using dataset $\Dcal^t$, we estimate $\Ecal^R(g,\pi^t, \pi', h^t)$ for all $g\in\Foff(R),\pi'\in\Pon,t\in[T]$ from its empirical version: 	
\begin{equation}
\hat\Ecal^R(g, \pi^t, \pi', h^t)= \frac{1}{\nelim}\sum_{i=1}^{\nelim} \frac{\one[a_{h^t}^{(i)} = \pi'(x_{h^t}^{(i)})]}{1/K} \sbr{g_h(x_h^{(i)},a_h^{(i)}) - R_h(x_h^{(i)},a_h^{(i)}) - V_g(x_{h+1}^{(i)})}
\label{eq:Vest_bellman_errs}
\end{equation}
and eliminate invalid functions in \pref{line:off_elim}. Finally, we return the optimistic policy $\hat \pi$ from the surviving set. Apart from estimating different average Bellman errors, the noticeable difference between Q-type and V-type \rfolive is that the latter uses IS to correct the uniformly drawn action to some policy $\pi'\in\Pon$ to witness the average Bellman error \citep{jiang2017contextual}.

\subsubsection{Main results for V-type \rfolive}
\label{sec:main_general_v}
Here we present the theoretical guarantee of V-type \rfolive. Firstly, we introduce the V-type Bellman Eluder (BE) dimension \citep{jin2021bellman}.
\begin{definition}[V-type BE dimension]
\label{def:dim_vbe}
Let $(I-\Tcal_h^R)V_{\Fcal} \subseteq (\Xcal \rightarrow \mathbb{R})$ be the state-wise Bellman difference class of $\Fcal$ at step $h$ defined as 
$	(I-\Tcal_h^R)V_\Fcal := \big\{ x \mapsto (f_h-\Tcal_h^R f_{h+1})(x,\pi_{f_h}(x)) : f \in \Fcal\big\}.$
Let $\Gamma=\{\Gamma_h\}_{h=0}^{H-1}$ where $\Gamma_h$ is a set of distributions over $\Xcal$. The {V-type $\veps$-BE  dimension} of  $\Fcal$ with respect to $\Gamma$ is  defined as
$	\dvbe^R(\Fcal,\Gamma,\veps) := 
	\max_{h\in[H]} \dde\big((I-\Tcal_h^R)V_\Fcal,\Gamma_h,\veps\big).$
\end{definition}

We now state the guarantee for V-type \rfolive, assuming polynomial covering number growth.
\begin{theorem}[V-type \rfolive, parametric case]
\label{thm:rfolive_v}
Fix $\delta \in (0,1)$. Given a reward class $\Rcal$, a function class $\Fcal$ that satisfies \pref{assum:realizability_F}, \pref{assum:completeness_F}, with probability at least $1-\delta$, for any $R\in\Rcal$, V-type \rfolive outputs a policy $\hat\pi$ that satisfies $v^{\hat \pi}_R \ge v^*_R - \veps$. The required number of episodes is
\begin{align*}
	\tilde O\rbr{\rbr{H^7d_\Fcal+H^5d_\Rcal}\dvbee^2K\log(1/\delta)/\varepsilon^2},
\end{align*}
where $\dvbee=\dvbe^\zero(\Fcal-\Fcal,\Dcal_{\Fcal-\Fcal},\veps/(8H))$.
\end{theorem}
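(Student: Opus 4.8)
The plan is to follow the proof of \pref{thm:rfolive_q} (deferred to \pref{app:general_q}) and adapt each step to the V-type estimators, the discretized class $\Zon$ together with its induced policy cover $\Pon$, and the importance-sampling (IS) corrections in \cref{eq:est_on_policy_Vberr} and \cref{eq:Vest_bellman_errs}; the extra factor of $K$ and the appearance of $\log\Ncal_\Fcal$ (through $\abr{\Zon}$ and $\abr{\Pon}$) are precisely the price of these changes.

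\textbf{Good event and online iteration count.} First I would fix a good event on which every empirical average Bellman error used by the algorithm lies within an additive $\eelim/8$ of its population value. For the on-policy estimates $\tEcalVzero(f^t,\pi^t,h)$ this is a Hoeffding bound over $\nactv$ samples; for the IS-reweighted estimates $\hEcalVzero(f,\pi^t,h^t)$ and $\hat\Ecal^R(g,\pi^t,\pi',h^t)$ the summands are bounded in magnitude by $O(HK)$ and have second moment $O(H^2K)$, so a Bernstein bound gives the same tolerance once $\nelim=\tilde O\big(H^2K\log(\abr{\Zon}\abr{\Pon}\Ncal_\Rcal(\veps)/\delta)/\eelim^2\big)$. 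Union-bounding over the at most $\abr{\Zon}$ iterations (each eliminates a function from the finite class $\Zon$), over $\Zon$ and $\Pon$ (each of size at most $\Ncal_{\Fon}(\eelim/64)\le\Ncal_\Fcal(\eelim/128)^2$), and over an $\veps$-cover of $\Rcal$ for the offline estimates (which must be accurate for the adversarially-revealed $R$, since the datasets are frozen before $R$ is seen) yields the two terms $H^7 d_\Fcal$ and $H^5 d_\Rcal$ (the latter lower order, as $\Rcal$ enters only the offline estimates) after $\eactv,\eelim$ are set to their appropriate values. On this event, each non-terminating iteration $t$ exhibits a step $h^t$ at which the on-policy V-type Bellman error of $f^t$ under $d^{\pi^t}_{h^t}$ exceeds $\Omega(\eactv)$ while $f^t$ has V-type Bellman error $O(\eelim)$ on every distribution collected earlier at step $h^t$; by \pref{assum:completeness_F} all these Bellman differences lie in the fixed class $(I-\Tcal^\zero_{h^t})V_{\Fcal-\Fcal}$ and all roll-in distributions lie in $\Dcal_{\Fcal-\Fcal,h^t}$, so (\pref{def:ind_dist}) this is an $\Omega(\eactv)$-independent addition to the collected sequence, and the standard pigeonhole argument relating independent directions to the distributional Eluder dimension (\pref{def:DE}) caps the number of such iterations per step by $\tilde O(\dvbee)$. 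Hence $T=\tilde O(H\dvbee)$, and the total episode count $T\cdot(\nactv+\nelim)$ matches the claimed $\tilde O\big((H^7 d_\Fcal+H^5 d_\Rcal)\dvbee^2K\log(1/\delta)/\veps^2\big)$.

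\textbf{Offline phase.} Fix any revealed $R\in\Rcal$. By \pref{assum:realizability_F}, $Q^*_R\in\Foff(R)$, and since its population Bellman error under $R$ is identically zero it survives \pref{line:off_elim} on the good event; hence $\Fsur(R)\neq\emptyset$ and $V_{\hat g}(x_0)\ge V_{Q^*_R}(x_0)=v^*_R$. Combining this with the telescoping identity
\begin{align*}
v^*_R-v^{\hat\pi}_R\ \le\ V_{\hat g}(x_0)-v^{\hat\pi}_R\ =\ \sum_{h=0}^{H-1}\EE_{x_h\sim d^{\pi_{\hat g}}_h}\sbr{\rbr{\hat g_h-\Tcal^R_h\hat g_{h+1}}\rbr{x_h,\pi_{\hat g}(x_h)}},
\end{align*}
it remains to bound each on-policy V-type Bellman error $\EcalV(\hat g,\pi_{\hat g},h)$ by $\veps/H$ (up to lower-order terms).

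\textbf{The crux.} This last step is where I expect the main difficulty. Writing $\hat g=f+R$ and using $(f_h+R_h)-\Tcal^R_h(f_{h+1}+R_{h+1})=f_h-\Tcal^\zero_h f_{h+1}$, the Bellman difference of $\hat g$ under $R$ coincides with a Bellman difference of $\Fcal$ under zero reward; survival of $\hat g$ in \pref{line:off_elim}, together with the $\Zon$-discretization and a bound on the IS bias incurred by witnessing with a covering policy $\pi'\in\Pon$ in place of the exact greedy policy $\pi_{\hat g}$, then shows that this difference integrates to $O(\eelim)$ against every online-collected distribution $d^{\pi^t}_{h^t}$. I would transfer this to the on-policy distribution $d^{\pi_{\hat g}}_h$ through a change-of-measure/width bound controlled by the V-type dimension: since the online phase was run on $\Fcal-\Fcal$ with the small activation threshold $\eactv$ and stopped only once no optimistic survivor retained an $\Omega(\eactv)$-independent direction, the collected distributions at each step are rich enough that the width of $(I-\Tcal^\zero_h)V_{\Fcal-\Fcal}$ against $d^{\pi_{\hat g}}_h$, subject to small integrals on those distributions, is $O(\dvbee\,\eelim)$; Cauchy--Schwarz over $h\in[H]$ then converts this into the final bound, and plugging in the chosen value of $\eelim$ closes the argument. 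The delicate points are (i) controlling the IS bias against $\Pon$ rather than the exact $\pi_{\hat g}$; (ii) the greedy-policy mismatch between $\pi_{f+R}$ and the policies parametrizing $\Dcal_{\Fcal-\Fcal}$; and (iii) making the ``richness of the collected distributions'' step rigorous even though the online phase never observes $R$ --- all three handled using \pref{assum:completeness_F} and the cover $\Zon$.
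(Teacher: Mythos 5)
Your setup (concentration for the IS estimators with second moment $O(H^2K)$, union bounds over $\Zon$, $\Pon$ and a cover of $\Rcal$, the $T=\tilde O(H\dvbee)$ iteration count, survival of $Q^*_R$, optimism, and the policy-loss decomposition reducing everything to bounding $\EcalV(\hat g,\pi_{\hat g},h)$) matches the paper's structure. The gap is exactly at the step you call the crux, and the route you sketch there does not go through. You claim that, because the online phase terminated, the collected distributions $\{d^{\pi^t}_{h^t}\}$ are ``rich enough'' that any element of $(I-\Tcal^\zero_h)V_{\Fcal-\Fcal}$ with small integrals on them has width $O(\dvbee\,\eelim)$ against $d^{\pi_{\hat g}}_h$. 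No such transfer follows from the definition of the DE dimension: the dimension only caps how many $\veps$-independent distributions can be \emph{added}, and the online phase stops in \pref{line:valid} because the \emph{current optimistic survivor} has small on-policy error --- not because every remaining distribution in $\Dcal_{\Fcal-\Fcal,h}$ has become $\veps$-dependent on the collected ones (your phrase ``stopped only once no optimistic survivor retained an $\Omega(\eactv)$-independent direction'' is not what the algorithm certifies). A GOLF-style pigeonhole bounds the \emph{sum} of Bellman errors along the algorithm's own iterates; it gives no per-distribution bound for an arbitrary policy queried after the fact. Worse, $\pi_{\hat g}=\pi_{f+R}$ is generally not the greedy policy of any element of $\Fcal-\Fcal$, so $d^{\pi_{\hat g}}_h$ need not even belong to the family $\Dcal_{\Fcal-\Fcal,h}$ with respect to which $\dvbee$ is defined --- your delicate point (ii), which is flagged but never resolved.

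What the paper does instead, and what your proposal is missing, is the surrogate construction: for any $g=f+R\in\Foff(R)$ with $|\EcalV(g,\pi_g,h)|\ge\veps/H$, define $\tilde f$ by $\tilde f_h=f_h-\Tcal^\zero_h g_{h+1}$ (or its negation), zero-reward Bellman backups at steps $<h$, and $\zero$ at steps $>h$. \pref{assum:completeness_F} puts $\tilde f$ in $\Fon=\Fcal-\Fcal$; by construction $V_{\tilde f}(x_0)\ge\veps/H$ and its Bellman error vanishes at every step other than $h$. Since every function surviving the online phase predicts value at most $O(\veps/H)$ at $x_0$, the cover element $\tilde f^{\mathrm c}\in\Zon$ must be eliminated online, and it can only be eliminated at step $h$; the identity $\Ecal^R(g,\pi,\pi_{\tilde f},h)=\EcalVzero(\tilde f,\pi,h)$ (plus discretization control) then shows that the very constraint $(\pi^t,h^t=h,\Dcal^t)$ that killed $\tilde f^{\mathrm c}$, evaluated with the witnessing policy $\pi_{\tilde f^{\mathrm c}}\in\Pon$, yields $|\hat\Ecal^R(g,\pi^t,\pi_{\tilde f^{\mathrm c}},h^t)|>\eelim/2$, so $g$ is eliminated in \pref{line:off_elim}. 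This avoids any change of measure to $d^{\pi_{\hat g}}_h$ altogether. Without this idea (or a valid substitute), the offline-correctness half of the theorem is not established, so the proposal as written has a genuine gap.
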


The detailed proof and the specific values of $\eactv,\eelim,\nactv,\nelim$ are deferred to \pref{app:general_v}. Our rate is again loose in $H$ factors when compared with the reward-aware version.
Compared with the Q-type version, here we also incur a dependence on $K = \abr{\Acal}$, analogous to the reward-aware case. 

\subsubsection{V-type \rfolive for unknown representation low-rank MDPs}
\label{sec:main_low_rank}
As a special case, we instantiate our V-type \rfolive result to low-rank MDPs \citep{modi2021model}:

\begin{definition}[Low-rank factorization]
\label{def:lowrank}
A transition operator $P_h:\Xcal\times\Acal\rightarrow \Delta(\Xcal)$ admits a low-rank decomposition of dimension $\dlr$ if there exists $\philr_h: \Xcal \times \Acal \rightarrow \RR^{\dlr}$ and $\mulr_h: \Xcal \rightarrow \RR^{\dlr}$ s.t. $\forall x, x' \in \Xcal , a \in \Acal: \, P_h(x'\mid x,a) = \inner{ \philr_h(x,a)}{\mulr_h(x')}$,
and additionally $\|\philr_h(\cdot)\|_2 \le 1$ and $\forall f': \Xcal \rightarrow [-1,1]$, we have $\left\|\int f'(x) \mulr_h(x) dx\right\|_2 \le \sqrt{\dlr}$.
We say $M$ is low-rank with embedding dimension $\dlr$, if for each $h \in [H]$, the transition operator $P_h$ admits a rank-$\dlr$ decomposition.
\end{definition}

Here superscript and subscript $\mathrm{lr}$ imply that the notations are related to low-rank MDPs. As in \citet{modi2021model}, we consider low-rank MDPs in a representation learning setting, where we are given realizable feature class $\Philr$ rather than the feature $\philr=(\philr_0,\ldots,\philr_{H-1})$ directly:
\begin{assum}[Realizability of low-rank feature class]
\label{assum:realizability_low_rank}
We assume that a finite feature class $\Philr =  \Philr_0\times\ldots\times\Philr_{H-1}$ satisfies $\philr_h\in\Philr_h$, $\forall h \in [H]$. In addition, $\forall h\in[H], \phi_h\in\Philr_h$, $\|\phi_h(\cdot)\|_2\le 1$.
\end{assum}

Similar to the linear completeness setting (\pref{sec:main_linear_completeness}), we construct 
$\Fcal(\Philr)$ $=\Fcal_0(\Philr,H-1)\times$ $\ldots$ $\times \Fcal_{H-1}(\Philr,0)$, where 
$\Fcal_h(\Philr,B_h)\hspace{-.2em}=\hspace{-.2em}\{f_h(x_h,a_h) = \inner{\phi_h(x_h,a_h)}{\theta_h} :\phi_h\in\Philr_h,  \hspace{-.2em} \|\theta_h\|_2 \le B_h\sqrt{\dlr},$ $\langle\phi_h(\cdot),\theta_h\rangle\in [-B_h,B_h] \}$. In \pref{prop:lr_vbe}, we show that the V-type Bellman Eluder dimension of $\Fcal(\Philr)-\Fcal(\Philr)$ in this case is $\tilde O \rbr{\dlr}$ which leads to the following corollary:

\begin{corollary}[\textit{Informal, parametric case}]
\label{corr:low_rank}
Fix $\delta \in (0,1)$. Consider a low-rank MDP $M$ of embedding dimension $\dlr$ with a realizable feature class $\Philr$ (\pref{assum:realizability_low_rank}) and a reward class $\Rcal$.
With probability at least $1-\delta$, for any $R \in \Rcal$, V-type \rfolive (\pref{alg:rf_olive}) with $\Fcal(\Philr)$ outputs a policy $\hat\pi$ that satisfies $v^{\hat \pi}_R \ge v^*_R - \veps$ . The required number of episodes is
\begin{align*}
\tilde O\rbr{\rbr{H^8\dlr^3\log(|\Philr|)+H^5\dlr^2 d_\Rcal} K\log(1/\delta)/\varepsilon^2}.
\end{align*}
\end{corollary}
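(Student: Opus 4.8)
The plan is to instantiate Theorem~\ref{thm:rfolive_v} with the specific function class $\Fcal = \Fcal(\Philr)$ and then carry out two verification tasks: (i) check that $\Fcal(\Philr)$ satisfies the structural assumptions (realizability and completeness), and (ii) bound the two complexity quantities that appear in the rate, namely the covering number $d_\Fcal$ (or rather $\log\Ncal_\Fcal(\veps)$) and the V-type Bellman Eluder dimension $\dvbee = \dvbe^\zero(\Fcal(\Philr)-\Fcal(\Philr),\Dcal_{\Fcal(\Philr)-\Fcal(\Philr)},\veps/(8H))$, the latter using the claimed Proposition~\ref{prop:lr_vbe}. Plugging these estimates into the general bound of Theorem~\ref{thm:rfolive_v} and simplifying yields the stated sample complexity.

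First I would verify the expressivity assumptions. Realizability (Assumption~\ref{assum:realizability_F}): for a low-rank MDP, for any reward $R$, the optimal $Q^*_{R,h}$ is linear in $\philr_h$ because $Q^*_{R,h}(x,a) = R_h(x,a) + \int V^*_{R,h+1}(x') P_h(x'\mid x,a)\,dx' = R_h(x,a) + \langle\philr_h(x,a), \int V^*_{R,h+1}(x')\mulr_h(x')\,dx'\rangle$; the norm bound on the integral term follows from the low-rank normalization in Definition~\ref{def:lowrank} together with $\|V^*_{R,h+1}\|_\infty \le H-h-1$, so with the bound-parameter choice $B_h = H-h-1$ we get $Q^*_{R,h} - R_h \in \Fcal_h(\Philr, H-h-1)$ (using $\philr_h \in \Philr_h$ by Assumption~\ref{assum:realizability_low_rank}). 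Completeness (Assumption~\ref{assum:completeness_F}): $\Tcal_h^\zero f_{h+1}(x,a) = \langle\philr_h(x,a), \int V_{f_{h+1}}(x')\mulr_h(x')\,dx'\rangle$ is again linear in $\philr_h$, and since $V_{f_{h+1}}$ ranges in $[-(H-h-2),H-h-2]$ for $f_{h+1}\in\Fcal_{h+1}(\Philr,H-h-2)$, the backup lands in $\Fcal_h(\Philr,H-h-1)$; the reward-appended and difference versions follow the same way using that $R_{h+1}$ is bounded in $[0,1]$ and that $V_{f_{h+1}-f'_{h+1}}$ may be bounded in terms of the range of $\Fcal_{h+1}$. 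The one subtlety is checking the range constraint $\langle\phi_h(\cdot),\theta_h\rangle\in[-B_h,B_h]$ is preserved — this holds because the backup of a function bounded in a range produces a function bounded in the same range (it is a conditional expectation of a max), so it is automatically within the box.

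Next, the covering number: $\Fcal_h(\Philr,B_h)$ is a union over $|\Philr_h|$ feature choices of a Euclidean ball of radius $B_h\sqrt{\dlr}$ in $\RR^{\dlr}$, so a standard volumetric argument gives $\log\Ncal_{\Fcal_h}(\veps) = \tilde O(\dlr\log|\Philr_h| + \dlr\log(H/\veps))$, hence $\log\Ncal_\Fcal(\veps) = \tilde O(H\dlr\log|\Philr| )$ up to log factors, i.e.\ effectively $d_\Fcal \approx \dlr\log|\Philr|$ after absorbing the $H$ into the $\tilde O$ together with the $H^7$ prefactor — care is needed here to track exactly where the extra $H$ versus $\log|\Philr|$ lands so that the final bound reads $H^8\dlr^3\log|\Philr|$ rather than $H^7$; I would be explicit that the $H$ in the covering number and the $H^7$ in Theorem~\ref{thm:rfolive_v} combine. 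For the Bellman Eluder dimension I would invoke Proposition~\ref{prop:lr_vbe}, which asserts $\dvbee = \tilde O(\dlr)$; morally this is because the state-wise Bellman differences $(I-\Tcal_h^\zero)V_{\Fcal(\Philr)-\Fcal(\Philr)}$ evaluated at distributions in $\Dcal_{\Fcal-\Fcal}$ are, up to the feature-class union, linear functionals of the $\dlr$-dimensional features, and the DE dimension of a $d$-dimensional linear class is $\tilde O(d)$ — the union over the finite feature class $\Philr$ only costs logarithmically (or is handled by a pigeonhole over which feature realizes the longest independent sequence), which is why the factor is $\tilde O(\dlr)$ and not $\tilde O(\dlr\log|\Philr|)$. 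Substituting $d_\Fcal = \tilde O(\dlr\log|\Philr|)$ (with the $H$ tracked), $d_\Rcal$ left abstract, $\dvbee = \tilde O(\dlr)$, so $\dvbee^2 = \tilde O(\dlr^2)$, into $\tilde O((H^7 d_\Fcal + H^5 d_\Rcal)\dvbee^2 K\log(1/\delta)/\veps^2)$ gives $\tilde O((H^8\dlr^3\log|\Philr| + H^5\dlr^2 d_\Rcal)K\log(1/\delta)/\veps^2)$, as claimed.

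The main obstacle I anticipate is the Bellman Eluder dimension bound (Proposition~\ref{prop:lr_vbe}), specifically making sure the unknown-representation aspect — the union over $\Philr$ — does not inflate the dimension multiplicatively. The state-wise Bellman difference class for the \emph{difference} function class $\Fcal(\Philr)-\Fcal(\Philr)$ involves functions of the form $x\mapsto (f_h - \Tcal_h^\zero f_{h+1})(x,\pi_{f_h}(x))$ where $f_h$ and $f_{h+1}$ may use \emph{different} features; one must argue that, because $\Tcal_h^\zero$ of anything is linear in the true $\philr_h$ and $f_h$ itself is linear in some $\phi_h\in\Philr_h$, each such difference lives in a $2\dlr$-dimensional linear space determined by the pair (chosen feature, true feature), and then the longest $\veps$-independent sequence can be charged to one such space by pigeonhole (there are at most $|\Philr_h|$ of them, contributing only a $\log$ factor to the total sample complexity, which is already accounted for by the $\log|\Philr|$ in $d_\Fcal$). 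Getting this charging argument clean — and confirming it matches exactly what Proposition~\ref{prop:lr_vbe} states — is the crux; everything else is routine substitution and bookkeeping of $H$ and log factors.
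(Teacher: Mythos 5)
Your top-level route is the same as the paper's: verify \pref{assum:realizability_F} and \pref{assum:completeness_F} for $\Fcal(\Philr)$ via the low-rank linearity property (\pref{lem:linear_mdp}), bound $\log\Ncal_{\Fcal(\Philr)}$ by $\tilde O(\log|\Philr|+H\dlr)$ (\pref{lem:covering}), take $\dvbee=\tilde O(\dlr)$ from \pref{prop:lr_vbe}, and substitute into \pref{thm:rfolive_v}; your bookkeeping of where the extra $H$ enters (the cover of the $\theta$'s across levels) is also consistent with how the paper arrives at $H^8\dlr^3\log|\Philr|$.

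The genuine gap is in your sketch of why $\dvbee=\tilde O(\dlr)$ without any multiplicative dependence on $|\Philr|$ --- the step you yourself identify as the crux. First, the state-wise class $(I-\Tcal_h^\zero)V_{\Fcal(\Philr)-\Fcal(\Philr)}$ does \emph{not} live in a union of $2\dlr$-dimensional linear spaces: the composition $x\mapsto (f_h-\Tcal_h^\zero f_{h+1})(x,\pi_{f_h}(x))$ with the greedy policy destroys linearity in any feature, so the ``pair (chosen feature, true feature)'' decomposition of the function class is not available. Second, even if it were, the pigeonhole/charging argument you propose does not give a logarithmic cost: splitting a long $\veps$-independent sequence according to which subclass supplies the witnessing function yields at best $\dde(\cup_j\Fcal_j')\le |\Philr_h|\max_j\dde(\Fcal_j')$, a \emph{multiplicative} $|\Philr|$ factor in $\dvbee$ and hence $|\Philr|^2$ in the sample complexity, which is exactly what one must avoid. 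The paper's argument in \pref{prop:lr_vbe} sidesteps the candidate class entirely by working on the distribution side: by \pref{lem:linear_mdp}, for any roll-in $\pi$ the V-type error factorizes through the \emph{true} feature at the previous step,
\begin{align*}
\EcalVzero(f,\pi,h)=\EE\sbr{\inner{\philr_{h-1}(x_{h-1},a_{h-1})}{\tilde\theta(f)}\,\Big|\,a_{0:h-1}\sim\pi}=\inner{\nu(\pi)}{\tilde\theta(f)},\qquad \nu(\pi)\in\RR^{\dlr},
\end{align*}
because the Bellman residual of $f$ composed with $\pi_f$ is just some bounded function of $x_h$, and every roll-in distribution over $x_h$ in a low-rank MDP is linear in $\EE_\pi[\philr_{h-1}]$. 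Thus the V-type Bellman rank is $\dlr$ with normalization $O(H\sqrt{\dlr})$ regardless of $\Philr$, and \pref{prop:br_be} converts this to the BE dimension bound; $|\Philr|$ only ever enters through the covering number. If you replace your charging argument with this factorization, the rest of your proposal goes through as written.
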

We defer the full statement and detailed proof of the corollary to \pref{app:low_rank}. In the low-rank MDP setting, \citet{modi2021model} propose a more computationally viable algorithm, but additionally require a reachability assumption. 
Our result shows that reachability is not necessary for statistically efficiency, which opens an interesting avenue for designing an algorithm that is both computationally and statistically efficient without reachability. 
Moreover, our result significantly improves upon their sample complexity bound. The detailed comparisons are deferred to \pref{app:comparison} and \pref{app:low_rank}. Notice that here $K$ shows up in our bound. As another corollary, in the linear MDP (\pref{def:lowrank} plus $\philr$ is known), Q-type \rfolive yields a $K$ independent bound. The details can be found in \pref{app:linear_mdp}.

\subsection{Intuition and proof sketch for \rfolive}
We first provide the intuition. Since the online phase of \rfolive is equivalent to running \olive with $\zero$ reward function, 
any policy $\pi_f$ attains zero value (i.e., $V^{\pi_f}_{\zero,0}(x_0)= 0,\forall f\in\Fon$). By the policy loss decomposition lemma \citep{jiang2017contextual}, the value error for the greedy policy, $V_f(x_0) - V^{\pi_f}_{\zero,0}(x_0)$, is small when the algorithm stops (\pref{line:valid}). Therefore, all $f\in\Fon$ which predict large values $V_f(x_0)$ must have been eliminated before \olive terminates. This implies that, in the online phase, we gather a diverse set of constraints (roll-in distributions $\pi^t$) that can witness the average Bellman error of functions in $\Fon$.
In this sense, our algorithm focuses on function space elimination and does not try to reach all latent states or directions \citep{modi2021model,zanette2020provably}, which is the key conceptual difference that enables us to avoid reachability and explorability assumptions.

On the technical side, note that the way we use \olive in the offline setting is novel to our knowledge and is crucial to getting a good sample complexity under our assumptions, as opposed to more standard FQI style approaches. Because we have to coordinate between the online and offline phases, the analysis bears significant novelty beyond the original analysis of \olive (and its reward-aware follow-up works), and this is one of our key contributions. The most crucial part is to show that any bad $g\in \Foff(R)$ whose average Bellman error is large under the true reward $R$ will be eliminated in the offline phase. To prove this, we construct $\tilde f\in\Fon$ that has the same average Bellman error as $g$ 
and predicts a large positive value $V_{\tilde f}(x_0)$, which implies that it will be eliminated during the online phase. 
Finally, by our construction, the constraint used to eliminate $\tilde{f}$ directly witnesses the average Bellman error of $g$, thus ruling out $g$ in the offline phase. We discuss it in more detail in \pref{app:novelty}.

\section{Hardness result for unknown representation linear completeness setting}
\label{sec:main_hard}
In \pref{sec:main_linear_completeness}, we showed that Q-type \rfolive requires polynomial sample for reward-free RL in the known feature linear completeness setting. For low-rank MDPs, when given a realizable feature class, we showed V-type \rfolive is statistically efficient in \pref{sec:main_low_rank}. A natural next step is to relax the low-rank assumption on the MDP and show a sample efficiency result for the more general linear completeness and unknown feature case. However, below we state a hardness result which shows that a polynomial dependence on the feature class ($\abr{\Philc}$) 
or an exponential dependence on $H$ is \emph{unavoidable}. We first introduce the realizability of a linearly complete feature class. 

\begin{assum}[Realizability of the linearly complete feature class]
\label{assum:realizability_linear_completeness}
We assume that there exists a finite candidate feature class $\Philc=\Philc_0\times \dots\times \Philc_{H-1}$, such that $\forall h\in[H]$, we have $\philc_h\in\Philc_h$. In addition, $\forall h\in[H], \phi_h\in\Philc_h,\forall(x,a)\in\Xcal\times\Acal$, we have $\|\phi_h(\cdot)\|_2\le 1$.
\end{assum}

Now we state of hardness result for learning in the linear completeness setting with a realizable feature class (\pref{assum:realizability_linear_completeness}). A complete proof and more discussions are provided in \pref{app:rf_lower_bound}.
\begin{theorem}
\label{thm:rf_lower_bound}
There exists a family of MDPs $\Mcal$, a reward class $\Rcal$ and a feature set $\Philc$, such that  $\forall M\in\Mcal$, the $(M, \Philc)$ pair satisfies \pref{assum:realizability_linear_completeness}, yet it is information-theoretically impossible for an algorithm to obtain a $\poly\rbr{\dlc, H, \log (|\Philc|), \log (|\Rcal|), 1/\veps, \log (1/\delta)}$ sample complexity for reward-free exploration with the given reward class $\Rcal$.
\end{theorem}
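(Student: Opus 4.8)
The plan is to prove \Cref{thm:rf_lower_bound} by an information-theoretic reduction. I would exhibit an exponentially large family $\Mcal=\{M_m\}$ of MDPs, indexed by a hidden parameter $m$ that ranges over a set of size $2^{\Theta(H)}$, where each $M_m$ is linearly complete (\Cref{def:lin_completeness}) with respect to its own feature $\phi^{(m)}$ of small latent dimension $\dlc=O(1)$ and has a constant-size action set, together with a reward class $\Rcal$, such that no learner can output an $\veps$-optimal policy for every $M_m$ with a sub-exponential sample budget. It suffices to establish this against the \emph{strongest} learner model credited in Row~6 of \Cref{tab:summary} --- a reward-aware learner equipped with a generative model --- because that model simulates the standard reward-free trajectory protocol with only an $O(H)$-factor overhead, so a $2^{\Omega(H)}$ lower bound there transfers to the reward-free setting of the theorem. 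Since $\log|\Philc|=\Theta(H)$ by construction (and $\Rcal$ can be taken essentially singleton), such a bound rules out any $\poly(\dlc,H,\log|\Philc|,\log|\Rcal|,1/\veps,\log(1/\delta))$ sample complexity.

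Concretely I would proceed as follows. (i) Build the dynamics as a combination-lock-style layered MDP over $H$ steps, in which the hidden parameter $m$ encodes a unique action path leading to a rewarding region while off-path transitions lead to uninformative states, and --- crucially --- the states never reveal how close the agent is to the rewarding region, so that locating the path requires brute-force search over $2^{\Theta(H)}$ possibilities even with a generative model. (ii) For each $m$, define $\phi^{(m)}$ (of dimension $\dlc$) and \emph{verify linear completeness exactly}: show that $\Tcal_h^\zero$ maps the bounded linear class associated with $\phi^{(m)}$ into itself with zero inherent Bellman error for every $h$, and set $\Philc=\{\phi^{(m)}\}$, checking the norm bounds and realizability required by \Cref{assum:realizability_linear_completeness}. (iii) Verify that the favourable side conditions credited in Row~6 --- explorability / reachability (in the senses used by the prior works we compare against), $|\Acal|=O(1)$, and generative access --- all hold, so that the difficulty is attributable \emph{solely} to $\phi^{(m)}$ (equivalently, the hidden path $m$) being unknown; here the point is that the features are low-dimensional and coarse, hence they do not expose $m$ while the directions they do span remain well explored. (iv) Design $\Rcal$ so that the optimal value $v^*_R$ and every $\veps$-optimal policy of $M_m$ depend on $m$. (v) Finish with a minimax / Le~Cam--Fano--Yao argument: under the uniform prior on $m$, a learner issuing $T$ queries hits the hidden path with probability at most $T\cdot 2^{-\Theta(H)}$, so its posterior over $m$ stays within $O(T\cdot 2^{-\Theta(H)})$ of uniform in total variation; hence unless $T=2^{\Theta(H)}$ it is $\veps$-suboptimal with probability exceeding $\delta$ on a uniformly random --- and therefore on some --- member of $\Mcal$.

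The crux, and where most of the work lies, is reconciling \emph{linear completeness} with \emph{hardness}: linear completeness is exactly the structure that makes the known-feature case tractable (\Cref{corr:linear_complete}), and low-rank structure is what makes the unknown-feature case tractable (\Cref{corr:low_rank}), so the family must be genuinely linearly complete with respect to each $\phi^{(m)}$ --- yet \emph{not} low-rank, and with dynamics so unrevealing that $m$ cannot be extracted from transitions. Carrying out the exact ``zero inherent Bellman error'' verification for this construction, which tightly constrains how the hidden path and the features may be placed, is the most delicate calculation. A second subtlety is that explorability and reachability usually \emph{rescue} the learner in this regime, so the construction must confine the difficulty entirely to the combinatorial task of locating one path among $2^{\Theta(H)}$ candidates and must ensure that a generative model --- an oracle strong enough to read off the whole MDP when the state space is polynomial --- cannot shortcut it, which forces the state (equivalently feature) space to be exponentially large and the diagnostic query itself to be hard to guess.
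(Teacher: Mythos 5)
Your plan is essentially the paper's proof: an exponentially large combination-lock (complete binary tree) family with a hidden rewarding path, a per-instance low-dimensional linearly complete feature, an (essentially) singleton reward class, and a needle-in-a-haystack argument giving an $\Omega(2^H)$ lower bound that survives reachability, explorability, reward-aware learning, and generative access. The only piece you defer --- the concrete feature and the exact completeness check --- is handled in the paper by a one-dimensional ($\dlc=1$) one-hot indicator of the optimal path, for which zero-feature pairs transition only to zero-feature states and the on-path backup $\max(\theta_{h+1},0)$ stays linear, so your outline matches the paper's argument in all essentials.
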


The hardness result in \pref{thm:rf_lower_bound} is also applicable to easier settings: 
(i) learning with a generative model (or using a local access protocol, \citet{hao2022confident}), (ii) reward-free learning with explorability \citep{zanette2020provably} and reachability \citep{modi2021model} assumptions and (iii) reward-aware learning as $\Rcal$ is a known singleton class. Thus, the result highlights an exponential separation between the low-rank MDP and linear completeness assumptions by showing that linearly complete true feature $\philc \in \Philc$ is not sufficient for polynomial sample efficiency and additional assumptions are required to account for the unknown representation.

\section{Conclusion and discussion}
In this paper, we investigated the statistical efficiency of reward-free RL under general function approximation. The proposed algorithm, \rfolive, is the first algorithm to address reward-free exploration under general function approximation. Contrary to prior works which either try to reach all states or all directions in the feature space, \rfolive follows a value function elimination template and ensures that the collected exploration data can be used to identify and eliminate non-optimal value functions for downstream planning. This significantly sets us apart from the existing reward-free exploration works. Our positive results significantly relax the existing assumptions in the reward-free exploration framework. Our negative result shows the first sharp separation between low-rank MDP and the linear completeness settings with unknown representations. 
In addition, we provide an algorithm-specific counterexample in \pref{app:alg_counterexp} that shows \rfolive can fail when the completeness assumption is violated. 
As realizability alone is sufficient for reward-aware RL \citep{jiang2017contextual,jin2021bellman,du2021bilinear}, our results also elicit the further question: 
\begin{center}
\textbf{Are realizability-type assumptions sufficient for statistically efficient reward-free RL?}
\end{center}
We conjecture that the answer is no, and we believe that the hardness between reward-aware and reward-free RL has a deep connection to the sharp separation between realizability and completeness \citep{chen2019information,wang2020statistical,wang2021exponential,xie2021batch,weisz2021query,weisz2021exponential,weisz2022tensorplan,foster2021offline}.

\section*{Acknowledgements}
JC would like to thank Tengyang Xie for helpful discussions. Part of this work was done while AM was at University of Michigan and was supported in part by a grant from the Open Philanthropy Project to the Center for Human-Compatible AI, and in part by NSF grant CAREER IIS-1452099. NJ acknowledges funding support from ARL Cooperative Agreement W911NF-17-2-0196, NSF IIS-2112471, NSF CAREER IIS-2141781, and Adobe Data Science Research Award.

\bibliography{refs}

\clearpage

\appendix

\section{Comparisons of sample complexity rates}
\label{app:comparison}

In this section, we provide comparisons of sample complexity rates. Some more specific and detailed discussions can be found in \pref{app:linear_completeness}, \pref{app:linear_mdp}, and \pref{app:low_rank} respectively. In \Cref{tab:comparison}, we transfer all bounds in related works into our notations and compare them with ours.
\begin{table}[htb]
\renewcommand{\arraystretch}{2}
\centering
\begin{tabular}{|c|c|c} \hline
Setting  & Sample complexity \\ \hline
Linear MDP   \citep{wagenmaker2022reward} 
& $\frac{\dlr^2 H^5\log(1/\delta)}{\veps^2}$ \\ \hline 
\cellcolor[gray]{0.85}
Linear MDP  (\textbf{\pref{corr:linear_mdp}}) 
& \cellcolor[gray]{0.85} $\frac{\dlr^3 H^8\log(1/\delta)}{\veps^2}$ \\ \hline
Linear completeness + {\color{red} explorability}  \citep{zanette2020provably} 
& $\frac{\dlc^3 H^5\log (1/\delta)}{\veps^2}$\\ \hline
\cellcolor[gray]{0.85}
Linear completeness (\textbf{\pref{corr:linear_complete}}) 
& \cellcolor[gray]{0.85} $\frac{\dlc^3 H^8\log(1/\delta)}{\veps^2}$\\ \hline
Low-rank MDP 
+ small $|\mathcal{A}|$ + {\color{red} reachability} \citep{modi2021model} 
& $\frac{\dlr^{11}H^7 K^{14} \log(|\Philr||\Rcal|/\delta)}{\min\{\veps^2 \eta_{\min}, \eta_{\min}^5\}}$\\ \hline
\cellcolor[gray]{0.85}
Low-rank MDP 
+ small $|\mathcal{A}|$ (\textbf{\pref{corr:low_rank}})
& \cellcolor[gray]{0.85} $\frac{\dlr^3 H^8 K\log(|\Philr|\Rcal|/\delta)}{\varepsilon^2}$\\ \hline
\cellcolor[gray]{0.85}
Completeness + Q-type BE dimension  
(\textbf{\pref{thm:rfolive_q}}) 
& \cellcolor[gray]{0.85} $\frac{\dqbee^2(H^7 d_{\Fcal} + H^5 d_{\Rcal})  \log(1/\delta)}{\veps^2}$ \\\hline
\cellcolor[gray]{0.85}
Completeness + V-type BE dimension + small  $|\mathcal{A}|$  
(\textbf{\pref{thm:rfolive_v}}) 
& \cellcolor[gray]{0.85} $\frac{\dvbee^2 K(H^7d_\Fcal + H^5 d_\Rcal)  \log(1/\delta) }{\veps^2}$ \\ \hline
\end{tabular}
\vspace{1em}
\caption{Comparisons between our results and most closely related works in reward-free exploration. {\color{red} Red assumptions} are what prior works need that are avoided by us. For simplicity, we only show the orders and hide polylog terms (i.e., using $\tilde O(\cdot)$ notation). $\etamin$ is the reachability factor in \citet{modi2021model}. 
\label{tab:comparison}}
\end{table}

In linear MDPs, our bound (\pref{corr:linear_mdp}) is $\dlr H^3$ worse compared with the most recent work \citep{wagenmaker2022reward}, but our result is also independent of $K$. It should be noted that both these bounds are sub-optimal in $H$ dependence when compared to the lower bound of $\Omega\rbr{d^2H^2/\veps^2}$ shown in \citet{wagenmaker2022reward}. In the reward-aware setting, \golf has a sharper rate than the subroutine \olive under the completeness type assumption \citep{jin2021bellman}. Since in \rfolive we only collect data when running a single (zero) reward \olive during the online phase and completeness (\pref{assum:completeness_F}) is satisfied in our paper, we believe that there also exists a reward-free version of \golf (by running \golf with zero reward function in the online phase and performing function elimination in the offline phase) that can potentially improve an $H\dlr$ factor. 

As for the linear completeness setting, our rate (\pref{corr:linear_complete}) \emph{appears} to be $H^3$ worse than \citet{zanette2020provably}. However, we want to  remark that they need to assume $\veps$ to be ``asymptotically small'' (more specifically, $\veps\le \tilde O(\nu_{\min}/\sqrt {\dlc})$, where $\nu_{\min}$ is their explorability factor). Thus there is an implicit dependence on $1/\nu_{\min}$ in their sample complexity bound. Since such a factor can be arbitrarily large while $H$ is always bounded in a finite horizon problem, our bound could be much better than theirs. Again, there could be an $H\dlc$ tighter bound for the reward-free version of \golf, which implies that the optimal $\dlc$ dependence in the linear completeness setting could also be improved.

In low-rank MDPs, it is easy to see that our result (\pref{corr:low_rank}) significantly improves upon the rate of \citet{modi2021model} in $\dlr$ and $K$ factors, while slightly worse in the $H$ factor. In addition, they require the reachability assumption ($\etamin$ is their reachability factor), which means that their bound can be arbitrary worse than ours. Similar dependence on reachability factor $1/\etamin$ also exists in the sample complexity bounds of the more restricted block MDPs \citep{du2019provably,misra2020kinematic} as they assume the reachability assumption. 

Finally, regarding lower bounds, we do not necessarily need a direct one in our general function approximation setting (or even the more restricted linear completeness setting/low-rank MDPs) to compare with. The lower bound for reward-free exploration in linear MDPs \citep{wagenmaker2022reward} is applicable to each of these and shows the necessary dependence on the respective complexity measures. Coming up with a method which incorporates general function approximation while incurring better sample complexity rates on these special instances is a challenging and interesting avenue for future work.

\section{Discussions on Q-type and V-type}
\label{app:q_vs_v}

In this paper we study both Q-type and V-type, and they are not specific to the reward-free exploration. Different versions (Q-type and V-type) already exist in the reward-aware general function approximation RL (e.g., \citet{jiang2017contextual,jin2021bellman,du2021bilinear}). They capture different scenarios of interest and so far, it seems difficult to unify them even in the reward-aware setting. Therefore, to give a comprehensive treatment of general function approximation, we consider both together. The algorithms and analyses for the two types are not very different, with only moderate differences.

Since we consider the BE dimension and it subsumes Bellman rank \citep{jin2021bellman}, we first provide a detailed comparison between Q-type and V-type Bellman rank. As discussed in \citet{agarwal2022non}, V-type permits representation learning and other non-linear scenarios that are not easily captured in Q-type. For instance, any contextual bandit problem is admissible under the V-type assumption (the V-type Bellman rank is 1), while Q-type does not capture all finite action, non-linear contextual bandit problems with a realizable reward. We refer the reader to the detailed lower bound on the Q-type Bellman rank in the contextual bandit setting in Appendix B of \citet{agarwal2022non}. In contrast, Q-type has a more linear like structure, but it also includes problems whose V-type Bellman rank is large (e.g., linear completeness setting in \citet{zanette2020learning}). Further, V-type \rfolive (or V-type \olive) requires one uniform action in exploration and therefore has an additional $K$ factor (the cardinality of action space) in the sample complexity bound.

Then we discuss the BE dimension. It can be shown that the Q-type BE dimension could also be exponentially larger than the V-type BE dimension. In \citet{agarwal2022non}, the authors show that the Q-type Bellman rank for a contextual bandit instance can be made arbitrarily higher whereas the V-type Bellman rank is always $1$ for a CB setting. Here, we show that the same instance can be shown to have high Bellman Eluder dimension as well. The construction considers a context distribution which is uniform on $1,\ldots,N$, where we have $N$ unique contexts. We have two actions $\{a_1, a_2\}$. We also have $|\Fcal| = N+1$ with the following structure:
\begin{align*}
    f^*(x,a_1) = {} & f_{N+1}(x,a_1) = 0\\
    f^*(x,a_2) = {} & f_{N+1}(x,a_2) = 0.5.
\end{align*}
For $i<N+1$, we have $f_i(x,a)=f^*(x,a)$ when $x\ne i$, and $f_i(x,a_1)=1$, $f_i(x,a_2)=0.5$ implying that the function $f_i$ makes incorrect prediction on context $i$ for action $a_1$. Now, the bound on V-type BE dimension can be obtained by using the Bellman rank to BE dimension conversion result in Proposition 21 from \citet{jin2021bellman} or \pref{prop:br_be} in our paper. Since, the V-type Bellman rank is $1$, the BE dimension is bounded as $\dvbe^R(\Fcal, \Dcal_{\Fcal}, 1/(2N)) \le \tilde O\rbr{\log (N)}$. We now show that the Q-type BE dimension is $\Omega(N)$. Consider the sequence of policies $\pi_1, \ldots, \pi_{N}$. For any $i\in\{1,\ldots,N\}$ and sequence $\pi_1,\ldots,\pi_{i-1}$, (Q-type) Bellman residuals incurred by the function $f_i$ is: $\sqrt{\sum_{k=1}^{i-1}\rbr{\frac{1}{N}\sum_{j=1}^{N} \EE_{a\sim \pi_k}\sbr{f_i(j,a)-f^*(j,a)}}^2} = 0$. The same residual on the distribution induced by $\pi_i$ can be written as $\abr{\sum_{j=1}^{N} \frac{1}{N}\EE_{a\sim \pi_i}\sbr{f_i(j,a)-f^*(j,a)}} = 1/N$. Hence, $\pi_i$ is $1/(2N)$-independent of $\{\pi_1,\ldots,\pi_{i-1}\}$ (recall \pref{def:ind_dist}). Thus, for $\veps = 1/(2N)$, the sequence $\pi_1,\ldots,\pi_{N}$ can be used to show that the DE dimension (\pref{def:DE}) and the Q-type BE dimension for this instance is $\tilde O(N)$. Hence, the Q-type BE dimension is exponentially larger than the V-type BE dimension for this instance.

\section{Q-type \rfolive results}
\label{app:rfolive_q}

In this section, we present the results related to Q-type \rfolive. In \pref{app:olive_q}, we introduce the theoretical guarantee of Q-type \olive \citep{jiang2017contextual,jin2021bellman} for completeness. In \pref{app:general_q}, we show the detailed proof of the sample complexity bound of Q-type \rfolive (\pref{thm:rfolive_q}). In \pref{app:linear_completeness}, we discuss the instantiation of Q-type \rfolive to the known representation linear completeness setting. In \pref{app:linear_mdp}, we provide another instantiation of Q-type \rfolive to the linear MDP with known feature.

\subsection{Q-type \olive}
\label{app:olive_q}
We first introduce the following assumption that will be useful for the \olive results (\pref{prop:olive_q} and \pref{prop:olive_v}). Notice that this realizability assumption is for the single reward-aware \olive, where the function class captures reward-appended optimal value functions. Thus it is different from our reward-free realizability assumption (\pref{assum:realizability_F}).
\begin{assum}[$\varepsilon$-approximate realizability of the single-reward function class] 
\label{assum:realizability_olive} 
For the reward function $R$, optimal Q-function $Q^*_R$, and the value function class $\Fcal$, there exists $Q_R^\mathrm{c}\in\Fcal$ so that $\max_{h\in[H]}\|Q_{R,h}^*-Q_{R,h}^\mathrm{c}\|_\infty\le\varepsilon$.
\end{assum}

Then we state the sample complexity result of Q-type \olive (Algorithm 2 in \citet{jin2021bellman}). In this paper, we consider the uniformly bounded reward setting ($0\le r_h\le 1,\forall h\in[H]$) instead of bounded total reward setting ($\forall h\in[H], r_h\ge 0$ and $\sum_{h=0}^{H-1} r_h\le 1$) in \citet{jiang2018open,jiang2017contextual,jin2021bellman}. Therefore we need to pay an additional $H^2$ dependency in $\nactv$ and $\nelim$ because the range of value function is $H$ times larger than the original ones, which induces an additional $H^2$ factor in the concentration inequalities.

\begin{proposition}[Sample complexity of Q-type \olive, modification of Theorem 18 in \citet{jin2021bellman}]
\label{prop:olive_q}
Under \pref{assum:realizability_olive} with exact realizability (zero approximation error), if we set
$$\eactv=\frac{\varepsilon}{2H},\ \eelim=\frac{\varepsilon}{8H\sqrt{\dqbee}},\ \nactv=\frac{H^4 \iota}{\varepsilon^2},\text{ and } \nelim=\frac{H^4\dqbee\log(\Ncal_\Fcal(\eelim/64))  \iota}{\varepsilon^2}$$
where $\dqbee=\dqbe^R(\Fcal,\Dcal_{\Fcal},\varepsilon/(4H))$ and  $\iota=\ca\log(H\dqbee/\delta\varepsilon)$, 
then with probability at least $1-\delta$, Q-type \olive (Algorithm 2 in \citet{jin2021bellman}) with $\Fcal$ will output an $\varepsilon$-optimal policy (under a single reward function $R$) using at most  $O(H\dqbee(\nactv+\nelim))$ episodes. Here $\ca$ is a large enough constant.
\end{proposition}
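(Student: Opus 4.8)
The plan is to mirror the analysis of Q-type \olive in \citet{jin2021bellman}, tracking how the uniformly-bounded-reward normalization inflates the sample sizes by a factor of $H^2$. There are four ingredients: (a) a uniform deviation bound for the on-policy Bellman-error estimates $\tilde\Ecal^R$ and for the elimination estimates $\hat\Ecal^R$; (b) the fact that the realizable $Q^*_R$ is never eliminated; (c) the policy-loss decomposition, which turns a passed termination test into a near-optimality guarantee via optimism; and (d) a Bellman-Eluder-dimension bound on the number of iterations $T$.

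For (a): each summand of the on-policy estimate (the analogue of \pref{eq:on_policy_Qberr} with reward $R$) lies in an interval of width $O(H)$, since under the present normalization $f_h$ and $V_f$ take values in $[-(H-h-1),H-h-1]$ and $R_h\in[0,1]$. Hoeffding's inequality therefore gives $|\tilde\Ecal^R-\Ecal^R|\le O\big(H\sqrt{\log(H/\delta)/\nactv}\big)$, and with $\nactv=\tilde O(H^4/\veps^2)$ this is at most $\eactv/2=\veps/(4H)$. For the elimination estimates (the analogue of \pref{eq:est_on_policy_Qberr}), we first take an $(\eelim/64)$-cover of $\Fcal$ and union-bound over it (the error incurred by replacing $f$ by its cover element is $O(\eelim/64)$ since the empirical Bellman functional is $2$-Lipschitz in $\|\cdot\|_\infty$), giving $|\hat\Ecal^R-\Ecal^R|\le O\big(H\sqrt{\log(\Ncal_\Fcal(\eelim/64)/\delta)/\nelim}\big)+O(\eelim/64)$; with $\nelim=\tilde O\big(H^4\dqbee\log\Ncal_\Fcal(\eelim/64)/\veps^2\big)$ and $\eelim=\veps/(8H\sqrt{\dqbee})$ this is at most $\eelim/2$. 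The two extra $H^2$ factors in $\nactv,\nelim$ relative to \citet{jin2021bellman} are exactly the price of the width-$O(H)$ range entering these concentration inequalities.

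Condition on this event. For (b): exact realizability together with the Bellman optimality equation $Q^*_{R,h}=\Tcal^R_hQ^*_{R,h+1}$ gives $\Ecal^R(Q^*_R,\pi,\pi',h)=0$ for every $\pi,\pi',h$, hence $|\hat\Ecal^R(Q^*_R,\pi^t,h^t)|\le\eelim/2<\eelim$ at every round, so $Q^*_R$ survives every elimination (the analogue of \pref{line:update_vspace}) and in particular $V_{f^t}(x_0)\ge V_{Q^*_R}(x_0)=v^*_R$ for all $t$ by the optimistic choice in \pref{line:optimism}. For (c): when the termination test (analogue of \pref{line:valid}) passes at $t=T$, the policy-loss decomposition of \citet{jiang2017contextual} gives $V_{f^T}(x_0)-v^{\pi^T}_R=\sum_{h=0}^{H-1}\Ecal^R(f^T,\pi^T,\pi^T,h)\le\sum_{h}\tilde\Ecal^R(f^T,\pi^T,\pi^T,h)+H\cdot\eactv/2\le H\eactv+H\eactv/2\le\veps$, so combining with (b), $v^*_R-v^{\pi^T}_R\le V_{f^T}(x_0)-v^{\pi^T}_R\le\veps$, i.e.\ $\pi^T$ is $\veps$-optimal.

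For (d): if the algorithm does not terminate at round $t$, the chosen step $h^t$ satisfies $\tilde\Ecal^R(f^t,\pi^t,\pi^t,h^t)>\eactv$, hence the true on-policy Bellman error exceeds $\eactv/2$; meanwhile $f^t$ survived all earlier eliminations, so for every previous round $s<t$ with $h^s=h^t$ its true Bellman error on the roll-in distribution $d^{\pi^s}_{h^s}$ generated by $\pi^s$ is at most $(3/2)\eelim$. Writing the Bellman difference $(I-\Tcal^R_{h^t})f^t$ as the witness function, this is precisely the $\veps'$-independence pattern of \pref{def:ind_dist}, so the standard pigeonhole argument over the distributional Eluder dimension (\pref{def:DE}, as in \citet{jin2021bellman}) bounds, for each fixed $h$, the number of rounds with $h^t=h$ by $O\big(\dqbee\log(1/\eelim)\big)$, hence $T=O\big(H\dqbee\log(1/\eelim)\big)=\tilde O(H\dqbee)$. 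Multiplying $T$ by the per-round episode cost $\nactv+\nelim$ yields the claimed $O\big(H\dqbee(\nactv+\nelim)\big)$ episodes. The main obstacle is part (d): one must check that the Eluder-dimension accounting of \citet{jin2021bellman} carries over verbatim once every threshold is rescaled by the appropriate power of $H$ — in particular that the elimination radius $\eelim$, the witness gap $\eactv/2$, and the scale $\veps/(4H)$ appearing in $\dqbee=\dqbe^R(\Fcal,\Dcal_\Fcal,\veps/(4H))$ are mutually consistent so that the surviving $f^t$ is genuinely $\eelim$-independent of the previous roll-in distributions; steps (a)–(c) are otherwise routine.
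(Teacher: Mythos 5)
Your proposal is correct and takes essentially the same route as the paper, whose own proof simply defers to Theorem 18 of \citet{jin2021bellman} after noting that the uniformly-bounded-reward normalization inflates $\nactv$ and $\nelim$ by $H^2$ through the width-$O(H)$ ranges entering the concentration inequalities — precisely the accounting you carry out in (a)--(c). The one refinement worth noting is in your step (d): the distributional-Eluder pigeonhole in \citet{jin2021bellman} bounds the number of activations of each layer by $\dqbee$ exactly (the $\sqrt{\dqbee}$ built into $\eelim$ ensures the $\ell_2$ aggregate of Bellman errors over at most $\dqbee$ prior roll-in distributions stays below the witness gap $\eactv/2$, so exceeding $\dqbee$ activations would contradict the DE dimension at scale $\veps/(4H)$), giving $T\le H\dqbee+1$ with no $\log(1/\eelim)$ factor and hence exactly the stated $O(H\dqbee(\nactv+\nelim))$ episode count.
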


This sample complexity result directly follows from \citet{jin2021bellman} with minor adaptation of the parameters. We refer the reader to \citet{jin2021bellman} for the detailed proof.

\subsection{Proof of Q-type \rfolive under general function approximation}
In this part, we first provide the general statement of \pref{thm:rfolive_q} and then show the detailed proof. We also provide a detailed discussion on the different and novel part in our proof compared with \citet{jiang2017contextual,jin2021bellman}. 
\label{app:general_q}
\begin{theorem*}[Full version of \pref{thm:rfolive_q}]
Fix $\delta \in (0,1)$. Given a reward class $\Rcal$ and a function class $\Fcal$ that satisfies \pref{assum:realizability_F} and \pref{assum:completeness_F},
with probability at least $1-\delta$, for any $R \in \Rcal$, Q-type \rfolive (\pref{alg:rf_olive}) with $\Fcal$ outputs a policy $\hat\pi$ that satisfies $v^{\hat \pi}_R \ge v^*_R - \veps$. The required number of episodes is
\begin{align*}
	O\rbr{\frac{\rbr{H^7\log\rbr{\Ncal_{\Fcal}\rbr{\varepsilon/512H^2\sqrt{\dqbee}}}+H^5\log\rbr{\Ncal_{\Rcal}\rbr{\varepsilon/512H^2\sqrt{\dqbee}}}}\dqbee^2  \iota}{\varepsilon^2}}.
\end{align*}
In \rfolive, we set
\[\eactv=\frac{\varepsilon}{2H^2}, \eelim=\frac{\varepsilon}{8H^2\sqrt{\dqbee}}, \nactv=\frac{H^6 \iota}{\varepsilon^2},
\] and
\begin{align*}
\nelim=&~\frac{\rbr{H^6\log(\Ncal_{\Fcal}(\eelim/64))+H^4\log(\Ncal_{\Rcal}(\eelim/64))}\dqbee  \iota}{\varepsilon^2}
\\
=&~\frac{\rbr{H^6\log\rbr{\Ncal_{\Fcal}\rbr{\varepsilon/512H^2\sqrt{\dqbee}}}+H^4\log\rbr{\Ncal_{\Rcal}\rbr{\varepsilon/512H^2\sqrt{\dqbee}}}}\dqbee  \iota}{\varepsilon^2},
\end{align*}
where $\dqbee=\dqbe^\zero(\Fcal-\Fcal,\Dcal_{\Fcal-\Fcal},\veps/(4H))$, $\iota=\cb\log(H\dqbee/\delta\varepsilon)$, and $\cb$ is a large enough constant.
\end{theorem*}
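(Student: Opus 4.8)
The plan is to follow the \olive template (cf.\ \pref{prop:olive_q}) in each of the two phases, but to add a reduction that links the constraints gathered during the \emph{$\zero$-reward} online run to the \emph{$R$-reward} offline elimination. First I would fix a high-probability event. Take an $(\eelim/64)$-cover of $\Fon=\Fcal-\Fcal$ (log-cardinality $O(\log\Ncal_\Fcal(\cdot))$), an $(\eelim/64)$-cover of $\Fcal$, and one of $\Rcal$; a Hoeffding bound (the relevant summands have range $O(H)$) with a union bound over these covers, over $h\in[H]$, and over the at most $T$ iterations shows that with probability $\ge 1-\delta$ all estimates $\tilde\Ecal^\zero$ (\cref{eq:on_policy_Qberr}), $\hat\Ecal^\zero$ (\cref{eq:est_on_policy_Qberr}) and $\hat\Ecal^R$ (\cref{eq:Qest_bellman_errs}, for \emph{every} $R\in\Rcal$ and $g\in\Fcal+R$) lie within $\eelim/8$ of their population values. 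This determines $\nactv,\nelim$ and is the only place $\log\Ncal_\Fcal,\log\Ncal_\Rcal$ (hence $d_\Fcal,d_\Rcal$) enter; I condition on this event throughout. Two facts about population errors are used repeatedly: $\zero\in\Fon$ has identically zero $\zero$-reward Bellman error, and by \pref{assum:realizability_F} $Q^*_R\in\Fcal+R$ has identically zero $R$-reward Bellman error.

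\emph{Online phase.} By \pref{assum:completeness_F} the class $\Fon$ is closed under $\zero$-reward Bellman backups and contains $\zero=Q^*_\zero$, so the online loop is Q-type \olive on $(\Fon,\zero)$ with sharpened thresholds, and its iteration count is controlled by the usual Bellman--Eluder argument: at iteration $t$ the optimistic $f^t$ has $|\Ecal^\zero(f^t,\pi^t,\pi^t,h^t)|>\eactv/2$ along its own roll-in at $h^t$, while $|\Ecal^\zero(f^t,\pi^s,\pi^s,h^s)|\le 2\eelim$ for every earlier $s$ with $h^s=h^t$; pigeonholing over $h$ and invoking the distributional-Eluder potential lemma gives $T=\tilde O(H\dqbee)$ with $\dqbee=\dqbe^\zero(\Fcal-\Fcal,\Dcal_{\Fcal-\Fcal},\veps/(4H))$. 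Then the termination test in \pref{line:valid} plus concentration gives $\sum_h\Ecal^\zero(f^T,\pi^T,\pi^T,h)\le 2H\eactv$, and since every policy has $\zero$-reward return $0$, the policy-loss decomposition of \citet{jiang2017contextual} yields $V_{f^T}(x_0)\le 2H\eactv$; by the optimism in \pref{line:optimism}, \emph{every} $f\in\Fcal^T$ has $V_f(x_0)\le 2H\eactv$. Contrapositively, every $f\in\Fon$ with $V_f(x_0)>2H\eactv$ was deleted at some iteration $t$, so $|\hat\Ecal^\zero(f,\pi^t,h^t)|>\eelim$ and hence $|\Ecal^\zero(f,\pi^t,\pi^t,h^t)|>\tfrac78\eelim$.

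\emph{Offline phase.} Fix $R\in\Rcal$ and let $\hat g$, $\hat\pi=\pi_{\hat g}$ be as returned. Since $Q^*_R\in\Fcal+R$ has zero $R$-Bellman error on each $(\pi^t,h^t)$, concentration gives $|\hat\Ecal^R(Q^*_R,\pi^t,\pi^t,h^t)|\le\eelim/2$, so $Q^*_R\in\Fsur(R)$ and thus $V_{\hat g}(x_0)\ge V_{Q^*_R}(x_0)=v^*_R$; by policy-loss decomposition for $\hat g$ with its own greedy policy, $v^{\hat\pi}_R=V_{\hat g}(x_0)-\sum_h\Ecal^R(\hat g,\pi_{\hat g},\pi_{\hat g},h)$, so it remains to show $\sum_h\Ecal^R(\hat g,\pi_{\hat g},\pi_{\hat g},h)\le\veps$. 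Suppose not. The crux is to build, by a backward recursion from $\tilde f_H\equiv\zero$ (with $\tilde f_h$ essentially the pointwise $R$-Bellman residual of $\hat g$ at step $h$ corrected by $\Tcal^\zero_h\tilde f_{h+1}$), an auxiliary $\tilde f\in\Fon$ satisfying $\Ecal^\zero(\tilde f,\pi,\pi,h)=\Ecal^R(\hat g,\pi,\pi,h)$ for \emph{all} roll-in policies $\pi$ and all $h$; keeping $\tilde f$ inside $\Fcal-\Fcal$ is where all three inclusions of \pref{assum:completeness_F} (backups of $\Fcal_{h+1}$, of $\Fcal_{h+1}+\Rcal_{h+1}$, and of $\Fcal_{h+1}-\Fcal_{h+1}$) together with \pref{assum:realizability_F} are jointly needed, and this is the main obstacle and the place our analysis departs most from \olive and its reward-aware descendants. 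Granting the construction, applying the general policy-loss decomposition to $\tilde f$ under $\zero$ reward \emph{along the roll-ins of $\pi_{\hat g}$} gives $V_{\tilde f}(x_0)\ge\sum_h\Ecal^\zero(\tilde f,\pi_{\hat g},\pi_{\hat g},h)=\sum_h\Ecal^R(\hat g,\pi_{\hat g},\pi_{\hat g},h)>\veps\ge 2H\eactv$ (using $\eactv=\veps/(2H^2)$). By the online-phase conclusion $\tilde f$ was deleted at some $(\pi^t,h^t)$, so $|\Ecal^\zero(\tilde f,\pi^t,\pi^t,h^t)|>\tfrac78\eelim$; by the residual-matching $|\Ecal^R(\hat g,\pi^t,\pi^t,h^t)|>\tfrac78\eelim$, hence $|\hat\Ecal^R(\hat g,\pi^t,\pi^t,h^t)|>\tfrac34\eelim>\eelim/2$, contradicting $\hat g\in\Fsur(R)$. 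Therefore $\sum_h\Ecal^R(\hat g,\pi_{\hat g},\pi_{\hat g},h)\le\veps$ and $v^{\hat\pi}_R\ge v^*_R-\veps$; since the high-probability event was defined uniformly over $\Rcal$, this holds for all $R\in\Rcal$ simultaneously.

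\emph{Sample count and the hard step.} The online phase uses $\sum_{t<T}(\nactv+\nelim)=\tilde O(H\dqbee)\cdot(\nactv+\nelim)$ episodes and the offline phase none; plugging in the stated $\nactv,\nelim$ yields $\tilde O\big((H^7 d_\Fcal+H^5 d_\Rcal)\dqbee^2/\veps^2\big)$. The genuinely delicate part is the construction of $\tilde f$: requiring the $\zero$-reward Bellman errors of $\tilde f$ to match the $R$-reward errors of $\hat g$ forces a recursion of the form $\tilde f_h=(\hat g_h-R_h)-\Tcal^\zero_h\hat g_{h+1}+\Tcal^\zero_h\tilde f_{h+1}$, and one must argue carefully — exploiting the closure properties in \pref{assum:completeness_F} together with \pref{assum:realizability_F} — that the resulting $\tilde f$ remains in $\Fcal-\Fcal$, because only then is $\tilde f$ subject to the online elimination and only then does the deleting constraint transfer back to $\hat g$; everything else (concentration, covering-number bookkeeping, and the Bellman--Eluder recursion) closely parallels the reward-aware analysis.
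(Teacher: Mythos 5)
Your overall skeleton (zero-reward \olive online, surrogate-function transfer of the eliminating constraints to the offline elimination, optimism via the survival of $Q^*_R$, and the telescoping inequality $V_{\tilde f}(x_0)\ge\sum_h\Ecal^\zero(\tilde f,\pi,\pi,h)$ for an arbitrary roll-in $\pi$) matches the paper's strategy, and your concentration/covering bookkeeping is fine up to constants. The genuine gap is exactly the step you defer: the membership $\tilde f\in\Fcal-\Fcal$ for your recursion $\tilde f_h=(\hat g_h-R_h)-\Tcal^\zero_h\hat g_{h+1}+\Tcal^\zero_h\tilde f_{h+1}$ does \emph{not} follow from \pref{assum:realizability_F} and \pref{assum:completeness_F}, and in general it is false. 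Since $\Tcal^\zero_h$ is nonlinear (it contains a max), the two backup terms cannot be merged, so $\tilde f_h$ is the sum of the step-$h$ residual $f_h-\Tcal^\zero_h\hat g_{h+1}\in\Fcal_h-\Fcal_h$ and a backup of $\tilde f_{h+1}$; unrolling, $\tilde f_h$ accumulates one residual per later level. \pref{assum:completeness_F} only closes \emph{single} backups of $\Fcal_{h+1}$, $\Fcal_{h+1}+\Rcal_{h+1}$ and $\Fcal_{h+1}-\Fcal_{h+1}$; the class $\Fcal_h-\Fcal_h$ is not closed under addition, and even boundedness fails: each residual can have magnitude up to about $2(H-h'-1)$, so $\|\tilde f_h\|_\infty$ can be of order $(H-h)^2$, whereas every element of $\Fcal_h-\Fcal_h$ is bounded by $2(H-h-1)$. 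So the surrogate you build need not be subject to the online elimination at all, and the contradiction argument collapses; repairing it along your lines would require assumptions strictly stronger than those in the theorem.

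The paper's construction avoids this by matching the Bellman error at a \emph{single} level rather than at all levels simultaneously. It proves the stronger Claim~1: any $g=f+R\in\Foff(R)$ with $|\EcalQ(g,\pi_g,h)|\ge\veps/H$ at some $h$ is eliminated offline. For that $h$ it sets $\tilde f_{h}=\pm\rbr{f_h-\Tcal^\zero_h g_{h+1}}$ (two sign cases replace your absolute values), $\tilde f_{h'}=\zero$ for $h'>h$, and $\tilde f_{h'}=\Tcal^\zero_{h'}\tilde f_{h'+1}$ for $h'<h$. Membership in $\Fcal-\Fcal$ then needs exactly one application of $\Tcal^\zero_h(\Fcal_{h+1}+\Rcal_{h+1})\subseteq\Fcal_h$ at level $h$ and repeated applications of $\Tcal^\zero(\Fcal-\Fcal)\subseteq\Fcal-\Fcal$ below it, with no range blow-up. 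This $\tilde f$ satisfies $V_{\tilde f}(x_0)\ge\EE_{\pi_g}[\tilde f_h]\ge\veps/H$, so it is eliminated online; and because it has identically zero Bellman error at every level other than $h$, concentration forces the eliminating step to satisfy $h^t=h$, which is what lets the constraint $(\pi^t,h^t)$ witness $\EcalQ(g,\pi^t,h)$ (the identity $\EcalQzero(\tilde f,\pi,h)=\EcalQ(g,\pi,h)$ holds only at that level). Claim~1 plus the survival of $Q^*_R$ and the policy-loss decomposition then give the $\veps$-optimality exactly as in your final step. Until you supply an argument of this kind for keeping the surrogate inside $\Fcal-\Fcal$, the central step of your proof is unsupported.
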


\begin{proof} 
From the online phase of Q-type \rfolive (\pref{alg:rf_olive}), we can see that this phase is equivalent to running Q-type \olive (Algorithm 2 in \citet{jin2021bellman}) with the input function class $\Fcal-\Fcal$, the specified parameters $\eactv,\eelim,\nelim,\nactv$ and under the reward function $R=\zero$. In \pref{prop:olive_q}, we know that realizability (\pref{assum:realizability_olive}) holds because  $\zero\in\Fcal-\Fcal=\Fon$. Then the sample complexity is immediately from our specified values of $\eactv,\eelim,\nactv,\nelim$ and \pref{prop:olive_q} as we only collect samples in the online phase. Notice that the log-covering number $\log\rbr{\Ncal_{\Fon}(\cdot)}=\log\rbr{\Ncal_{\Fcal-\Fcal}(\cdot)}\le 2\log\rbr{\Ncal_\Fcal(\cdot)}$ and such a constant 2 is absorbed by large enough $\cb$. Therefore, it remains to show that the algorithm can indeed output an $\varepsilon$-optimal policy with probability $1-\delta$ in the offline phase. We will show the following three claims hold with probability at least $1-\delta$. 
    \paragraph{Claim 1} For any $g\in \Foff(R)$, if $\exists h\in[H]$, s.t. $|\EcalQ(g,\pi_g,h)|\ge\varepsilon/H$, then it will be eliminated in the offline phase. 
    \paragraph{Claim 2} $Q^*_R\in\Foff(R)$ and $Q^*_R$ will not be eliminated in the offline phase.
    \paragraph{Claim 3} At the end of the offline phase, picking the optimistic function from the survived value functions gives us $\varepsilon$-optimal policy.
    
    \vspace{1em}

Before showing these three claims, we first state properties from the online phase of Q-type \rfolive and the concentration results in the offline phase. 

\paragraph{Properties from the online phase of Q-type \rfolive}  From the equivalence between the online phase of Q-type \rfolive (\pref{alg:rf_olive}) and Q-type \olive (Algorithm 2 in \citet{jin2021bellman}) with reward $\zero$, we know that with probability at least $1-\delta/4$, the online phase terminates within $\dqbee H+1$ iterations. In addition, with probability at least $1-\delta/4$, the following properties (\cref{eq:rfolive_q_stop} and \cref{eq:rfolive_q_conc}) hold for the first $\dqbee H+1$ iterations:

(i) When the online phase exits at iteration $T$ in \pref{line:set_T} (i.e., the elimination procedure is not activated in \rfolive), for any $f\in\Fcal^T$, it predicts no more than $\veps/H$ value:
\begin{align}
\label{eq:rfolive_q_stop}
V_{f}(x_0)\le V_{f^T}(x_0)=V_{f^T}(x_0)-V^{\pi_{f^T}}_{\zero,0}(x_0)=\sum_{h=0}^{H-1}\EcalQ(f^T,\pi^T,h)<2H\eactv=\varepsilon/H.
\end{align}
The first equality is due to any policy evaluation has value 0 under the reward function $\zero$. The second equality is due to the policy loss decomposition in \citet{jiang2017contextual}. The second inequality is adapted from the ``concentration in the activation procedure'' part of the proof for Theorem 18 in \citet{jin2021bellman}.

(ii) For $T\le \dqbee H+1$, the concentration argument holds for any $f\in\Fon$ and $t\in [T]$:
\begin{align}
\label{eq:rfolive_q_conc}
\abr{\hEcalQzero(f, \pi^t, h^t)  - \EcalQzero(f, \pi^t, h^t)} < \eelim/8.
\end{align}
This is from the ``concentration in the elimination procedure'' step of the proof for Theorem 18 in \citet{jin2021bellman} and we adapt it with our parameters. 

\paragraph{Concentration results in the offline phase}
In \rfolive we use $ \hat\Ecal^R(g,\pi^t,\pi,h^t)$ and $\pi\in\Pi_{\textrm{est}}$ in \pref{line:off_elim}. Since we are in the Q-type version, we have $\Pi_{\textrm{est}}=\{\pi^t\}$. In addition, from \pref{def:abe}, we know that $\Ecal^R(g,\pi^t,\pi^t,h^t)=\EcalQ(g,\pi^t,h^t)$. Therefore, in \pref{line:off_elim}, it is equivalent to eliminating according to $\hEcalQ(g,\pi^t,h^t)$. Throughout this proof, we will use $\EcalQ(g,\pi^t,h^t)$ and $\hEcalQ(g,\pi^t,h^t)$ notations for simplicity. Now we show the concentration results in the offline phase.

Let $\overline\Rcal$ be an $(\eelim/64)$-cover of $\Rcal$. For every $R\in\Rcal$, let $ R^{\mathrm c}=\argmin_{R'\in\overline\Rcal}\max_{h\in[H]}\|R_h-R'_h\|_\infty$. Firstly, consider any fixed $R\in\overline \Rcal$ and let $\Zcal(R)$ be an ($\eelim/64$)-cover of $\Foff(R)$ with cardinality $\Ncal_{\Foff(R)}(\eelim/64)=\Ncal_{\Fcal}(\eelim/64)$. For every $g\in\Foff(R)$, let $ g^{\mathrm c}=\argmin_{g'\in\Zcal(R)}\max_{h\in[H]}\|g_h-g'_h\|_\infty$. 

Applying Hoeffding's inequality to all $(t,g')\in[T]\times\Zcal(R)$ and taking a union bound, we have that with probability at least $1-{\delta}/(2\Ncal_{\Rcal}(\eelim/64))$, the following holds for all $(t,g')\in[T]\times\Zcal(R)$
\begin{align*}
\abr{ \hEcalQ(g', \pi^t, h^t)  - \EcalQ(g', \pi^t, h^t)}  \le 4H\sqrt{\frac{\log(4T\Ncal_{\Rcal}(\eelim/64)\Ncal_{\Fcal}(\eelim/64)/\delta)}{2\nelim}} <\eelim/8.
\end{align*}
The second inequality is due to $\eelim=\varepsilon/\rbr{8H^2\sqrt{\dqbee}},$ $\iota=\cb\log(H\dqbee/\delta\varepsilon)$, and

\[
\nelim=\frac{(H^6\log(\Ncal_{\Fcal}(\eelim/64))+H^4\log(\Ncal_{\Rcal}(\eelim/64)))\dqbee  \iota}{\varepsilon^2},
\]
with $\cb$ in $\iota$ being chosen large enough.

Therefore for any $g\in\Foff(R)$, we get
\begin{align*}
& \abr{\hEcalQ(g, \pi^t, h^t)  - \EcalQ(g, \pi^t, h^t)} & \\ 
\le {} & \abr{\hEcalQ(g, \pi^t, h^t)  - \hEcalQ(g^{\mathrm{c}}, \pi^t, h^t)}+\abr{\hEcalQ(g^{\mathrm{c}}, \pi^t, h^t)  - \EcalQ(g^{\mathrm{c}}, \pi^t, h^t)} & \\
{} &  + \abr{\EcalQ(g^{\mathrm{c}}, \pi^t, h^t)  - \EcalQ(g, \pi^t, h^t)} & \\
\le {} & 2 \eelim/64 + \eelim/8+2 \eelim/64 & \\
={} & 3\eelim/16. &
\end{align*}
Union bounding over $R\in\overline\Rcal$, with probability at least $1-\delta/2$, for all $t\in[T]$, $R\in\overline\Rcal$, $g\in\Foff(R)$, we have
\[
\abr{\hEcalQ(g, \pi^t, h^t)  - \EcalQ(g, \pi^t, h^t)} \le 3\eelim/16. 
\]

Therefore, with probability at least $1-\delta/2$, for all $t\in[T]$, $R\in\Rcal$, $g\in\Foff(R)$, we have
\begin{align}
\label{eq:conc_offline_q}
& \abr{\hEcalQ(g, \pi^t, h^t)  - \EcalQ(g, \pi^t, h^t)} & 
\notag
\\
\le {} & \abr{\hEcalQ(g, \pi^t, h^t)  - \hat \Ecal_{\mathrm{Q}}^{R^{\mathrm{c}}}(g, \pi^t, h^t)} + \abr{\hat \Ecal_{\textrm{Q}}^{R^{\mathrm{c}}}(g, \pi^t, h^t)  - \Ecal_{\textrm{Q}}^{R^{\mathrm{c}}}(g, \pi^t, h^t)} & 
\notag\\
{} & + \abr{\Ecal_{\textrm{Q}}^{R^{\mathrm{c}}}(g, \pi^t, h^t)  - \EcalQ(g, \pi^t, h^t)} & 
\notag\\
\le {} &  \eelim/64 
+3\eelim/16
+ \eelim/64 &
\notag\\
< {} &\eelim/4. 
\end{align}

All statements in our subsequent proof are under the event that all the different high-probability events (the online phase terminates within $\dqbee H+1$ iterations, and \cref{eq:rfolive_q_stop}, \cref{eq:rfolive_q_conc}, \cref{eq:conc_offline_q} hold for the first $\dqbee H+1$ iterations) discussed above hold with a total failure probability of $\delta$. 

\paragraph{Proof of Claim 1}  Consider any $g\in\Foff(R)$ such that $\exists h\in[H]$, $|\EcalQ(g,\pi_g,h)|\ge\varepsilon/H$. Recall the definition of $\Foff(R)$, we know that $g$ can be written as $g=(g_0,\ldots,g_{H-1})=(f_0+R_0,\ldots,f_{H-1} + R_{H-1})$, $f_h\in\Fcal_h$. 
We will discuss the positive average Bellman error and the negative average Bellman error cases separately. 


\paragraph{Case (i) of Claim 1} $\EcalQ(g,\pi_g,h)=\EE[g_{h}(x_h,a_h) - R_h(x_h,a_h) - V_g(x_{h+1})\mid a_{0:h}\sim\pi_g]\ge\varepsilon/H$.

Since $g_h=f_h+R_h$, 
we know that
\begin{align*}
\varepsilon/H&\le\EE\sbr{g_{h}(x_h,a_h) - R_h(x_h,a_h) - V_g(x_{h+1}) \mid a_{0:h} \sim \pi_g}\\
&=\EE[f_h(x_h,a_h)- V_g(x_{h+1})\mid a_{0:h}\sim\pi_g]\\
&=\EE[f_h(x_h,a_h)- (\Tcal^\zero_h g_{h+1})(x_{h},a_{h})\mid a_{0:h}\sim\pi_g] \tag{Definition of $\Tcal^\zero_h$}\\
&=\EE[\tilde f_h(x_h,a_h)\mid a_{0:h}\sim\pi_g]. \tag{$\tilde f_h\defeq f_h- \Tcal^\zero_h g_{h+1}$}
\end{align*}
Here we construct a function $\tilde f$ that has the same value as $f_h- \Tcal^\zero_h g_{h+1}$ at level $h$, uses zero reward Bellman backup for any level before $h$, and assigns zero value after level $h$. More formally, it is defined as
\begin{align*}
\tilde f_{h'}(x_{h'},a_{h'})\hspace{-.15em}=\hspace{-.15em}\left\{
\begin{aligned}
&(\Tcal_{h'}^\zero \tilde f_{h'+1})(x_{h'},a_{h'})\hspace{-.15em}=\hspace{-.15em}\EE[\max_{a}\tilde f_{h'+1}(x_{h'+1},a) \mid x_{h'},a_{h'}] & & 0\le h'\le h-1 \\
&f_h(x_h,a_h)- (\Tcal^\zero_h g_{h+1})(x_{h},a_{h})  & & h'=h \\
&0 & &h+1\le h'\le H-1.
\end{aligned}
\right.
\end{align*}
From the definition of Q-type average Bellman error and the construction, we know that for any policy $\pi$ we can translate the Q-type reward-dependent average Bellman error for a function $g\in\Foff(R)$ to the zero reward Q-type average Bellman error of a function $\tilde f\in\Fon$ as the following
\begin{align}
\label{eq:off_to_on}
\EcalQ(g,\pi,h)&=\EE[g_{h}(x_h,a_h) - R_h(x_h,a_h) - V_g(x_{h+1})\mid a_{0:h-1}\sim\pi,a_h\sim\pi]\notag
\\
&=\EE[f_h(x_h,a_h)-(\Tcal^\zero_h g_{h+1})(x_{h},a_{h})\mid a_{0:h}\sim\pi]\notag
\\
&=\EE[\tilde f_h(x_h,a_h) -\zero-\tilde f_{h+1}(x_{h+1},a_{h+1})\mid a_{0:h}\sim\pi,a_{h+1}\sim \pi_{\tilde f}] \notag 
\\
&=\EcalQzero(\tilde f,\pi,h),
\end{align}
where in the third equality we notice that $\tilde f_{h+1}=\zero$.

We can verify that $\tilde f=(\tilde f_0,\ldots,\tilde f_{H-1})\in\Fon$. First let us consider level $h'=h$. From completeness (\pref{assum:completeness_F}), we know that $\Tcal^\zero_h g_{h+1}=\Tcal^\zero_h(f_{h+1}+R_{h+1})\in \Fcal_h$. Therefore, we have $\tilde f_h=f_h- \Tcal^\zero_h g_{h+1}\in \Fcal_h-\Fcal_h$. 
Then we consider level $0\le h'\le h-1$. By the definition, we use zero reward Bellman backup. 
From completeness and $\tilde f_h\in\Fcal_h-\Fcal_h$, we have $\tilde f_{h-1}=\Tcal^\zero_h \tilde f_h\in\Fcal_{h-1}-\Fcal_{h-1}$. By performing this inductive process backward, we have $\tilde f_{h'}\in \Fcal_{h'}-\Fcal_{h'}$ for any $0\le h'\le h-1$. For level $h+1\le h'\le H-1$, we immediately get $\tilde f_{h'}=\zero\in \Fcal_{h'}-\Fcal_{h'}$. 
Therefore, we can see $\tilde f=(\tilde f_0,\ldots,\tilde f_{H-1})\in\Fon$ from the definition of $\Fon$.\vspace{1em}

From the construction of $\tilde f$ (zero reward Bellman backup for level $0\le h'\le h-1$), we have 
\begin{align*}
V_{\tilde f}(x_0) &= \EE[\tilde f_h(x_h,a_h) \mid a_{0:h}\sim \pi_{\tilde f}] \\
&\ge\EE[\tilde f_h(x_h,a_h) \mid a_{0:h}\sim \pi_g]\\
&=\EE[g_{h}(x_h,a_h) - R_h(x_h,a_h) - V_g(x_{h+1})\mid a_{0:h}\sim\pi_g]\\
&\ge\varepsilon/H,
\end{align*} 
where $\pi_{\tilde f}$ is the greedy policy of $\tilde f$ and in fact it is the optimal policy when treating $\tilde f_h$ as the reward at level $h$ and there are no intermediate rewards. From the first property of the online phase (\cref{eq:rfolive_q_stop}), we know that all the survived value functions at the end of the online phase predict no more than $\varepsilon/H$. Therefore $\tilde f$ will be eliminated. We assume it is eliminated at iteration $t$ by policy $\pi^t$ in level $h^t$. \vspace{1em} 

From the Bellman backup construction of $\tilde f$, we know that $\tilde f$ can only be eliminated at level $h$. This can be seen from the following argument: By the construction of $\tilde f$, we have $\EcalQzero(\tilde f,\pi,h')=0$ for any $\pi$ and $h'\in[H], h'\neq h$. Applying the second property of the online phase (\cref{eq:rfolive_q_conc}), we have $\abr{ \hEcalQzero(\tilde f,\pi^t,h^t)-\EcalQzero(\tilde f,\pi^t,h^t)} \le 3\eelim/4$, which gives us $\abr{ \hEcalQzero(\tilde f,\pi^t,h^t)} \le 3\eelim/4$ if $h^t\neq h$.
Since the elimination threshold is set to $\eelim$, $\tilde f$ will not be eliminated at level $h^t\neq h$.\vspace{1em}

This implies that at some iteration $t$ in the online phase, we will collect some $\pi^t$ that eliminates $\tilde f$ at level $h$, i.e., it satisfies $\abr{\hEcalQzero(\tilde f,\pi^t,h^t)} > \eelim$ and $h^t=h$. Applying the second property of the online phase (\cref{eq:rfolive_q_conc}), we have $\abr{\hEcalQzero(\tilde f,\pi^t,h^t)-\EcalQzero(\tilde f,\pi^t,h^t)}\le \eelim/8$. This tells us $\abr{\EcalQzero(\tilde f,\pi^t,h^t)}> 7\eelim/8$. 
Then from \cref{eq:off_to_on} 
we have 
\begin{align*}
    \abr{\EcalQ(g,\pi^t,h^t)} = \abr{\EcalQzero(\tilde f,\pi^t,h^t)} > 7\eelim/8.
\end{align*}

Finally, the concentration argument of the offline phase (\cref{eq:conc_offline_q}) implies that $\abr{\hEcalQ(g,\pi^t,h^t)-\EcalQ(g,\pi^t,h^t)}$ $< \eelim/4$.
Hence, we get $\abr{\hEcalQ(g,\pi^t,h^t)} > \eelim/2$. This means that we will eliminate such $g$ by $\pi^t$ in the offline phase. 

\paragraph{Case (ii) of Claim 1} $\EcalQ(g,\pi_g,h)=\EE[g_{h}(x_h,a_h) - R_h(x_h,a_h) - V_g(x_{h+1}) \mid a_{0:h} \sim \pi_g ] \le -\varepsilon/H$.

Same as before, we have $\EcalQ(g,\pi_g,h)=\EE[f_h(x_h,a_h)- (\Tcal^\zero_h g_{h+1})(x_{h},a_{h})\mid a_{0:h}\sim\pi_g]\le-\varepsilon/H$. 
Now we let $\tilde{f}_h$ be the negated version of the one in case (i), and define $\tilde{f}$ as
\begin{align*}
\tilde f_{h'}(x_{h'},a_{h'})\hspace{-.15em}=\hspace{-.15em}\left\{
\begin{aligned}
&(\Tcal_{h'}^\zero\tilde g_{h'+1})(x_{h'},a_{h'})\hspace{-.15em}=\hspace{-.15em}\EE[\max_{a}\tilde g_{h'+1}(x_{h'+1},a)\mid x_{h'},a_{h'}] & & 0\le h'\le h-1 
\\
&(\Tcal^\zero_h g_{h+1})(x_h,a_h) - f_h(x_h,a_h)& & h'=h 
\\
&0 & &h+1\le h'\le H-1.
\end{aligned}
\right.
\end{align*}
Following the same steps as in case (i) we can verify that
$\tilde{f} \in \Fon$, and that
$V_{\tilde{f}}(x_0) \geq \varepsilon/H$. From here
the argument is identical to case (i).

\paragraph{Proof of Claim 2} (i) From the assumption, we know that realizability condition $Q^*_R=(Q^*_{R,0},\ldots,$ $Q^*_{R,H-1})\in\Foff(R)$ holds. (ii) For the second argument, we note that $\EcalQ(Q^*_R,\pi,h)= 0$ for any $\pi$ and $h\in[H]$ by the definition of the average Bellman error. From the concentration argument in the offline phase (\cref{eq:conc_offline_q}), we have $\abr{\hEcalQ(Q^*_R,\pi^t,h^t)} \le  \abr{ \EcalQ(Q^*_R, \pi^t, h^t)}  +\eelim/4 = \eelim/4.$ As a result, $Q^*_R$ will not be eliminated. 

\paragraph{Proof of Claim 3} From Claim 1, we know that in the offline phase for any $g\in \Foff(R)$, if $\exists h\in[H]$, s.t. $|\EcalQ(g,\pi_g,h)|\ge\varepsilon/H$, then it will be eliminated. Therefore from the policy loss decomposition in \citet{jiang2017contextual}, for all survived $g\in\Fsur(R)$ in the offline phase, we have 
\[
V_{g}(x_0)-V^{\pi_{g}}_{R,0}(x_0)=\sum_{h=0}^{H-1}\EcalQ(g,\pi_{g},h)< \varepsilon.
\]
Since $Q^*_R$ is not eliminated, similar as \citet{jiang2017contextual,jin2021bellman}, we have 
\begin{align*}
V^{\pi_{\hat g}}_{R,0}(x_0) &> V_{\hat g}(x_0)-\varepsilon \ge V^*_{R,0}(x_0)-\varepsilon.
\end{align*}
Notice that Claim 3 directly implies that \rfolive returns an $\varepsilon$-near optimal policy. This completes the proof.
\end{proof}

\subsection{Technical novelty over reward-aware \olive}
\label{app:novelty}

The key step of the analyses of reward-aware \olive \citep{jiang2017contextual,jin2021bellman} is to show that any bad function whose average Bellman error is large under the given reward function is eliminated (recall that they only have the online phase and the reward is always revealed). This is ensured by the online exploration process. However, the difficulty in our reward-free RL setting is that such a reward function is only revealed in the offline phase, where we no longer actively explore. To overcome this difficulty, we use completely new and novel proof techniques here: For each bad function $g\in \mathcal F_{\mathrm{off}}(R)$ with a large average Bellman error under the true reward $R$, we construct a surrogate function $\tilde f$ in the online phase. Our construction guarantees that $\tilde f$ has the same large average Bellman error as $g$, but the error is instead under the zero reward which we use during exploration (\cref{eq:off_to_on}). Then we show that all these constructed $\tilde f$ belong to the ``difference'' function class $\mathcal F_{\mathrm{on}}$ and $\tilde f$ will be eliminated in the online phase since we use $\mathcal F_{\mathrm{on}}$ and zero reward there. The collected data tuples (gathered constraints) that eliminate $\tilde f$ will be used in the offline phase and they guarantee eliminating its corresponding bad function $g$. Notice that in the design/definition of $\tilde f$, we need to guarantee that it has a large average Bellman error at the same timestep as $g$ does so that it can correctly witness the average Bellman error of $g$, which we ensure via a Bellman backup construction.

In summary, both the construction of the surrogate function $\tilde f$ and the translation of average Bellman error from bad function $g\in\mathcal F_{\mathrm{off}}(R)$ to $\tilde f\in\mathcal F_{\mathrm{on}}$ are novel to the best of our knowledge. They reflect crucial difference between reward-aware and reward-free RL. And at the same time, no reward-aware RL works have used such mechanisms before. 

We also provide a counterexample in \pref{app:other_variant} that shows other variant of \olive could fail even under realizability, completeness, and low Bellman Eluder dimension, where we know \rfolive has polynomial sample complexities.

\subsection{Q-type \rfolive for known representation linear completeness setting}
\label{app:linear_completeness}

We first discuss why stating a specific $B$ is equivalent to stating any $B>0$ in \pref{def:lin_completeness}. Assuming the statement hold for $B$, we will show that it holds for any $B'>0$. The reason is the following. Consider any $Q_{h+1}'=\langle \philc_{h+1},\theta'_{h+1} \rangle\in \Qcal_{h+1}(\{\philc\},B')$, where $\|\theta_{h+1}'\|_2\le B'\sqrt {\dlc}$. For $Q_{h+1}=\inner{\philc_{h+1}}{\theta_{h+1}' \frac{B\sqrt {\dlc}}{\|\theta_{h+1}'\|_2}} \in \Qcal_{h+1}(\{\philc\},B)$, there exists $\theta_h$ that satisfies $\langle \philc_h,\theta_h\rangle=\Tcal_h^\zero Q_{h+1}$ and $\|\theta_h\|_2\le B\sqrt {\dlc}$. Then we know that $\frac{\|\theta_{h+1}'\|_2}{B\sqrt{\dlc}}\langle \philc_h,\theta_h\rangle=\Tcal_h^\zero Q_{h+1}'$. Now we can choose $\theta_h'=\frac{\|\theta_{h+1}'\|_2}{B\sqrt {\dlc}}\theta_h$, and therefore we have $\Tcal_h^\zero Q'_{h+1}=\langle \philc_h,\theta'_h\rangle$ and $\|\theta_h'\|_2\le \|\theta_{h+1}'\|_2\le B'\sqrt {\dlc}
$ satisfies the norm constraint, i.e., $\langle \philc_h,\theta'_h\rangle\in\Qcal_h(\{\philc\},B')$.

Next, we show the formal corollary statement and the detailed proof of the theoretical result of Q-type \rfolive when instantiated to linear completeness setting.

\begin{corollary*}[Full version of \pref{corr:linear_complete}]
Fix $\delta \in (0,1)$. Consider an MDP $M$ that satisfies linear completeness (\pref{def:lin_completeness}) with the known feature $\philc$, and the linear reward class $\Rcal=\Rcal_1\times\ldots\times\Rcal_h$, where $\Rcal_h=\cbr{\langle\philc_h,\eta_h\rangle:\|\eta_h\|_2\le\sqrt{\dlc},\langle\philc_h(\cdot),\eta_h\rangle\in[0,1] }$. With probability at least $1-\delta$, for any $R\in\Rcal$, Q-type \rfolive (\pref{alg:rf_olive}) with $\Fcal=\Fcal(\{\philc\})$ outputs a policy $\hat\pi$ that satisfies $v^{\hat \pi}_R \ge v^*_R - \veps$ . The required number of episodes is
\begin{align*}
\tilde O\rbr{\frac{H^8\dlc^3\log(1/\delta)}{\varepsilon^2}}.
\end{align*} 
In \rfolive, we set
\[\eactv=\frac{\varepsilon}{2H^2},  \eelim=\frac{\varepsilon}{8H^2\sqrt{\dlc \iota}}, \nactv=\frac{H^6 \iota}{\varepsilon^2}, \nelim=\frac{H^7\dlc^2 \iota^3}{\varepsilon^2},
\]
where $\iota=\ce\log(H\dlc/\delta\varepsilon)$ and $\ce$ is a large enough constant. 
\end{corollary*}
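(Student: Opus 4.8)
The plan is to derive this corollary purely by instantiating the full version of \pref{thm:rfolive_q} with $\Fcal=\Fcal(\{\philc\})$ and the given linear reward class $\Rcal$. That theorem is black-box once one supplies three ingredients: (i) that $\Fcal(\{\philc\})$ together with $\Rcal$ satisfies \pref{assum:realizability_F} and \pref{assum:completeness_F}; (ii) an upper bound on $\dqbee=\dqbe^\zero(\Fcal(\{\philc\})-\Fcal(\{\philc\}),\Dcal_{\Fcal(\{\philc\})-\Fcal(\{\philc\})},\veps/(4H))$; and (iii) upper bounds on the covering numbers $\Ncal_{\Fcal(\{\philc\})}(\cdot)$ and $\Ncal_{\Rcal}(\cdot)$. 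Given these, I would obtain the stated parameter choices $\eactv,\eelim,\nactv,\nelim$ and the bound $\tilde O(H^8\dlc^3\log(1/\delta)/\veps^2)$ from the theorem's formulas by direct substitution, with $\ce$ in $\iota$ chosen large enough to absorb logarithmic overheads.

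\textbf{Expressivity (ingredient i).} The structural fact I would exploit is that linear completeness (\pref{def:lin_completeness}) at scale $B$ says $\Tcal_h^\zero$ maps $\Qcal_{h+1}(\{\philc\},B)$ into $\Qcal_h(\{\philc\},B)$, i.e. it sends a linear function with parameter norm at most $B\sqrt{\dlc}$ to another one. I would check \pref{assum:completeness_F} by backward induction on $h$, tracking at each step both the parameter norm and the sup-range of the function being backed up, using that $V_{f_{h+1}}$ and hence $\Tcal_h^\zero f_{h+1}$ inherit (a subset of) the range of $f_{h+1}$ and that $B_h=H-h-1\ge B_{h+1}$: for $\Tcal_h^\zero\Fcal_{h+1}$ apply completeness with $B=B_{h+1}$; for $\Tcal_h^\zero(\Fcal_{h+1}+\Rcal_{h+1})$ apply it with $B=B_{h+1}+1=B_h$ (the reward adds at most $\sqrt{\dlc}$ to the norm and $1$ to the range); for $\Tcal_h^\zero(\Fcal_{h+1}-\Fcal_{h+1})\subseteq\Fcal_h-\Fcal_h$ apply it with $B=2B_{h+1}$ and split the resulting parameter $\psi$ symmetrically as $(\psi/2)-(-\psi/2)$, each half landing in $\Fcal_h(\{\philc\},B_h)$. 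For \pref{assum:realizability_F} the base case is $Q^*_{R,H-1}=R_{H-1}$, so $Q^*_{R,H-1}-R_{H-1}=\zero\in\Fcal_{H-1}(\{\philc\},0)$, and the inductive step writes $Q^*_{R,h}-R_h=\Tcal_h^\zero Q^*_{R,h+1}$ with $Q^*_{R,h+1}=f_{h+1}+R_{h+1}\in\Qcal_{h+1}(\{\philc\},H-h-1)$ and range $[0,H-h-1]$, so the completeness computation places it in $\Fcal_h(\{\philc\},B_h)$. I would also note $\zero\in\Fcal_h(\{\philc\})$ and the convention $f_H\equiv\zero$, which are immediate.

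\textbf{Dimension and covering (ingredients ii, iii) and conclusion.} By the difference-class part of the previous step, $\Tcal_h^\zero(\Fcal_{h+1}-\Fcal_{h+1})\subseteq\Fcal_h-\Fcal_h$, so every function in $(I-\Tcal_h^\zero)(\Fcal(\{\philc\})-\Fcal(\{\philc\}))$ has the form $(x,a)\mapsto\inner{\philc_h(x,a)}{w}$ with $\|w\|_2\le 4(H-h-1)\sqrt{\dlc}\le 4H\sqrt{\dlc}$ and $\|\philc_h(\cdot)\|_2\le1$; invoking the standard bound on the distributional Eluder dimension of a bounded $\dlc$-dimensional linear class \citep{russo2013eluder,jin2021bellman} at scale $\veps/(4H)$ and maximizing over $h$ gives $\dqbee=O(\dlc\log(H\dlc/\veps))=\tilde O(\dlc)$, in particular $\dqbee\le\dlc\iota$ for $\ce$ large enough, which is the source of $\eelim$ in the corollary. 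For covering numbers, since $\|\inner{\philc_h}{\theta}-\inner{\philc_h}{\theta'}\|_\infty\le\|\theta-\theta'\|_2$ and each parameter ball has radius at most $H\sqrt{\dlc}$, a standard volume argument gives $\log\Ncal_{\Fcal_h(\{\philc\})}(\veps)\le\dlc\log(1+2H\sqrt{\dlc}/\veps)$ and $\log\Ncal_{\Rcal_h}(\veps)\le\dlc\log(1+2\sqrt{\dlc}/\veps)$, and summing over the $H$ coordinates (product metric is the max) yields $\log\Ncal_{\Fcal(\{\philc\})}(\veps),\log\Ncal_{\Rcal}(\veps)=\tilde O(H\dlc)$. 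Substituting $\dqbee=\tilde O(\dlc)$ and these covering bounds into the parameter formulas and the sample-complexity expression of the full version of \pref{thm:rfolive_q} reproduces $\eactv=\veps/(2H^2)$, $\eelim=\veps/(8H^2\sqrt{\dlc\iota})$, $\nactv=H^6\iota/\veps^2$, $\nelim=H^7\dlc^2\iota^3/\veps^2$, and the bound $\tilde O((H^7\cdot H\dlc+H^5\cdot H\dlc)\dlc^2/\veps^2)=\tilde O(H^8\dlc^3\log(1/\delta)/\veps^2)$, with the $\Rcal$-dependent term being dominated.

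\textbf{Main obstacle.} The delicate part will be the bookkeeping in the expressivity step: for each level $h$ and each of the three completeness requirements one must pick a value of $B$ in \pref{def:lin_completeness} that is simultaneously large enough to contain the function being backed up yet small enough that the backup returns to the norm- and range-bounded class $\Fcal_h(\{\philc\},B_h)$ (or $\Fcal_h-\Fcal_h$), which means propagating the interplay between the $\pm(H-h-1)$ range bound and the $B_h\sqrt{\dlc}$ parameter-norm bound through a Bellman backup, including the symmetric split for the difference class. This same observation — that $\Tcal_h^\zero$ preserves linearity with a controlled parameter norm — is exactly what makes the Eluder dimension small, so the expressivity step is where all of the linear-completeness structure is genuinely used.
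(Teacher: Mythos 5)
Your proposal is correct and follows essentially the same route as the paper's proof: verify \pref{assum:realizability_F} and \pref{assum:completeness_F} by backward induction through the zero-reward Bellman backup with norm/range bookkeeping (including the symmetric split $\psi/2,-\psi/2$ for the difference class), bound the covering numbers of the linear classes, bound $\dqbee=\tilde O(\dlc)$ using that the Bellman residuals of $\Fcal(\{\philc\})-\Fcal(\{\philc\})$ are linear in $\philc_h$ with parameter norm $O(H\sqrt{\dlc})$, and substitute into the full version of \pref{thm:rfolive_q}. The only cosmetic difference is that you bound the BE dimension directly as the distributional Eluder dimension of a bounded linear class, whereas the paper (\pref{prop:lc_qbe}) first bounds the Q-type Bellman rank by $\dlc$ and then applies the Bellman-rank-to-BE-dimension conversion (\pref{prop:br_be}); these rest on the same linearity fact and give the same $\tilde O(\dlc)$ bound.
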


We remark that although the $\tilde O\rbr{\frac{H^5\dlc^3\log(1/\delta)}{\varepsilon^2}}$ sample complexity rate of \citet{zanette2020provably} looks better than us, they need to assume $\veps\le \tilde O(\nu_{\min}/\sqrt {\dlc})$, where $\nu_{\min}$ is their explorability factor. Thus there is an implicit dependence on $1/\nu_{\min}$ in their sample complexity bound and their results are incomparable to us. More related discussions can be found in \pref{app:comparison}.

\begin{proof}
We first verify that $\Fcal(\{\philc\})$ satisfies the assumptions in \pref{thm:rfolive_q}. Here we have that 
$\Fcal(\{\philc\})=\Fcal_0(\{\philc\},H-1)\times\ldots\times \Fcal_{H-1}(\{\philc\},0)$, where 
\[
\Fcal_h(\{\philc\},B_h) = \big\{f_h(x_h,a_h) = \inner{\philc_h(x_h,a_h)}{\theta_h} : \|\theta_h\|_2 \le B_h\sqrt{\dlc},\langle\philc_h(\cdot),\theta_h\rangle\in[-B_h, B_h] \big\}.
\] 

We first verify the realizability assumption (\pref{assum:realizability_F}). For the last level, we have \[
Q^*_{R,H-1}=R_{H-1}+\zero\in\Fcal_{H-1}(\{\philc\},0)+R_{H-1}=\Fcal_{H-1}+R_{H-1}.
\] 
In addition, $Q^*_{R,H-1}=R_{H-1}=\langle\philc_{H-1},\eta_{H-1}\rangle$, where $\|\eta_{H-1}\|_2\le \sqrt {\dlc}$ and $\langle \philc_{H-1}(\cdot),\eta_{H-1} \rangle\in[0,1]$. Then for level $H-2$, we have 
\begin{align*}
    Q^*_{R,H-2}(x_{H-2},a_{H-2}) = {} & R_{H-2}(x_{H-2},a_{H-2})+\EE[\max_{a_{H-1}}Q^*_{R,H-1}(x_{H-1},a_{H-1})\mid x_{H-2},a_{H-2}] \\ 
    = {} & R_{H-2}(x_{H-2},a_{H-2})+\langle \philc_{H-2}(x_{H-2},a_{H-2}),\theta_{H-2}'\rangle , 
\end{align*}
where $\|\theta_{H-2}'\|_2\le \sqrt {\dlc}$ and $\langle \philc_{H-2}(\cdot),\theta_{H-2}'\rangle\in[0, 1]$. Here we apply the property of linear completeness (\pref{def:lin_completeness}). Therefore, we can set $\theta_{H-2}^*=\theta_{H-2}'$ and get $Q_{R,H-2}^*=R_{H-2}+\langle\philc_{H-2},\theta_{H-2}^*\rangle\in\Fcal_{H-2}(\{\philc\},1)+R_{H-2}$. Continuing this induction process backward, we get $Q_{R,h}^*\in\Fcal_h+R_h, \forall h\in[H]$, thus $Q^*_R\in\Fcal+R$.

For completeness assumption (\pref{assum:completeness_F}), again from the property of linear completeness, for any $h\in[H],f_{h+1}\in\Fcal_{h+1},R_{h+1}\in\Rcal_{h+1}$, we have that 
\begin{align*}
(\Tcal_{h}^\zero (f_{h+1}+R_{h+1}))(x_h,a_h) = {} & \EE\sbr{\max_{a_{h+1}} \inner{ \philc_{h+1}(x_{h+1},a_{h+1})}{\theta_{h+1}+\eta_{h+1}} \mid x_h,a_h} \\
= {} & \inner{\philc_h(x_h,a_h)}{\theta_{f+R,h}},
\end{align*}
where $\norm{\theta_{f+R,h}}_2\le (H-h-1)\sqrt {\dlc}$ and $\inner{ \philc_h(\cdot)}{\theta_{f+R,h}} \in[-(H-h-1), H-h-1]$. Thus $\inner{\philc_h}{\theta_{f+R,h}} \in \Fcal_h$, which implies that for any $f_{h+1}\in\Fcal_{h+1},R_{h+1}\in\Rcal_{h+1}$ we have $\Tcal_{h}^\zero (f_{h+1}+R_{h+1})\in \Fcal_h$. Similarly, we can show $\Tcal_{h}^\zero f_{h+1}\in \Fcal_h$. 

Moreover, for any $f_{h+1},f_{h+1}'\in\Fcal_{h+1}$, we can assume that $f_{h+1}=\langle \philc_{h+1},\theta_{h+1} \rangle$ and $f_{h+1}'=\langle \philc_{h+1},\theta_{h+1}' \rangle$, where $\|\theta_{h+1}\|_2,\|\theta_{h+1}'\|_2\le (H-h-2)\sqrt{\dlc}$ and $\langle \philc_{h+1}(\cdot),\theta_{h+1} \rangle,\langle \philc_{h+1}(\cdot),\theta_{h+1}' \rangle\in[-(H-h-2),H-h-2]$. Therefore, we have $f_{h+1}(\cdot) - f'_{h+1}(\cdot) \in [-2(H-h-2), 2(H-h-2)]$. From linear completeness (\pref{def:lin_completeness}), we know that there exists $\theta_h''$ that satisfy $\langle \philc_{h},\theta_{h}'' \rangle=\Tcal^\zero_h\rbr{\langle \philc_{h+1},\theta_{h+1} -\theta_{h+1}'\rangle}$ and $\|\theta_h''\|_2\le 2(H-h-2)\sqrt {\dlc}$ with $\inner{\philc_h(\cdot)}{\theta_h''} \in [-2\rbr{H-h-2}, 2\rbr{H-h-2}]$. Now, choosing $\theta_h=\theta_h''/2$ and $\theta_h'=-\theta_h''/2$, we know that $\langle \philc_{h},\theta_{h} \rangle-\langle \philc_{h},\theta_{h}' \rangle=\Tcal_h^\zero(f_{h+1}-f_{h+1}')$ and $\langle \philc_{h},\theta_{h} \rangle,\langle \philc_{h},\theta_{h}' \rangle\in \Fcal_{h}$. Hence we have $\Tcal^\zero_{h}(f_{h+1}-f_{h+1}')\in \Fcal_{h} - \Fcal_{h}$.

Therefore, from the above discussions, we get that completeness holds.

Invoking \pref{thm:rfolive_q}, the covering number argument (\pref{lem:covering}), and the bound on Bellman Eluder dimension (\pref{prop:lc_qbe}), we know that the output policy is $\varepsilon$-optimal and the sample complexity is
\begin{align*}
&~\tilde	O\rbr{\frac{\rbr{H^7\log\rbr{\Ncal_{\Fcal}\rbr{\varepsilon/512H^2\sqrt{\dlc \iota}}}+H^5\log\rbr{\Ncal_{\Rcal}\rbr{\varepsilon/512H^2\sqrt{\dlc \iota}}}}\dlc^2  \iota^3}{\varepsilon^2}}\\
=&~\tilde O\rbr{\frac{H^8\dlc^3\log(1/\delta)}{\varepsilon^2}} \qedhere.
\end{align*}

\end{proof}

As a final remark, \citet{zanette2020provably} assume $R$ is unknown but linear in $\philc$. In this case, 
we can instead construct $\Foff(R)=\Fcal_0(\{\philc\},H)\times\ldots\times \Fcal_{H-1}(\{\philc\},1)$, where the norm bound and the value range bound in $\Fcal_h(\{\philc\},H-h)$ are larger than that in $\Fcal_h(\{\philc\},H-h-1)$, thus capturing the reward-appended functions. One can easily follow the proof of \pref{thm:rfolive_q} and get the sample complexity result when using this new $\Foff(R)$ in the offline phase of \rfolive. This variant has the same sample complexity as \pref{corr:linear_complete}.

\subsection{Q-type \rfolive for known representation linear MDPs}
\label{app:linear_mdp}

In this part, we instantiate the general theoretical guarantee of Q-type \rfolive (\pref{thm:rfolive_q}) to the linear MDP setting, where the transition dynamics satisfy the low-rank decomposition (\pref{def:lowrank}) and $\philr$ is known. We construct the function class $\Fcal(\{\philr\})$ as $\Fcal(\{\philr\})=\Fcal_0(\{\philr\},H-1)\times\ldots\times \Fcal_{H-1}(\{\philr\},0)$, where 
\[
\Fcal_h(\{\philr\},B_h) = \big\{f_h(x_h,a_h) = \inner{\philr_h(x_h,a_h)}{\theta_h} : \|\theta_h\|_2 \le B_h\sqrt{\dlr},\langle \philr_h(\cdot),\theta_h\rangle \in [-B_h,B_h]  \big\}.
\]
In the following, we state the sample complexity result.

\begin{corollary}[Q-type \rfolive for linear MDPs]
\label{corr:linear_mdp}
Fix $\delta \in (0,1)$. Consider an MDP $M$ that admits a low-rank factorization in \pref{def:lowrank} and the feature $\philr$ is known, and we are given a reward function class $\Rcal$. With probability at least $1-\delta$, for any reward function $R \in \Rcal$, running Q-type version of \rfolive (\pref{alg:rf_olive}) with $\Fcal=\Fcal(\{\philr\})$ outputs a policy $\hat\pi$ that satisfies $v^{\hat \pi}_R \ge v^*_R - \veps$. The required number of episodes is
\begin{align*}
\tilde O\rbr{\frac{\rbr{H^8\dlr^3+H^5\dlr^2\log(\Ncal_{\Rcal}(\veps/512H^2\sqrt{\dlr \iota}))}\log(1/\delta)}{\varepsilon^2}}.
\end{align*}
In \rfolive, we set 
\[
\eactv=\frac{\varepsilon}{2H^2}, \eelim=\frac{\varepsilon}{8H^2\sqrt{\dlr\iota}}, \nactv=\frac{H^6 \iota}{\varepsilon^2}, 
\]
and
\[
\nelim=\frac{(H^7\dlr^2+H^4\dlr\log(\Ncal_{\Rcal}(\eelim/64))) \iota^3}{\varepsilon^2}=\frac{(H^7\dlr^2+H^4\dlr\log(\Ncal_{\Rcal}(\veps/512H^2\sqrt{\dlr \iota}))) \iota^3}{\varepsilon^2},
\]
where $\iota=\cf\log(H\dlr/\delta\varepsilon)$ and $\cf$ is a large enough constant.
\end{corollary}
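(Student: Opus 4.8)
The plan is to instantiate the general guarantee \pref{thm:rfolive_q} with $\Fcal = \Fcal(\{\philr\})$. This needs three ingredients: (i) verifying that $\Fcal(\{\philr\})$ satisfies \pref{assum:realizability_F} and \pref{assum:completeness_F}; (ii) bounding the Q-type Bellman Eluder dimension $\dqbee = \dqbe^\zero(\Fcal(\{\philr\})-\Fcal(\{\philr\}),\Dcal_{\Fcal-\Fcal},\veps/(4H))$; and (iii) bounding the covering number $\Ncal_{\Fcal(\{\philr\})}(\cdot)$. The stated values of $\eactv,\eelim,\nactv,\nelim$ and the sample complexity then follow by reading off \pref{thm:rfolive_q} with the resulting $d_\Fcal$ and $\dqbee$.

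For realizability, in a linear MDP (\pref{def:lowrank}) with known $\philr$ we have $Q^*_{R,h}(x,a) = R_h(x,a) + \inner{\philr_h(x,a)}{w^*_h}$ with $w^*_h = \int V^*_{R,h+1}(x')\mulr_h(x')\,dx'$; since $V^*_{R,h+1}\in[0,H-h-1]$, the vector $w^*_h$ obeys $\|w^*_h\|_2 \le (H-h-1)\sqrt{\dlr}$ and $\inner{\philr_h(\cdot)}{w^*_h}\in[0,H-h-1]$, so $Q^*_{R,h}-R_h \in \Fcal_h(\{\philr\},H-h-1)$, i.e.\ $Q^*_{R,h}\in\Fcal_h(\{\philr\},H-h-1)+R_h$. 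For completeness, the crucial fact is that $\Tcal^\zero_h$ maps \emph{any} bounded function to a linear function of $\philr_h$: $(\Tcal^\zero_h g)(x,a) = \inner{\philr_h(x,a)}{\int V_g(x')\mulr_h(x')\,dx'}$. I would check the three inclusions by tracking norms and value ranges: (a) for $f_{h+1}\in\Fcal_{h+1}(\{\philr\},H-h-2)$ one has $V_{f_{h+1}}\in[-(H-h-2),H-h-2]$, hence $\Tcal^\zero_h f_{h+1}\in\Fcal_h(\{\philr\},H-h-1)$; (b) adding a reward $R_{h+1}\in[0,1]$ inflates the relevant bound only by $1$, still landing in $\Fcal_h(\{\philr\},H-h-1)$; (c) for a difference $f_{h+1}-f'_{h+1}$ with range in $[-2(H-h-2),2(H-h-2)]$, the linear function $\Tcal^\zero_h(f_{h+1}-f'_{h+1}) = \inner{\philr_h}{u}$ has $\|u\|_2\le 2(H-h-2)\sqrt{\dlr}$ and splits as $\inner{\philr_h}{u/2}-\inner{\philr_h}{-u/2}$, each summand respecting the norm \emph{and} range constraints of $\Fcal_h(\{\philr\},H-h-1)$, so the difference lies in $\Fcal_h-\Fcal_h$. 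The boundary step $h=H-1$ is trivial since $\Fcal_H=\{\zero\}$ and $B_{H-1}=0$.

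For the Bellman Eluder dimension, since $\philr$ is known and $\Tcal^\zero_h$ preserves linearity in $\philr_h$, every Bellman difference $(I-\Tcal^\zero_h)f$ for $f\in\Fcal(\{\philr\})-\Fcal(\{\philr\})$ equals $\inner{\philr_h}{\beta_h}$ for some $\beta_h$ of norm $\poly(H)\sqrt{\dlr}$, with $\|\philr_h(\cdot)\|_2\le1$; invoking the standard bound on the distributional Eluder dimension of a class of bounded linear functionals over a fixed feature map (analogous to the linear completeness and low-rank propositions) gives $\dqbee = \tilde{O}(\dlr)$. For the covering number, each $\Fcal_h(\{\philr\},B_h)$ is parametrized by $\theta_h$ with $\|\theta_h\|_2\le B_h\sqrt{\dlr}\le H\sqrt{\dlr}$, hence admits an $\veps$-cover of size $(1+2H\sqrt{\dlr}/\veps)^{\dlr}$; taking products over the $H$ steps yields $\log\Ncal_{\Fcal(\{\philr\})}(\veps) = \tilde{O}(\dlr H\log(1/\veps))$, i.e.\ $d_\Fcal = \tilde{O}(\dlr H)$. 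Substituting $d_\Fcal=\tilde{O}(\dlr H)$ and $\dqbee=\tilde{O}(\dlr)$ into $\tilde{O}\big((H^7 d_\Fcal + H^5 d_\Rcal)\dqbee^2\log(1/\delta)/\veps^2\big)$ gives the advertised $\tilde{O}\big((H^8\dlr^3 + H^5\dlr^2\log\Ncal_\Rcal(\cdot))\log(1/\delta)/\veps^2\big)$.

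I expect the delicate part to be the completeness bookkeeping: one must keep the norm and range estimates consistent so that each Bellman backup lands in the exactly-scaled class $\Fcal_h(\{\philr\},H-h-1)$ rather than an oversized version, and the difference backup must be genuinely written as an element of $\Fcal_h-\Fcal_h$ with both summands obeying the norm \emph{and} value-range constraints in the definition of $\Fcal_h(\{\philr\},\cdot)$. The Bellman-Eluder-dimension and covering-number steps are routine consequences of known linear-algebraic estimates; and the $K$-free rate (in contrast to the V-type low-rank corollary) is precisely because $\Tcal^\zero_h$ applied to \emph{any} bounded function is linear in the known $\philr_h$, so Q-type \rfolive applies directly without resorting to importance sampling over actions.
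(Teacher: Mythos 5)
Your proposal is correct and follows essentially the same route as the paper: the paper likewise verifies \pref{assum:realizability_F} and \pref{assum:completeness_F} exactly as you do (using the fact from \pref{lem:linear_mdp} that $\Tcal^\zero_h$ applied to any bounded function is linear in $\philr_h$, including the $u/2$, $-u/2$ split for the difference-class backup), and then plugs the linear-class covering bound (\pref{lem:covering}) and the $\tilde O(\dlr)$ Q-type Bellman Eluder dimension bound into \pref{thm:rfolive_q}. The only cosmetic difference is that the paper gets $\dqbee=\tilde O(\dlr)$ by exhibiting a rank-$\dlr$ Q-type Bellman factorization and invoking the Bellman-rank-to-BE-dimension conversion (\pref{prop:linear_qbe} via \pref{prop:br_be}), whereas you cite the eluder/DE dimension of a bounded linear class over the known feature map directly---these amount to the same linear-algebraic fact.
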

\paragraph{Remark}
If we consider the entire linear reward class $\Rcal=\Rcal_0\times\ldots\times\Rcal_{H-1}$, where $\Rcal_h=\{\langle\philr,\eta_h\rangle:\|\eta_h\|_2\le\sqrt{\dlr},\langle\philr_h,\eta_h\rangle\in(\Xcal\times\Acal\rightarrow[0,1])\}$, 
then invoking \pref{corr:linear_mdp} and applying the similar covering argument of \pref{lem:covering} on the entire linear reward function class $\Rcal$ yields the sample complexity
\begin{align*}
\tilde O\rbr{\frac{H^8\dlr^3\log(1/\delta)}{\varepsilon^2}}.
\end{align*}

Recently, \citet{wagenmaker2022reward} improved the sharpest rate in the reward-free linear MDPs to $\tilde O\rbr{\frac{\dlr H^5(\dlr+\log(1/\delta))}{\veps^2}+\frac{\dlr^{9/2}H^6\log^{7/2}(1/\delta)}{\veps}}$. Although the focus of our work is not to obtain the optimal rate, the sample complexity bound of \rfolive is also independent of $K$ and not much worse than the current state of the art. In the reward-aware setting, \golf has a sharper rate than the subroutine \olive under the completeness type assumption \citep{jin2021bellman}. Since in \rfolive we only collect data when running a single (zero) reward \olive during the online phase and completeness (\pref{assum:completeness_F}) is satisfied in our paper, we believe that there also exists a reward-free version of \golf (by running \golf with zero reward function in the online phase and performing function elimination in the offline phase) that can potentially improve an $H\dqbee$ factor compared with \rfolive, thus matching the optimal $\dlr$ dependence in linear MDPs.

\begin{proof}
Similar as the proof of \pref{corr:low_rank}, we can verify that \pref{assum:realizability_F} and \pref{assum:completeness_F} hold. Invoking \pref{thm:rfolive_q} and noticing that the covering number argument (\pref{lem:covering}) and the bound on Q-type Bellman Eluder dimension (\pref{prop:linear_qbe}) completes the proof.
\end{proof}

\section{V-type \rfolive results}
\label{app:rfolive_v}

In this section, we present the results related to V-type \rfolive. In \pref{app:olive_v}, we provide the theoretical guarantee of V-type \olive \citep{jiang2017contextual,jin2021bellman} for completeness. In \pref{app:general_v}, we show the detailed proof of the sample complexity bound of  V-type \rfolive (\pref{thm:rfolive_v}). In \pref{app:low_rank}, we discuss the instantiation of V-type \rfolive to low-rank MDPs. 

\subsection{V-type \olive}
\label{app:olive_v}
First, we state the sample complexity of V-type \olive. Similar as Q-type \olive, since we consider the uniformly bounded reward setting ($0\le r_h\le 1$) instead of bounded total reward setting ($\forall h\in[H], r_h\ge 0$ and $\sum_{h=0}^{H-1} r_h\le 1$), we need to pay an additional $H^2$ dependency in $\nactv$ and $\nelim$.

\begin{proposition}[Sample complexity of V-type \olive, modification of Theorem 23 in \citet{jin2021bellman}]
\label{prop:olive_v}
Assume ($\frac{\varepsilon}{128H\sqrt{\dvbee}}=\eelim/8$) single-reward approximate realizability holds for $\Fcal$ in \pref{assum:realizability_olive} and $\Fcal$ is finite. If we set 
$$\eactv=\frac{\varepsilon}{4H},\ \eelim=\frac{\varepsilon}{16H\sqrt{\dvbee}}, \nactv=\frac{H^4 \iota}{\varepsilon^2},\text{ and } \nelim=\frac{H^4\dvbee K\log(|\Fcal|) \iota}{\varepsilon^2}
$$
where $\dvbee=\dvbe^R\big(\Fcal,\Dcal_{\Fcal},\varepsilon/8H\big)$ and  $\iota=\cc\log(H \dvbee K/\delta\varepsilon)$, then with probability at least $1-\delta$, V-type \olive (Algorithm 4 in \citet{jin2021bellman}) with $\Fcal$ will output an $\varepsilon$-optimal policy (under a single reward $R$) using at most  $ O(\dvbee H(\nactv +\nelim))$ episodes. Here $\cc$ is a large enough constant.
\end{proposition}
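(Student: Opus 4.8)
The plan is to reduce to the analysis of V-type \olive in \citet[Theorem 23]{jin2021bellman}, re-deriving it under our per-step reward normalization so that the extra $H^2$ factor in $\nactv$ and $\nelim$ is accounted for. No new conceptual ingredient is needed, so I only sketch the four pieces and explain how the stated parameter choices make them fit together.

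\emph{Bounding the iteration count.} First I would show the algorithm halts after at most $O(\dvbee H)$ iterations. On any non-terminating iteration $t$, the selected level $h^t$ has on-policy V-type Bellman error above $\eactv$ (true, up to the concentration slack below), so the state-wise Bellman difference $x \mapsto (f^t_{h^t} - \Tcal^R_{h^t} f^t_{h^t+1})(x,\pi_{f^t}(x))$ has magnitude $\gtrsim \eactv$ in expectation under $d^{\pi^t}_{h^t}$; meanwhile the elimination at iteration $t$ certifies that every function surviving into a later iteration $t' > t$ has small estimated Bellman error on $d^{\pi^t}_{h^t}$. Pigeonholing over $h \in [H]$ and invoking the distributional Eluder dimension (\pref{def:DE}) of $(I-\Tcal^R_h)V_\Fcal$ with respect to $\Dcal_\Fcal$, no fixed level can be selected more than $\dvbee$ times at scale $\Theta(\eactv)$, giving the $O(\dvbee H)$ bound.

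\emph{Concentration, survival, and the optimality guarantee.} Next I would establish two uniform-deviation bounds. The activation estimate \cref{eq:on_policy_Qberr} averages $\nactv$ i.i.d.\ terms in $[-H,H]$, so Hoeffding plus a union bound over the $O(\dvbee H)$ iterations and $h \in [H]$ gives deviation $\le \eactv/2$ with $\nactv = H^4\iota/\veps^2$, the extra $H^2$ over \citet{jin2021bellman} being the squared value range. The elimination estimate \cref{eq:est_on_policy_Vberr} is importance-weighted against $\mathrm{Unif}(\Acal)$, so its terms are $O(HK)$-bounded with second moment $O(H^2K)$; a Bernstein bound with a union bound over the finite class $\Fcal$ (equivalently $\Zon$) and over iterations yields deviation $\le \eelim/8$ with $\nelim = H^4\dvbee K\log|\Fcal|\,\iota/\veps^2$, the $\dvbee$ being absorbed since $\eelim \propto 1/\sqrt{\dvbee}$. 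Because $\EcalV(Q^*_R,\pi,h) = 0$ for all $\pi,h$ and the hypothesis supplies $Q^{\mathrm c}_R \in \Fcal$ with $\ell_\infty$ error $\le \eelim/8$, the true V-type Bellman error of $Q^{\mathrm c}_R$ on any distribution is $O(\eelim)$, so $Q^{\mathrm c}_R$ is never eliminated and the optimistic $f^t$ always satisfies $V_{f^t}(x_0) \ge V_{Q^{\mathrm c}_R}(x_0) \ge v^*_R - O(\eelim)$. When the loop exits at iteration $T$, the policy-loss decomposition of \citet{jiang2017contextual} gives $V_{f^T}(x_0) - v^{\pi^T}_R = \sum_h \EcalV(f^T,\pi^T,h) \le H\eactv + H\cdot(\text{slack}) \le \veps/2 + \veps/2$, which with optimism yields $v^{\pi^T}_R \ge v^*_R - \veps$. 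Multiplying $O(\dvbee H)$ iterations by $\nactv + \nelim$ episodes each gives the stated complexity.

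\emph{Main obstacle.} There is no genuine obstacle beyond bookkeeping: the proposition is explicitly a re-parametrization of \citet[Theorem 23]{jin2021bellman}. The two points requiring care are (i) propagating the per-step reward scaling so that the value range contributes exactly the advertised powers of $H$ and no more, and (ii) checking that the approximate-realizability tolerance $\eelim/8$, the elimination threshold $\eelim$, and the concentration slack $\eelim/8$ are mutually consistent, so that $Q^{\mathrm c}_R$ provably survives while every function with $\Omega(\veps/H)$ per-step Bellman error on some $d^{\pi^t}_{h^t}$ is provably eliminated.
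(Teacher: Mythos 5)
Your proposal is correct and follows essentially the same route as the paper: the paper proves this proposition simply by citing Theorem 23 of \citet{jin2021bellman} and noting that the uniformly bounded per-step reward (value range $H$ rather than $1$) forces an extra $H^2$ factor in $\nactv$ and $\nelim$ through the concentration bounds, which is exactly the re-parametrized OLIVE analysis (iteration bound via DE dimension, Hoeffding/Bernstein concentration, survival of the approximately realizable $Q^{\mathrm c}_R$, and the policy-loss decomposition) that you sketch.
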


\subsection{Proof of V-type \rfolive under general function approximation}
\label{app:general_v}
In this part, we first provide the general statement of \pref{thm:rfolive_v} and then show the detailed proof.

\begin{theorem*}[Full version of \pref{thm:rfolive_v}]
Fix $\delta \in (0,1)$. Given a reward class $\Rcal$ and a function class $\Fcal$ that satisfies \pref{assum:realizability_F} and \pref{assum:completeness_F}, with probability at least $1-\delta$, for any $R\in\Rcal$, V-type \rfolive (\pref{alg:rf_olive}) with $\Fcal$ outputs a policy $\hat\pi$ that satisfies $v^{\hat \pi}_R \ge v^*_R - \veps$. The required number of episodes is
\begin{align*}
	O\rbr{\frac{\rbr{H^7\log\rbr{\Ncal_{\Fcal}\rbr{\varepsilon/2048H^2\sqrt{\dvbee}}}+H^5\log\rbr{\Ncal_{\Rcal}\rbr{\varepsilon/2048H^2\sqrt{\dvbee}}}}\dvbee^2K \iota}{\varepsilon^2}}.
\end{align*}
In \rfolive, we set
\[\eactv=\frac{\varepsilon}{8H^2},\ \eelim=\frac{\varepsilon}{32H^2\sqrt{\dvbee}},\ \nactv=\frac{H^6 \iota}{\varepsilon^2},
\] and 
\begin{align*}
\nelim=&~\frac{(H^6\log\rbr{\Ncal_{\Fcal}(\eelim/64)}+H^4\log\rbr{\Ncal_{\Rcal}(\eelim/64)})\dvbee K \iota}{\varepsilon^2}\\
=&~
\frac{\rbr{H^6\log\rbr{\Ncal_{\Fcal}\rbr{\varepsilon/2048H^2\sqrt{\dvbee}}}+H^4\log\rbr{\Ncal_{\Rcal}\rbr{\varepsilon/2048H^2\sqrt{\dvbee}}}}\dvbee K \iota}{\varepsilon^2},
\end{align*}
where $\dvbee=\dvbe^\zero(\Fcal-\Fcal,\Dcal_{\Fcal-\Fcal},\veps/(8H))$, $\iota=\cd\log(H\dvbee K/\delta\varepsilon)$ and $\cd$ is a large enough constant.
\end{theorem*}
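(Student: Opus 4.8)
The plan is to adapt, essentially verbatim, the proof of \pref{thm:rfolive_q} in \pref{app:general_q}, the three structural changes being: (a) the online phase now runs over the discretized class $\Zon$ (an $(\eelim/64)$-cover of $\Fon$) rather than $\Fon$ itself; (b) all V-type Bellman-error estimates use importance sampling against the uniform action, so concentration uses Bernstein's inequality, and this is the sole source of the extra $K$ factor; and (c) offline elimination of a bad $g$ will be witnessed by a policy drawn from $\Pon$ rather than by a roll-in policy $\pi^t$. First I would note that the online phase of V-type \rfolive is exactly V-type \olive (Algorithm 4 of \citet{jin2021bellman}) invoked on the finite class $\Zon$ with zero reward and the stated thresholds. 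Since $\zero\in\Fon$ and $\Zon$ covers $\Fon$, the approximate-realizability hypothesis of \pref{prop:olive_v} holds, so that proposition delivers (i) the claimed episode count --- all episodes are collected online --- and, on an event of probability $1-\delta/2$, (ii) termination within $\dvbee H+1$ iterations, (iii) $V_f(x_0)<2H\eactv$ for every $f\in\Zon$ surviving at termination (the policy-loss decomposition of \citet{jiang2017contextual} together with $V^{\pi_f}_{\zero,0}(x_0)=0$), and (iv) the elimination-step concentration $|\hEcalVzero(f,\pi^t,h^t)-\EcalVzero(f,\pi^t,h^t)|$ small for all $f\in\Zon$ and all $t\le \dvbee H+1$.

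Next I would establish an offline-concentration statement: on a further probability-$\delta/2$ event, $|\hat\Ecal^R(g,\pi^t,\pi',h^t)-\Ecal^R(g,\pi^t,\pi',h^t)|\le\eelim/4$ holds simultaneously for all $t\in[T]$, all $R\in\Rcal$, all $g\in\Foff(R)$, and all $\pi'\in\Pon$. This is the chaining argument from the Q-type proof with one additional cover: fix an $(\eelim/64)$-cover $\overline\Rcal$ of $\Rcal$ and, for each $R\in\overline\Rcal$, an $(\eelim/64)$-cover of $\Foff(R)$; apply Bernstein to the IS estimator for every triple in $[T]\times(\text{cover of }\Foff(R))\times\Pon$ (using $|\Pon|\le|\Zon|\le\Ncal_{\Fon}(\eelim/64)$ and that the IS weight has range $K$ and conditional second moment $O(K)$); union-bound; then pass from the covers to $\Fcal$ and $\Rcal$ using that $\Ecal^R(g,\pi,\pi',h)$ is $O(H)$-Lipschitz in $g$ and in $R$ under $\sigma$. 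The stated $\nelim$ is exactly what makes every one of these Bernstein bounds $<\eelim/8$ after charging $\log\Ncal_\Fcal(\eelim/64)$, $\log\Ncal_\Rcal(\eelim/64)$, and $\log|\Zon|$.

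Conditioning on all these events (failure probability $\delta$ in total), I would prove the V-type analogues of the three claims of \pref{app:general_q}. Claim 2 ($Q^*_R\in\Foff(R)$ and it survives offline) is immediate from \pref{assum:realizability_F}, the identity $\Ecal^R(Q^*_R,\cdot,\cdot,\cdot)\equiv 0$, and the offline bound; Claim 3 (the optimistic survivor is $\veps$-optimal) follows from Claim 1 via the policy-loss decomposition applied with the on-policy errors $\EcalV(g,\pi_g,h)$, exactly as in the Q-type case. For Claim 1 --- any $g=f+R\in\Foff(R)$ with $|\EcalV(g,\pi_g,h)|\ge\veps/H$ for some $h$ is eliminated offline --- I would reuse the surrogate from the Q-type proof: $\tilde f_h=f_h-\Tcal_h^\zero g_{h+1}$ (or its negation when the error is $\le-\veps/H$), $\tilde f_{h'}=\Tcal_{h'}^\zero\tilde f_{h'+1}$ for $h'<h$, and $\tilde f_{h'}\equiv\zero$ for $h'>h$; \pref{assum:completeness_F} gives $\tilde f\in\Fon$, and since $\tilde f_{h+1}\equiv\zero$ one has, for \emph{every} policy $\pi'$, $\Ecal^R(g,\pi,\pi',h)=\EE[\tilde f_h(x_h,\pi'(x_h))\mid a_{0:h-1}\sim\pi]$; in particular $\Ecal^R(g,\pi,\pi_{\tilde f},h)=\EcalVzero(\tilde f,\pi,h)$ and, taking $\pi=\pi_g$, $V_{\tilde f}(x_0)\ge\EcalV(g,\pi_g,h)\ge\veps/H$. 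Passing to the cover point $\tilde f^{\mathrm c}\in\Zon$ with $\sigma(\tilde f^{\mathrm c},\tilde f)\le\eelim/64$: then $V_{\tilde f^{\mathrm c}}(x_0)\ge\veps/H-\eelim/64>2H\eactv$, so $\tilde f^{\mathrm c}$ is eliminated online at some $(t,h^t)$; because $(I-\Tcal_{h'}^\zero)\tilde f\equiv 0$ for all $h'\ne h$ as a function of $(x,a)$, the perturbation $\tilde f\to\tilde f^{\mathrm c}$ keeps $|\EcalVzero(\tilde f^{\mathrm c},\pi^t,h')|\le\eelim/32<\eelim$ there, so the online concentration bound forces $h^t=h$; hence $|\EcalVzero(\tilde f^{\mathrm c},\pi^t,h)|>\eelim-(\text{online conc.})$, and since $\|\tilde f^{\mathrm c}_h-\tilde f_h\|_\infty\le\eelim/64$ and $\|\Tcal_h^\zero\tilde f^{\mathrm c}_{h+1}\|_\infty=\|\Tcal_h^\zero\tilde f^{\mathrm c}_{h+1}-\Tcal_h^\zero\zero\|_\infty\le\eelim/64$ we get $|\Ecal^R(g,\pi^t,\pi_{\tilde f^{\mathrm c}},h)-\EcalVzero(\tilde f^{\mathrm c},\pi^t,h)|\le\eelim/32$; as $\pi_{\tilde f^{\mathrm c}}\in\Pon$, the offline bound then yields $|\hat\Ecal^R(g,\pi^t,\pi_{\tilde f^{\mathrm c}},h)|>\eelim/2$, so $g$ is eliminated.

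The step I expect to be the main obstacle is the discretization bookkeeping inside Claim 1: one must simultaneously (i) keep $V_{\tilde f^{\mathrm c}}(x_0)$ strictly above the online termination threshold, (ii) guarantee that the online elimination of $\tilde f^{\mathrm c}$ can occur \emph{only} at step $h$, and (iii) control the drift between $\EcalVzero(\tilde f^{\mathrm c},\pi^t,h)$ and the offline witness $\Ecal^R(g,\pi^t,\pi_{\tilde f^{\mathrm c}},h)$ --- all against the narrow budget $\eelim/2$, which is why the cover radius must be $\eelim/64$ and why the V-type thresholds carry extra $H$ factors relative to \pref{prop:olive_v}. A secondary care point is tracking the $K$ factor through the Bernstein bounds so that it enters $\nelim$ linearly rather than quadratically.
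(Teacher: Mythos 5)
Your proposal is correct and follows essentially the same route as the paper's proof: the online phase is identified with V-type \olive on the cover $\Zon$ under zero reward, the offline Bernstein-plus-covering concentration over $[T]\times\Foff(R)\times\Pon$ gives the $\eelim/4$ bound, and Claim 1 is proved via the same surrogate $\tilde f$, its cover point $\tilde f^{\mathrm c}\in\Zon$, the elimination-only-at-level-$h$ argument, and the witness policy $\pi_{\tilde f^{\mathrm c}}\in\Pon$, with drift bookkeeping quantitatively matching the paper's $2\cdot\eelim/64$ budget. The only differences are cosmetic (how approximate realizability of $\Zon$ is justified and how the failure probability is split), so no gaps to flag.
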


\begin{proof}
This proof follows the similar structure as the proof of \pref{thm:rfolive_q}. The major difference is now we consider a discretized function class $\Zon$ in the online phase and consider a class of policy $\Pon$ in the offline elimination.

When we construct $\Zon$ (an $(\ecover)$-cover of $\Fon$), w.l.o.g, we can assume $\zero\in\Zon$, therefore the approximate realizability (\pref{assum:realizability_olive}) holds. From the online phase of V-type \rfolive (\pref{alg:rf_olive}), we can see that this phase is equivalent to running V-type \rfolive (Algorithm 4 in \citet{jin2021bellman}) with the input function class $\Zon$, the specified parameters $\eactv,\eelim,\nelim,\nactv$, and under the reward function $R=\zero$. Then the sample complexity is immediately from our specified values of $\eactv,\eelim,\nactv,\nelim$ and \pref{prop:olive_v} as we only collect samples in the online phase. Notice that we have the bound  $\log\rbr{|\Zon|}\le2\log\rbr{\Ncal_\Fcal(\ecover)}$ and such a constant 2 is absorbed by large enough $\cd$. Therefore, it remains to show that the algorithm can indeed output an $\varepsilon$-optimal policy with probability $1-\delta$ in the offline phase. We will show the following three claims hold with probability at least $1-\delta$.

    \paragraph{Claim 1} For any $g\in \Foff(R)$, if $\exists h\in[H]$, s.t. $|\EcalV(g,\pi_g,h)|\ge\varepsilon/H$, then it will be eliminated in the offline phase. 
    \paragraph{Claim 2} $Q^*_R\in\Foff(R)$ and $Q^*_R$ will not be eliminated in the offline phase.
    \paragraph{Claim 3} At the end of the offline phase, picking the optimistic function from the survived value functions gives us $\varepsilon$-optimal policy. \vspace{1em}

Before showing these three claims, we first state show properties from the online phase of V-type \rfolive and the concentration results in the offline phase. 

\paragraph{Properties from the online phase of V-type \rfolive} From the equivalence between the online phase of V-type \rfolive (\pref{alg:rf_olive}) and V-type \olive (Algorithm 4 in \citet{jin2021bellman}) with reward $\zero$, we know that with probability at least $1-\delta/4$, the online phase terminates within $\dvbee H+1$ iterations. In addition, with probability at least $1-\delta/4$, the following properties (\cref{eq:rfolive_v_stop} and \cref{eq:rfolive_v_conc}) hold for the first $\dvbee H+1$ iterations:

(i) When the online phase exists at iteration $T$ in \pref{line:set_T} (i.e., the elimination procedure is not activated in \rfolive), for any $f\in\Fcal^T$, it predicts no more than $\veps/(2H)$ value:
\begin{align}
\label{eq:rfolive_v_stop}
V_{f}(x_0)\le V_{f^T}(x_0)=V_{f^T}(x_0)-V^{\pi_{f^T}}_{\zero,0}(x_0)=\sum_{h=0}^{H-1}\EcalV(f^T,\pi^T,h)<2H\eactv<\varepsilon/(2H).
\end{align}
The first equality is due to any policy evaluation has value 0 under under the reward function $\zero$. The second equality is due to the policy loss decomposition in \citet{jiang2017contextual}. The second inequality is adapted from the ``concentration in the activation procedure'' part of the proof for Theorem 23 in \citet{jin2021bellman}.

(ii) For $T\le \dvbee H+1$, the concentration argument holds for any $f\in\Zon$ and $t\in [T]$:
\begin{align}
\label{eq:rfolive_v_conc}
\abr{\hEcalVzero(f, \pi^t, h^t)  - \EcalVzero(f, \pi^t, h^t)} < \eelim/8.
\end{align}
This is from the ``concentration in the elimination procedure'' step of the proof for Theorem 23 in \citet{jin2021bellman} and we adapt it with our parameters. 

\paragraph{Concentration results in the offline phase}
Let $\overline\Rcal$ be an $(\eelim/64)$-cover of $\Rcal$. For every $R\in\Rcal$, let $ R^{\mathrm{c}}=\argmin_{R'\in\overline\Rcal}\max_{h\in[H]}\|R_h-R'_h\|_\infty$. First consider any fixed $\pi'\in\Pon$ and $R\in\overline\Rcal$. Let $\Zcal(R)$ be an ($\eelim/64$)-cover of $\Foff(R)$ with cardinality $\Ncal_{\Foff(R)}(\eelim/64)=\Ncal_{\Fcal}(\eelim/64)$. 
For every $g\in\Foff(R)$, let $ g^{\mathrm c}=\argmin_{g'\in\Zcal(R)}\max_{h\in[H]}\|g_h-g'_h\|_\infty$. Then we consider any fixed $(t,g')\in[T]\times\Zcal(R)$ and calculate the upper bound of the second moment for $g$
\begin{align*}
\frac{\one\sbr{a_{h^t}^{(i)} = \pi'_{h^t}(x_{h^t}^{(i)})}}{1/K} \rbr{g'_{h^t}\rbr{x_{h^t}^{(i)},a_{h^t}^{(i)}} - r_{h^t}^{(i)} - V_{g'}\rbr{x_{h^t+1}^{(i)}}}.
\end{align*}
Let $y(x_{h^t},a_{h^t},r_{h^t},x_{h^t+1})=g'_{h^t}\rbr{x_{h^t}^{(i)},a_{h^t}^{(i)}} - r_{h^t}^{(i)} - V_{g'}\rbr{x_{h^t+1}^{(i)}} 
\subseteq[-2H,2H]$, then we have
\begin{equation*}
\begin{aligned}
	& \EE\sbr{\rbr{K\one\sbr{a_{h^t}^{(i)} = \pi'_{h^t}\rbr{x_{h^t}^{(i)}}}y(x_{h^t},a_{h^t},r_{h^t},x_{h^t+1})}^2\mid x_{h^t}\sim \pi^t, a_{h^t} \sim \unif(\Acal)} & \\
	\le {} & 4H^2K^2 \EE\sbr{\one\sbr{a_{h^t}^{(i)} = \pi'_{h^t}\rbr{x_{h^t}^{(i)}}}\mid x_{h^t}\sim \pi^t, a_{h^t}\sim\unif(\Acal)} = 4H^2K. &
\end{aligned}
\end{equation*}
Applying Bernstein's inequality and noticing the variance of the random variable is upper bounded by the second moment, with probability at least $1-\frac{\delta}{2T\Ncal_{\Fcal}(\eelim/64)\Ncal_{\Rcal}(\eelim/64)|\Pon|}$ we have
\begin{align*}
&~\abr{\hat \Ecal^R(g',\pi^t, \pi', h^t)-\Ecal^R(g',\pi^t, \pi', h^t)}\\
\le&\sqrt{\frac{4H^2K\log(4T\Ncal_{\Fcal}(\eelim/64)\Ncal_{\Rcal}(\eelim/64)|\Pon|/\delta)}{\nelim}}\\
&~\quad +\frac{4HK \log(4T\Ncal_{\Fcal}(\eelim/64)\Ncal_{\Rcal}(\eelim/64)|\Pon|/\delta)}{3\nelim}\\
<&~\frac{\eelim}{8}.
\end{align*}
The second inequality follows from $\eelim=\veps/\rbr{32H^2\sqrt{\dvbee}}$, $\iota=\cd\log(H\dvbee K/\delta\varepsilon),$ and
\begin{align*}
\nelim=&~\frac{(H^6\log\rbr{\Ncal_{\Fcal}(\eelim/64)}+H^4\log\rbr{\Ncal_{\Rcal}(\eelim/64)})\dvbee K\cdot \iota}{\varepsilon^2}
\end{align*}
with $\cd$ in $\iota$ being chosen large enough. Here we also notice that $|\Pon|=|\Zon|$ and $\log\rbr{|\Zon|}=\log\rbr{\Ncal_{\Fon}(\eelim/64)}\le2\log\rbr{\Ncal_{\Fcal}(\eelim/64)}$.

Union bounding over $(t,g')\in[T]\times\Zcal(R)$, with probability at least $1-\frac{\delta}{2\Ncal_{\Rcal}(\eelim/64)|\Pon|}$, we have that for any fixed $\pi'\in\Pon,R\in\overline\Rcal$ and all $g'\in\Zcal(R)$, $t\in[T]$ 
\begin{align*}
\abr{\hat \Ecal^R(g',\pi^t, \pi', h^t) - \Ecal^R(g', \pi^t, \pi',h^t)} <\eelim/8.
\end{align*}
Union bounding over $\pi'\in\Pon, R\in\overline\Rcal$, we have that with probability at least $1-\delta/2$, for all $R\in\overline\Rcal,\pi'\in\Pon,g\in\Foff(R),t\in[T]$,
\begin{align*}
\abr{\hat \Ecal^R \rbr{g, \pi^t, \pi,h^t}  - \Ecal^R \rbr{g, \pi^t, \pi',h^t}}  < \eelim/8. 
\end{align*}
Therefore, with probability at least $1-\delta/2$, for all $R\in\Rcal,\pi'\in\Pon,g\in\Foff(R),t\in[T]$, we have
\begin{align}
\label{eq:conc_offline_v}
	& \abr{\hat \Ecal^R \rbr{g, \pi^t, \pi',h^t}  - \Ecal^R \rbr{g, \pi^t, \pi',h^t}} & 
	\notag \\
	\le {} & \abr{\hat \Ecal^R \rbr{g, \pi^t, \pi',h^t} - \hat \Ecal^{R^{\mathrm{c}}} \rbr{g, \pi^t, \pi',h^t}} + \abr{\hat \Ecal^{R^{\mathrm{c}}} \rbr{g, \pi^t, \pi',h^t}  - \Ecal^{R^{\mathrm{c}}} \rbr{g, \pi^t, \pi',h^t}} & 
	\notag \\
	{} & + \abr{\Ecal^{R^{\mathrm{c}}}\rbr{g, \pi^t, \pi',h^t}  - \Ecal^R \rbr{g, \pi^t, \pi',h^t} } & 
	\notag \\
	\le {} & \eelim/64 
	+\eelim/8
	+ \eelim/64 & 
	\notag \\
	< {} & \eelim/4. &
\end{align}

All statements in our subsequent proof are under the event that all the different high-probability events (the online phase terminates within $\dvbee H+1$ iterations, and \cref{eq:rfolive_v_stop}, \cref{eq:rfolive_v_conc}, \cref{eq:conc_offline_v} hold for the first $\dvbee H+1$ iterations) discussed above hold with a total failure probability of $\delta$. 

\paragraph{Proof of Claim 1}  We consider any $g\in\Foff(R)$ that satisfies $\exists h\in[H]$, such that $|\EcalV(g,\pi_g,h)|\ge\varepsilon/H$. Recall the definition of $\Foff(R)$, we know that $g$ can be written as $g=(g_0,\ldots,g_{H-1})=(f_0+R_0,\ldots,f_{H-1} + R_{H-1})=(f_0 +R_0,\ldots,f_{H-1} + R_{H-1})$.
We will discuss the positive average Bellman error and the negative average Bellman error case separately.

\paragraph{Case (i) of Claim 1} $\EcalV(g,\pi_g,h)=\EE[g_{h}(x_h,a_h) - R_h(x_h,a_h) - V_g(x_{h+1}) \mid a_{0:h} \sim \pi_g] \ge \varepsilon/H$.

Since $\EcalV(g,\pi_g,h)=\EcalQ(g,\pi_g,h)$,
similar as the proof of \pref{thm:rfolive_q}, we know that
\begin{align*}
\varepsilon/H 
\le\EE[\tilde f_h(x_h,a_h)\mid a_{0:h}\sim\pi_g]. \tag{$\tilde f_h\defeq f_h- \Tcal^\zero_h g_{h+1}$}
\end{align*}
Same as in the proof of \pref{thm:rfolive_q}, here we construct a function $\tilde f$ as
\begin{align*}
\tilde f_{h'}(x_{h'},a_{h'})\hspace{-.15em}=\hspace{-.15em}\left\{
\begin{aligned}
&(\Tcal_{h'}^\zero \tilde f_{h'+1})(x_{h'},a_{h'})\hspace{-.15em}=\hspace{-.15em}\EE[\max_{a}\tilde f_{h'+1}(x_{h'+1},a) \mid x_{h'},a_{h'}] & & 0\le h'\le h-1 \\
&f_h(x_h,a_h)- (\Tcal^\zero_h g_{h+1})(x_{h},a_{h})  & & h'=h \\
&0 & &h+1\le h'\le H-1.
\end{aligned}
\right.
\end{align*}
From the definition of V-type average Bellman error and the construction, we know that for any policy $\pi$ we can translate the zero reward V-type average Bellman error of a function $\tilde f\in\Fon$ with roll-in policy $\pi$ to the average Bellman error under $R$ for a function $g\in\Foff(R)$ with roll-in policy $\pi_{0:h-1} \circ \pi_{\tilde f,h}$ (\pref{def:abe}) as the following
\begin{align}
\label{eq:on_to_off}
&~\EcalVzero(\tilde f,\pi,h)\notag
\\
=&~\EE\sbr{\tilde f_h(x_h,a_h) -\zero-V_{\tilde f}(x_{h+1})\mid a_{0:h-1}\sim\pi,a_{h}\sim \pi_{\tilde f}} \notag
\\
=&~\EE\sbr{\tilde f_h(x_h,a_h) \mid a_{0:h-1}\sim\pi,a_{h}\sim \pi_{\tilde f}} \notag
\\
=&~\EE\sbr{f_h(x_h,a_h)-(\Tcal^\zero_h g_{h+1})(x_{h},a_{h})\mid a_{0:h-1}\sim\pi,a_{h}\sim\pi_{\tilde f}}\notag
\\
=&~\EE\sbr{g_{h}(x_h,a_h) - R_h(x_h,a_h) - V_g(x_{h+1})\mid a_{0:h-1}\sim\pi, a_{h}\sim \pi_{\tilde f}}\notag
\\
=&~\Ecal^R(g,\pi, \pi_{\tilde f},h)
\end{align}
where the second equality is due to $\tilde f_{h+1}=\zero$. 

As the construction of $\tilde f$ and the assumptions of $\Fcal$ are the same as that in Q-type \rfolive and we use the same $\Fon=\Fcal-\Fcal$ in both places, following the same proof of \pref{thm:rfolive_q} directly gives us that $\tilde f=(\tilde f_0,\ldots,\tilde f_{H-1})\in\Fon$ and $V_{\tilde f}(x_0) \ge \veps/H$. \vspace{1em}

Since in the online phase we use $\Zon$, which is an ($\ecover$)-cover of $\Fon$, we know that there exists $\tilde f^{\mathrm{c}}\in\Zon$ such that $\max_{h'\in[H]}\|\tilde f_{h'}-\tilde f^{\mathrm{c}}_{h'}\|_\infty\le\ecover\le \veps/(2H)$. Notice that since $\forall h+1\le h'\le H-1$ we have $\zero\in\Zcal_{h'}$ and $f_{h'}=\zero$, thus w.l.o.g., we can assume that $\tilde f^{\mathrm{c}}_{h'}=\zero,\forall h+1\le h'\le H-1$. 

From the definition of $\tilde f^{\mathrm{c}}$ and $ \tilde f_0(x_0,\pi_{\tilde f}(x_0))=V_{\tilde f}(x_0)\ge\veps/H$, we have
\begin{align*}
V_{\tilde f^{\mathrm{c}}}(x_0)=\tilde f_0^{\mathrm{c}}(x_0,\pi_{\tilde f^{\mathrm{c}}}(x_0)) \ge \tilde f_0^{\mathrm{c}}(x_0,\pi_{\tilde f}(x_0)) \ge  \tilde f_0(x_0,\pi_{\tilde f}(x_0)) -\veps/(2H)\ge\varepsilon/(2H).
\end{align*} 

From the first property of the online phase (\cref{eq:rfolive_v_stop}), we know that all the survived value functions at the end of the online phase predict no more than $\varepsilon/(2H)$. Therefore $\tilde f^{\mathrm{c}}$ will be eliminated. We assume it is eliminated at iteration $t$ by policy $\pi^t$ in level $h^t$. \vspace{1em} 

For any policy $\pi$ and $h'\in[H], h'\neq h$, we have
\begin{align*}
&~\EcalVzero\rbr{\tilde f^{\mathrm{c}},\pi,h'} 
\notag \\
=&~\EE[\tilde f^{\mathrm{c}}_{h'}(x_{h'},a_{h'}) -\zero-\tilde f^{\mathrm{c}}_{h'+1}(x_{h'+1},a_{h'+1})\mid a_{0:h'-1}\sim\pi,a_{h':h'+1}\sim \pi_{\tilde f^{\mathrm{c}}}] \notag\\
\ge&~\EE[\tilde f_{h'}(x_{h'},a_{h'}) -\tilde f_{h'+1}(x_{h'+1},a_{h'+1})\mid a_{0:h'-1}\sim\pi,a_{h':h'+1}\sim \pi_{\tilde f^{\mathrm{c}}}] - 2  \ecover
\notag\\
\ge&~\EE[\tilde f_{h'}(x_{h'},a_{h'}) -\tilde f_{h'+1}(x_{h'+1},a_{h'+1})\mid a_{0:h'-1}\sim\pi,a_{h'}\sim \pi_{\tilde f^{\mathrm{c}}},a_{h'+1}\sim \pi_{\tilde f}]- 2  \ecover
\notag\\
=&~-\eelim /32.
\end{align*}
The first inequality is from the definition of $\tilde f^{\mathrm{c}}$. The second inequality is due to $\pi_{\tilde f}$ is the greedy policy of $\tilde f$. The last equality is due to the construction of $\tilde f$.

Similarly, on the other end, we also have 
\begin{align*}
&~\EcalVzero\rbr{\tilde f^{\mathrm{c}},\pi,h'} 
\notag \\
=&~\EE[\tilde f^{\mathrm{c}}_{h'}(x_{h'},a_{h'}) -\zero-\tilde f^{\mathrm{c}}_{h'+1}(x_{h'+1},a_{h'+1})\mid a_{0:h'-1}\sim\pi,a_{h':h'+1}\sim \pi_{\tilde f^{\mathrm{c}}}] \notag\\
\le&~
\EE[\tilde f^{\mathrm{c}}_{h'}(x_{h'},a_{h'}) -\tilde f^{\mathrm{c}}_{h'+1}(x_{h'+1},a_{h'+1})\mid a_{0:h'-1}\sim\pi,a_{h'}\sim \pi_{\tilde f^{\mathrm{c}}},a_{h'+1}\sim \pi_{\tilde f}]
\notag\\
\le&~
\EE[\tilde f_{h'}(x_{h'},a_{h'}) -\tilde f_{h'+1}(x_{h'+1},a_{h'+1})\mid a_{0:h'-1}\sim\pi,a_{h'}\sim \pi_{\tilde f^{\mathrm{c}}},a_{h'+1}\sim \pi_{\tilde f}] + 2  \ecover
\notag\\
=&~\eelim /32.
\end{align*}
Therefore, for any policy $\pi$ and $h'\in[H],h'\neq h$ we get
\begin{align}
\label{eq:appx_f_avc}
    \abr{\EcalVzero\rbr{\tilde f^{\mathrm{c}},\pi,h'}}\le\eelim/32.
\end{align}
From the Bellman backup construction of $\tilde f^{\mathrm{c}}$, we know that $\tilde f^{\mathrm{c}}$ can only be eliminated at level $h$. This can be seen from the following argument: Applying the concentration result of the online phase (\cref{eq:rfolive_v_conc}), we have $|\hEcalVzero(\tilde f^{\mathrm{c}},\pi^t,h^t)-\EcalVzero(\tilde f^{\mathrm{c}},\pi^t,h^t)|\le 3\eelim/4$. Further notice that \cref{eq:appx_f_avc}, we have
$|\hEcalVzero(\tilde f^{\mathrm{c}},\pi^t,h^t)|\le 3\eelim/4+\eelim/32$ 
if $h^t\neq h$. Since the elimination threshold is set to $\eelim$, $\tilde f^{\mathrm{c}}$ will not be eliminated at level $h^t\neq h$. \vspace{1em}

This implies that at some iteration $t$ in the online phase, we will collect some $\pi^t$ that eliminates $\tilde f^{\mathrm{c}}$ at level $h$, i.e., it satisfies $|\hEcalVzero(\tilde f^{\mathrm{c}},\pi^t,h^t)|> \eelim$ and $h^t=h$. Applying concentration argument for the online phase (\cref{eq:rfolive_v_conc}), we have $|\hEcalVzero(\tilde f^{\mathrm{c}},\pi^t,h^t)-\EcalVzero(\tilde f^{\mathrm{c}},\pi^t,h^t)|\le 3\eelim/16$. Therefore, \begin{align}
\label{eq:sep_case}
    |\EcalVzero(\tilde f^{\mathrm{c}},\pi^t,h^t)|> 13\eelim/16.
\end{align}
From the definition of the average Bellman error and $\tilde f,\tilde f^{\mathrm{c}}$, we have the following equations
\begin{align*}
\EcalVzero(\tilde f^{\mathrm{c}},\pi^t,h^t) = {} & \EE\sbr{\tilde f^{\mathrm{c}}_{h}(x_{h},a_{h})-\zero-\tilde f^{\mathrm{c}}_{h+1}(x_{h+1},a_{h+1})\mid a_{0:h-1}\sim\pi^t, a_{h:h+1}\sim  \pi_{\tilde f^{\mathrm{c}}}}\\
= {} & \EE\sbr{\tilde f^{\mathrm{c}}_{h}(x_{h},a_{h})\mid a_{0:h-1}\sim\pi^t, a_{h}\sim \pi_{\tilde f^\mathrm{c}}} \tag{$\tilde f^{\mathrm{c}}_{h+1}=\zero$},
\end{align*}
and
\begin{align*}
& \Ecal^R(g,\pi^t, \pi',h^t) & \\
= {} & \EE\sbr{g_{h}(x_h,a_h) - R_h(x_h,a_h) - V_g(x_{h+1})\mid a_{0:h-1}\sim\pi^t, a_{h}\sim \pi'} & \\
= {} & \EE\sbr{\tilde f_{h}(x_{h},a_{h}) \mid a_{0:h-1}\sim\pi^t, a_{h}\sim \pi'} \tag{\cref{eq:on_to_off}} & \\
\ge {} & \EE\sbr{\tilde f_{h}^{\mathrm{c}}(x_{h},a_{h}) \mid a_{0:h-1}\sim\pi^t, a_{h}\sim \pi'} - 2\ecover,
\end{align*}
where $\pi'\in \Pi_{\textrm{est}}^t=\Pon$.
Because $\pi_{\tilde f^\mathrm{c}}$ is the greedy policy of $\tilde f^\mathrm{c}$ and $\tilde f^\mathrm{c}\in\Zon$, we know that in the offline phase $\pi'=\pi_{\tilde f^\mathrm{c}}\in\Pon$ will be chosen for elimination. Then we get 
\begin{align*}
{} & \Ecal^R \rbr{g, \pi^t, \pi_{\tilde f^{\mathrm{c}}},h^t} 
\\
\ge {} &\EE\sbr{ \tilde f_{h}^{\mathrm{c}}(x_{h},a_{h}) \mid a_{0:h-1} \sim \pi^t, a_{h} \sim \pi_{\tilde f^\mathrm{c}}} - 2\ecover 
\\
= {} & \EE\sbr{ \tilde f_{h}^{\mathrm{c}}(x_{h},a_{h})-\zero -\tilde f_{h+1}^{\mathrm{c}}(x_{h+1},a_{h+1}) \mid a_{0:h-1} \sim \pi^t, a_{h:h+1} \sim \pi_{\tilde f^\mathrm{c}}} - 2\ecover
\\
= {} & \EcalVzero(\tilde f^\mathrm{c},\pi^t,h^t)-\eelim/32,
\end{align*}
where the first equality is due to $\tilde f^{\mathrm{c}}_{h+1}=\zero$.

Similarly we have
\begin{align*}
{} & \Ecal^R \rbr{g, \pi^t, \pi_{\tilde f^{\mathrm{c}}},h^t} 
\\
\le {} &\EE\sbr{ \tilde f_{h}^{\mathrm{c}}(x_{h},a_{h}) \mid a_{0:h-1} \sim \pi^t, a_{h} \sim \pi_{\tilde f^\mathrm{c}}} + 2\ecover
\\
= {} & \EE\sbr{ \tilde f_{h}^{\mathrm{c}}(x_{h},a_{h})-\zero -\tilde f_{h+1}^{\mathrm{c}}(x_{h+1},a_{h+1}) \mid a_{0:h-1} \sim \pi^t, a_{h:h+1} \sim \pi_{\tilde f^\mathrm{c}}} + 2\ecover
\\
= {} & \EcalVzero(\tilde f^\mathrm{c},\pi^t,h^t)+\eelim/32.
\end{align*}
Further, using the concentration argument for the offline phase (\cref{eq:conc_offline_v}), we get 
\begin{align*}
\abr{ \hat \Ecal^R \rbr{g,\pi^t, \pi_{\tilde f^{\mathrm{c}}},h^t} -\Ecal^R \rbr{g,\pi^t, \pi_{\tilde f^{\mathrm{c}}},h^t}}< \eelim/4.
\end{align*}
Hence, if $\EcalVzero(\tilde f^\mathrm{c},\pi^t,h^t) \ge 0$, we get $\EcalVzero(\tilde f^\mathrm{c},\pi^t,h^t) \ge 13\eelim/16$ from \cref{eq:sep_case}, which yields
\begin{align*}
\hat \Ecal^R\rbr{g,\pi^t, \pi_{\tilde f^{\mathrm{c}}},h^t} >&~  \Ecal^R\rbr{g,\pi^t, \pi_{\tilde f^{\mathrm{c}}},h^t} - \eelim/4
\\
\ge&~\EcalVzero(\tilde f^\mathrm{c},\pi^t,h^t)-\eelim/32 -\eelim/4
\\
\ge&~13\eelim/16-\eelim/32-\eelim/4>\eelim/2.
\end{align*}
Otherwise, we are in the case $\EcalVzero(\tilde f^\mathrm{c},\pi^t,h^t) < 0$ and we have $\EcalVzero(\tilde f^\mathrm{c},\pi^t,h^t) < -13\eelim/16$  from \cref{eq:sep_case}. This yields
\begin{align*}
\hat \Ecal^R\rbr{g,\pi^t, \pi_{\tilde f^{\mathrm{c}}},h^t} < &~  \Ecal^R\rbr{g,\pi^t, \pi_{\tilde f^{\mathrm{c}}},h^t} + \eelim/4
\\
\le&~\EcalVzero(\tilde f^\mathrm{c},\pi^t,h^t)+\eelim/32 +\eelim/4
\\
\le&~-13\eelim/16+\eelim/32+\eelim/4<-\eelim/2.
\end{align*}
Thus we always have $\abr{\hat \Ecal^R\rbr{g,\pi^t, \pi_{\tilde f^{\mathrm{c}}},h^t}}>\eelim /2$, which implies that we eliminate such $g$ by $\pi^t_{0:h^t-1}\circ\pi_{\tilde f^{\mathrm{c}},h^t}$ in the offline phase.

\paragraph{Case (ii) of Claim 1} $\EcalV(g,\pi_g,h)=\EE[g_{h}(x_h,a_h) - R_h(x_h,a_h) - V_g(x_{h+1})\mid a_{0:h} \sim \pi_g]\le-\varepsilon/H$.

Same as before, we have $\EcalV(g,\pi_g,h)=\EE[f_h(x_h,a_h)- (\Tcal^\zero_h g_{h+1})(x_{h},a_{h})\mid a_{0:h}\sim\pi_g]\le-\varepsilon/H$. Now we let $\tilde f_h$ be the negated version of the one in case (i), and define $\tilde  f$ as 
\begin{align*}
	\tilde f_{h'}(x_{h'},a_{h'})\hspace{-.15em}=\hspace{-.15em}\left\{
	\begin{aligned}
		&(\Tcal_{h'}^\zero\tilde g_{h'+1})(x_{h'},a_{h'})\hspace{-.15em}=\hspace{-.15em}\EE[\max_{a}\tilde g_{h'+1}(x_{h'+1},a)\mid x_{h'},a_{h'}] & & 0\le h'\le h-1 
		\\
		&(\Tcal^\zero_h g_{h+1})(x_h,a_h) - f_h(x_h,a_h)& & h'=h 
		\\
		&0 & &h+1\le h'\le H-1.
	\end{aligned}
	\right.
\end{align*}
Following the same steps as in case (i) we can verify $\tilde f\in\Fon$, construct $\tilde f^{\mathrm{c}}\in\Zon$ with $V_{\tilde f^{\mathrm{c}}}(x_0)\ge \veps/(2H)$, and show that $g$ is eliminated by $\pi^t_{0:h^t-1}\circ\pi_{\tilde f^{\mathrm{c}},h^t}$ in the offline phase for some $t,h^t$. 

\paragraph{Proof of Claim 2} (i) From the assumption, we know that realizability condition $Q^*_R=(Q^*_{R,0},\ldots,$ $Q^*_{R,H-1})\in\Foff(R)$ holds. (ii) For the second argument, we note that $\Ecal(Q^*_R,\pi,\pi',h)= 0$ for any $\pi$, $h\in[H]$, $\pi'\in\Pi_{\textrm{est}}^t$ by the definition of the average Bellman error. From the concentration argument in the offline phase (\cref{eq:conc_offline_v}), we have $| \hat \Ecal^R \rbr{Q^*_R,\pi^t, \pi', h^t}| \le  | \Ecal^R \rbr{Q^*_R, \pi^t, \pi', h^t}|  +\eelim/4 = \eelim/4.$ As a result, $Q^*_R$ will not be eliminated. 

\paragraph{Proof of Claim 3} From Claim 1, we know that in the offline phase, for any $g\in \Foff(R)$, if $\exists h\in[H]$, s.t. $|\EcalV(g,\pi_g,h)|\ge\varepsilon/H$, then it will be eliminated. Therefore from the policy loss decomposition in \citet{jiang2017contextual}, for all survived $g\in\Fsur(R)$ in the offline phase, we have 
\[
V_{g}(x_0)-V^{\pi_{g}}_{R,0}(x_0)=\sum_{h=0}^{H-1}\EcalV(g,\pi_{g},h)< \varepsilon.
\]
Since $Q^*_R$ is not eliminated, similar as \citet{jiang2017contextual}, we have 
\begin{align*}
V^{\pi_{\hat g}}_{R,0}(x_0) &> V_{\hat g}(x_0)-\varepsilon\ge V^*_{R,0}(x_0)-\varepsilon.
\end{align*}
In sum, we can see the three claims hold with probability at least $1-\delta$. Since Claim 3 directly implies that \rfolive returns an $\veps$-near optimal policy, we complete the proof.
\end{proof}

\subsection{V-type \rfolive for unknown representation low-rank MDPs}
\label{app:low_rank}
Here we provide the details of instantiating V-type \rfolive to low-rank MDPs \citep{agarwal2020flambe,modi2021model,uehara2021representation}. Firstly we remark that they assume the normalization in \pref{def:lowrank} holds for $f':\Xcal\rightarrow[0,1]$ instead of $f':\Xcal\rightarrow[-1,1]$. We use the different version for ease of presentation and our results also hold under their normalization. In addition, both versions are implied by the definition in \citet{jin2019provably}.

Now we show the complete corollary statement. 

\begin{corollary*}[Full version of \pref{corr:low_rank}]
Fix $\delta \in (0,1)$. Consider a low-rank MDP $M$ of embedding dimension $\dlr$ with a realizable feature class $\Philr$ (\pref{assum:realizability_low_rank}) and a reward function class $\Rcal$.
With probability at least $1-\delta$, for any $R \in \Rcal$, V-type \rfolive (\pref{alg:rf_olive}) with $\Fcal(\Philr)$ outputs a policy $\hat\pi$ that satisfies $v^{\hat \pi}_R \ge v^*_R - \veps$ . The required number of episodes is
\begin{align*}
\tilde O\rbr{\frac{\rbr{H^8\dlr^3\log(|\Philr|)+H^5\dlr^2\log(\Ncal_{\Rcal}(\veps/2048H^2 \sqrt{\dlr\iota}))}K\log(1/\delta)}{\varepsilon^2}}.
\end{align*}
In \rfolive, we 
\[\eactv=\frac{\varepsilon}{8H^2}, \eelim=\frac{\varepsilon}{32H^2\sqrt{\dlr \iota}},\ \nactv=\frac{H^6 \iota}{\varepsilon^2}
\] 
and 
\begin{align*}
\nelim=&~\frac{\rbr{H^7\dlr^2\log(|\Philr|)+H^4\dlr \log(\Ncal_{\Rcal}(\eelim/64))}K  \iota^3}{\varepsilon^2}\\
=&~\frac{\rbr{H^7\dlr^2\log(|\Philr|)+H^4\dlr \log(\Ncal_{\Rcal}(\veps/2048H^2\sqrt{\dlr \iota}))}K  \iota^3}{\varepsilon^2},
\end{align*}
where $\iota=\cg\log(HdK/\delta\varepsilon)$ and $\cg$ is large enough constant.
\end{corollary*}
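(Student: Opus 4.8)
The plan is to obtain this corollary as an instantiation of \pref{thm:rfolive_v} with the function class $\Fcal(\Philr)=\Fcal_0(\Philr,H-1)\times\cdots\times\Fcal_{H-1}(\Philr,0)$. Concretely, the work splits into three pieces: (a) check that $\Fcal(\Philr)$ satisfies \pref{assum:realizability_F} and \pref{assum:completeness_F}; (b) invoke the V-type Bellman Eluder dimension bound \pref{prop:lr_vbe} to get $\dvbee=\dvbe^\zero(\Fcal(\Philr)-\Fcal(\Philr),\Dcal_{\Fcal(\Philr)-\Fcal(\Philr)},\veps/(8H))=\tilde O(\dlr)$; and (c) bound the covering numbers of $\Fcal(\Philr)$ and $\Rcal$ via \pref{lem:covering}. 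Substituting these into the parameter formulas and the episode count of \pref{thm:rfolive_v} then produces the stated $\eactv,\eelim,\nactv,\nelim$ and sample complexity.

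For (a), the single fact I would use repeatedly is that in a low-rank MDP (\pref{def:lowrank}), for any scalar $u:\Xcal\to\RR$ with $\|u\|_\infty\le c$ the one-step expectation $\EE[u(x')\mid x'\sim P_h(\cdot\mid x,a)]=\inner{\philr_h(x,a)}{\int u(x')\mulr_h(x')\,dx'}$ is linear in the \emph{true} feature $\philr_h$, with coefficient norm at most $c\sqrt{\dlr}$ and value range inside $[-c,c]$; since $\philr_h\in\Philr_h$ by \pref{assum:realizability_low_rank}, this puts such a function in $\Fcal_h(\Philr,c)$. Applying this with $u=V^*_{R,h+1}$ (bounded by $H-h-1$) and peeling off $R_h$ gives $Q^*_{R,h}\in\Fcal_h(\Philr,H-h-1)+R_h$, which is exactly the $h$-th factor of $\Fcal(\Philr)$ appended with $R_h$; this is realizability. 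For completeness, the same fact with $u=\max_{a'}f_{h+1}(\cdot,a')$, with $u=\max_{a'}(f_{h+1}+R_{h+1})(\cdot,a')$, and with $u=\max_{a'}(f_{h+1}-f'_{h+1})(\cdot,a')$ (tracking the $O(H)$ range bounds in each case) shows $\Tcal^\zero_h\Fcal_{h+1}(\Philr,\cdot)$ and $\Tcal^\zero_h(\Fcal_{h+1}(\Philr,\cdot)+\Rcal_{h+1})$ land in $\Fcal_h(\Philr,H-h-1)$, and that $\Tcal^\zero_h(f_{h+1}-f'_{h+1})=\inner{\philr_h}{\theta}$ can be split as $\inner{\philr_h}{\theta/2}-\inner{\philr_h}{-\theta/2}$ with both summands in $\Fcal_h(\Philr,H-h-1)$, so $\Tcal^\zero_h(\Fcal_{h+1}-\Fcal_{h+1})\subseteq\Fcal_h-\Fcal_h$. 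The subtlety worth flagging is that $f_{h+1}-f'_{h+1}$ need not be linear in any single member of $\Philr_{h+1}$, yet its Bellman backup is always linear in $\philr_h$; this is precisely what makes the difference-class condition hold, mirroring the argument in \pref{app:linear_completeness}.

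For (c), \pref{lem:covering} gives $\log\Ncal_{\Fcal(\Philr)}(\eta)=\tilde O(H\dlr\log(|\Philr|H\dlr/\eta))$, since $\Fcal(\Philr)$ is an $H$-fold product where each factor is a union over the (at most $|\Philr|$) candidate features of a $\dlr$-dimensional norm-bounded ball; this extra factor of $H$ in the log-covering number is what turns the $H^7$ in \pref{thm:rfolive_v} into the $H^8$ in the corollary. Plugging $\dvbee=\tilde O(\dlr)$ and this covering bound (together with $\log\Ncal_\Rcal(\cdot)$) into \pref{thm:rfolive_v} yields the claimed $\eactv,\eelim,\nactv,\nelim$ and the episode bound $\tilde O\big((H^8\dlr^3\log|\Philr|+H^5\dlr^2\log\Ncal_\Rcal(\veps/2048H^2\sqrt{\dlr\iota}))K\log(1/\delta)/\veps^2\big)$. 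I expect the genuine technical obstacle to be \pref{prop:lr_vbe} rather than the assembly above: bounding the V-type BE dimension of $\Fcal(\Philr)-\Fcal(\Philr)$ requires showing the state-wise Bellman-difference class $(I-\Tcal^\zero_h)V_{\Fcal(\Philr)-\Fcal(\Philr)}$ has small distributional Eluder dimension, which one does by exploiting the bilinear factorization of average Bellman errors in a low-rank MDP (an inner product of a feature-dependent vector with a roll-in-dependent vector) plus the Eluder dimension of linear functions, handled uniformly over the finitely many features in $\Philr$. Given \pref{prop:lr_vbe} and \pref{lem:covering}, the corollary is just the verification and substitution described here.
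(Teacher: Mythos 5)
Your proposal follows essentially the same route as the paper's proof: verify \pref{assum:realizability_F} and \pref{assum:completeness_F} for $\Fcal(\Philr)$ via the low-rank linearity property (\pref{lem:linear_mdp}), including the same half-split trick for the difference-class completeness condition, then invoke \pref{thm:rfolive_v} together with \pref{prop:lr_vbe} and \pref{lem:covering} and substitute parameters. The assembly and bookkeeping (including where the extra $H$ factor enters relative to \pref{thm:rfolive_v}) are consistent with the paper, so this is correct and not a different approach.
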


Before the formal proof, we provide some discussions and comparisons. Firstly, when $\Rcal$ is finite, the bound becomes $
\tilde O\rbr{\frac{\rbr{H^8\dlr^3\log(|\Philr|)+H^5\dlr^2\log(|\Rcal|)}K\log(1/\delta)}{\varepsilon^2}}$. Compared with \citet{modi2021model}, our result significantly improves upon their $\tilde O\rbr{\frac{H^6\dlr^{11}K^{14}\log(|\Philr|/\delta)}{\etamin^5}+\frac{H^7\dlr^{3}K^{5}\log(|\Philr||\Rcal|/\delta)}{\veps^2\etamin}}$ rate and does not require the reachability assumption ($\etamin$ is their reachability factor). On the other hand, their algorithm is more computationally viable and achieves the optimal deployment complexity \citep{huang2021towards}. With the additional access to and the realizability assumption of the right feature candidate class $\Upsilon^{\mathrm{lr}}$ in low-rank MDPs, another related work \citet{agarwal2020flambe} provide a computationally efficient reward-free exploration guarantee but their rate $\frac{H^{22}\dlr^7 K^9\log(|\Philr||\Upsilon^{\mathrm{lr}}|/\delta)}{\varepsilon^{10}}$ is also much worse than ours. 

In the sequel, we present the detailed proof for the corollary.

\begin{proof}
We first verify that $\Fcal$ satisfies the assumptions in \pref{thm:rfolive_v}. Here we have that $\Fcal=\Fcal(\Philr)=\Fcal_0(\Philr,H-1)\times\ldots\times \Fcal_{H-1}(\Philr,0)$, where 
\begin{align*}
    \Fcal_h(\Philr,B_h) = \cbr{ \hspace{-.15em} f_h(x_h,a_h) = \inner{\philr_h(x_h,a_h)}{\theta_h}\hspace{-.15em} : \|\theta_h\|_2 \le B_h\sqrt{\dlr},\langle \philr_h(\cdot),\theta_h \rangle \in[-B_h, B_h] \hspace{-.15em} }.
\end{align*}
Applying the property of linear MDPs (\pref{lem:linear_mdp}) gives us that
\begin{align*}
Q^*_{R,h}(x_h,a_h) = {} & R_h(x_h,a_h)+\EE\sbr{\max_{a_{h+1}}Q^*_{R,h+1}(x_{h+1},a_{h+1}) \mid x_h,a_h} \\ 
= {} & R_h(x_h,a_h)+\langle \philr_h(x_h,a_h),\theta_h^*\rangle ,    
\end{align*}
where $\|\theta_h^*\|_2\le (H-(h+1))\sqrt {\dlr}$ and $\langle \philr_h(\cdot),\theta_h^*\rangle \in[0, H-(h+1)]$. Therefore, for any $h\in[H]$, we have $Q_{R,h}^*\in\Fcal_h(\Philr,H-h-1)+R_h=\Fcal_h+R_h$. This implies that for any $R\in\Rcal$, we get $Q^*_R\in\Fcal+R$, thus, realizability (\pref{assum:realizability_F}) holds. 

Again, applying \pref{lem:linear_mdp}, for any $h\in[H],f_{h+1}\in\Fcal_{h+1},R_{h+1}\in\Rcal_{h+1}$, we have that 
\begin{align*}
(\Tcal_{h}^\zero (f_{h+1}+R_{h+1}))(x_h,a_h) = {} & \EE\sbr{\max_{a_{h+1}}(f_{h+1}(x_{h+1},a_{h+1})+R_{h+1}(x_{h+1},a_{h+1}))\mid x_h,a_h} \\
= {} & \inner{\philr_h(x_h,a_h)}{\theta^*_{f+R,h}} ,    
\end{align*}
where $\|\theta^*_{f+R,h}\|_2\le (H-h-1)\sqrt {\dlr}$ and $\langle\philr_h(\cdot),\theta^*_{f+R,h}\rangle \in[-(H-h-1), H-h-1]$. Thus $\langle\philr_h,\theta^*_{f+R,h}\rangle\in\Fcal_h$. This implies that for any $f_{h+1}\in\Fcal_{h+1},R_{h+1}\in\Rcal_{h+1}$, we have $\Tcal_{h}^\zero (f_{h+1}+R_{h+1})\in \Fcal_h$. Similarly, we can show $\Tcal_{h}^\zero f_{h+1}\in \Fcal_h$. 

Moreover, for any $f_{h+1},f'_{h+1}\in\Fcal_{h+1}$, we have that $\|f_{h+1}-f'_{h+1}\|_\infty \le 2(H-h-2)$. Therefore, there exists $\theta^*_{f-f',h}$ such that $\Tcal_h^\zero(f_{h+1}-f'_{h+1})=\langle\philr_h,\theta^*_{f-f',h}\rangle\subseteq (\Xcal\times\Acal\rightarrow[-2(H-h-2),2(H-h-2)])$ and $\|\theta^*_{f-f',h}\|_2\le 2(H-h-2)\sqrt {\dlr}$. Then choosing $\theta_h=\theta^*_{f-f',h}/2$, $\theta_h'=-\theta^*_{f-f',h}/2$ and $f_h=\langle\philr_h,\theta_h\rangle,f_h'=\langle\philr_h,\theta_h'\rangle$ gives us both $f_h(\cdot), f_h'(\cdot) \in [-(H-h-2),(H-h-2)] \subseteq [-(H-h-1), H-h-1]$ and $\|\theta_h\|_2,\|\theta_h'\|_2\le (H-h-1)\sqrt{\dlr}$. Therefore, we have that $f_h - f_h' \in \Fcal_h - \Fcal_h$.

The above discussions imply that completeness (\pref{assum:completeness_F}) holds.

Invoking \pref{thm:rfolive_v} and further noticing the covering number argument (\pref{lem:covering}) and the bound on V-type Bellman Eluder dimension (\pref{prop:lr_vbe}), we know that the output policy is $\varepsilon$-optimal and the sample complexity is
\begin{align*}
&~\tilde O\rbr{\frac{\rbr{H^7\log(\Ncal_{\Fcal}(\veps/2048H^2\sqrt{\dlr \iota})) \iota+H^5\log(\Ncal_{\Rcal}(\veps/2048H^2\sqrt{\dlr \iota}))} \dlr^2 K \iota^3}{\varepsilon^2}}
\\
=&~\tilde O\rbr{\frac{\rbr{H^8\dlr^3\log(|\Philr|)+H^5\dlr^2\log(\Ncal_{\Rcal}(\veps/2048H^2\sqrt{\dlr \iota}))}K\log(1/\delta)}{\varepsilon^2}}.\qedhere
\end{align*}
\end{proof}

\section{Hardness result for unknown representation linear completeness setting}
\label{app:rf_lower_bound}
In this section, we provide the detailed construction and proof for the hardness result and more discussions.
\begin{theorem*}[Restatement of \pref{thm:rf_lower_bound}]
There exists a family of MDPs $\Mcal$, a reward class $\Rcal$ and a feature set $\Philc$, such that  $\forall M\in\Mcal$, the $(M, \Philc)$ pair satisfies \pref{assum:realizability_linear_completeness} (i.e., $\Philc$ is realizable linear complete feature class for any $M\in\Mcal$), yet it is information-theoretically impossible for an algorithm to obtain a $\poly\rbr{\dlc, H, \log (|\Philc|), \log (|\Rcal|), 1/\veps, \log (1/\delta)}$ sample complexity for reward-free exploration with the given reward class $\Rcal$.
\end{theorem*}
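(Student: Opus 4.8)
The plan is to exhibit an explicit hard family of MDPs and then run a standard information-theoretic (fooling-set / Yao's-principle) argument; the creative content lies entirely in the construction, which must make online exploration statistically useless while keeping the linear-completeness structure intact with only a polynomially-log-sized feature set. \textbf{The construction I have in mind} is a layered ``search'' MDP indexed by a hidden key $z$ ranging over a set of size $2^{\Omega(H)}$ (so $\log|\Mcal|$ is polynomial): a fixed, publicly known reward that pays $1$ only upon reaching a single target state at the last layer, and transitions $P_h(\cdot\mid x,a)$ depending on $z$ so that exactly one ``golden'' state--action path of length $H$ reaches the target in $M_z$ while any deviation falls into an absorbing zero region. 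State labels at each layer are the same across all $M_z$, so that identifying the golden path genuinely requires following it. The reward class $\Rcal$ can be taken to be a singleton (which also yields the reward-aware claim), or, for the reward-free variant, one may instead carry the $z$-dependence through a reward class of size $2^{\Omega(H)}$; either way $\log|\Rcal|$ is polynomial.

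\textbf{To verify \pref{assum:realizability_linear_completeness},} for $M_z$ I would use the one-dimensional feature $\phi^*_{z,h}(x,a)=\mathbf{1}[(x,a)\text{ is the golden pair at layer }h]$, so $\dlc=1$ and $\|\phi^*_{z,h}(\cdot)\|_2\le 1$. Since the only state--action pairs carrying nonzero optimal value are the golden ones, $Q^*_{R,h}=\langle\phi^*_{z,h},\theta^*_h\rangle$, and because the golden state at layer $h{+}1$ is reached only through the golden pair at layer $h$, the zero-reward Bellman backup of any bounded linear function of $\phi^*_{z,h+1}$ is again a bounded linear function of $\phi^*_{z,h}$ --- and likewise for the reward-appended and difference classes, so the linearity demanded in \pref{assum:completeness_F} holds. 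Collecting $\Philc_h:=\{\mathbf{1}[(x,a)=(g,b)]:g\in\Xcal_h,\,b\in\Acal\}$ gives $\log|\Philc_h|=O(\log|\Xcal_h|+\log|\Acal|)$, which is polynomial in $H$, and $\phi^*_{z,h}\in\Philc_h$ for every $z$; hence each $(M_z,\Philc)$ pair satisfies \pref{assum:realizability_linear_completeness}.

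\textbf{For the lower bound itself,} with $z$ drawn uniformly I would argue that after $n$ online episodes --- equivalently, after $n$ generative-model queries, since each query still reveals only one transition --- an algorithm has observed at most $O(nH)$ transitions, so for at least half the keys the golden path of $M_z$ was never traced, and conditioned on its observations the posterior over those keys stays nearly uniform; equivalently, the law of the collected data is within $O(nH\cdot 2^{-\Omega(H)})$ total-variation distance of a $z$-independent reference. Because a returned policy that misplays even one golden action on $M_z$ suffers value gap $1>\veps$, a Le Cam / Fano computation forces $n=2^{\Omega(H)}$ for success probability $1-\delta$; as $\dlc,H,\log|\Philc|,\log|\Rcal|,1/\veps$ and $\log(1/\delta)$ are all polynomial in $H$, no $\poly(\cdot)$ sample-complexity bound is attainable, and the generative-model and reward-aware (singleton-$\Rcal$) corollaries are immediate, since the argument is insensitive to whether the reward is revealed up front or only in the offline phase and to whether exploration is on-policy or via a generative model. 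For the explorability/reachability extension I would refine the skeleton so that each layer has only $\poly(H)$ states, all reachable with probability bounded below, by relocating the hidden key into a statistically obscured component of the dynamics while preserving the one-dimensional value structure.

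\textbf{The main obstacle} is precisely this construction-plus-verification: linear completeness constrains the \emph{entire} value-function class (all Bellman images, not just $Q^*$), so one must confine the value-bearing part of every $M_z$ to a fixed low-dimensional structure realized by a small feature set, while simultaneously making the $z$-dependence of the dynamics both decision-relevant and statistically undetectable with polynomially many samples --- and, for the strongest form of the result, doing so on a family in which every feature direction is explorable. Reconciling the rigidity of completeness with statistical hardness (and with reachability) is where essentially all the difficulty lies; the information-theoretic argument itself, once the family is in hand, is routine.
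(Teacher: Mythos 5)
Your overall route is the same as the paper's (\pref{app:rf_lower_bound}): an exponentially large needle-in-a-haystack family with a single rewarding terminal transition, a singleton reward class, per-instance one-hot features of the golden state--action pair giving $\dlc=1$, a feature class consisting of all such indicators so that $\log|\Philc|$ is polynomial in $H$, the same linear-completeness verification (zero-feature pairs back up to zero, and the golden pair backs up to $\max(\theta_{h+1},0)$, which is linear in the one-dimensional feature), and a $2^{\Omega(H)}$ best-arm-identification lower bound together with the generative-model and reward-aware corollaries. So the comparison is not about the method but about one concrete detail of your construction.

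The detail that would fail as written: you let $P_h(\cdot\mid x,a)$ depend on the hidden key $z$ at every layer and send every deviation into an absorbing zero region. If an off-golden action moves the agent into a recognizably distinct absorbing region, a deviation is detected the moment it happens, and the agent can identify the golden action at layer $0$ with $O(1)$ episodes (try each action and see which one avoids the sink), then layer $1$, and so on, recovering $z$ with $O(H)$ episodes total. This contradicts your key claim that the posterior over keys stays nearly uniform, and the Le Cam/Fano step collapses. The paper avoids this by making the dynamics a complete binary tree that is \emph{identical across all} $M\in\Mcal$ at every intermediate layer: every trajectory has the same law in every instance until the final step, and the only instance-dependent transition is the last one into $x^+$ versus $x^-$. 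To repair your construction you must likewise confine the $z$-dependence to the final transition (or otherwise guarantee that all observations off the completed golden path are identically distributed across instances); with that fix the rest of your argument matches the paper. Two smaller remarks: \pref{assum:realizability_linear_completeness} only requires the zero-reward linear completeness of \pref{def:lin_completeness} for the true feature, so your extra claims about reward-appended and difference classes are unnecessary; and the paper's exponential tree already satisfies reachability and explorability as those works define them, so your proposed refinement to $\poly(H)$ states per layer is not needed for the stated theorem.
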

\begin{proof}
We present an exponential tree MDP as a hard instance, similar to the lower bound instances in \citet{modi2019sample}, and design ``one-hot'' realizable feature inspired by the construction in \citet{zanette2020learning} which they used to show that a low-IBE (Inherent Bellman Error) setting does not imply a low-rank/linear MDP. 

\paragraph{Family of hard instances} We consider a class of deterministic finite state MDPs $\Mcal$ with a singleton reward class. 
In our construction, for simplicity, the MDPs have a layered structure where the set of states an agent can encounter at any two timesteps $h$ and $h'$ ($h\neq h'$) are disjoint. Hence, we denote the respective state spaces for each timestep $h$ as $\Xcal_h$, and we always have $x_h\in\Xcal_h$. In this layered MDP, for each timestep $h\in[H]$, we only define the corresponding feature $\phi$, rewards at each $\Xcal_h$, and transition functions from $\Xcal_h$ to $\Xcal_{h+1}$. To convert it to the non-layered MDPs, at each timestep $h\in[H]$, we only need to let the features $\phi$ and reward functions be $\zero$ at the states outside $\Xcal_h$ and let transitions have 0 probability when transiting from states in $\Xcal_h$ to states outside $\Xcal_{h+1}$ and define the transition functions from some states outside $\Xcal_h$ arbitrarily.

Consider a complete binary tree of depth $H-2$ (we count the first layer $x_0$ as depth 0). The vertices at each level $h$ from the state space $\Xcal_h$ and the two outgoing edges represent the available actions at each state. The reward class is a singleton class $\{R\}$, where all states get zero reward other than $R_{H-1}(x^+,\mathrm{NULL})=1$. The starting state of the MDP is the root node $x_0$ and the dynamics are deterministic at all levels: each action $\cbr{\mathrm{left, right}}$ transits to the corresponding child node. Of all the $2^{H-2}$ nodes at level $H-2$, on one node $x_{H-2}^*$, one action $a_{H-2}^*$ transits to $x^+$ with probability $1$ whereas the other action and all actions for other nodes transit to $x^-$ (i.e., only $P_{H-1}(x^+\mid x_{H-2}^*,a_{H-2}^*)=1$). 
As we have $2^{H-1}$ many choices for $(x_{H-2}^*, a_{H-2}^*)$, we have $\abr{\Mcal} = 2^{H-1}$. We provide an illustration for $H=4$ in \pref{fig:hard_instance}.

\begin{figure}[htb]
	\center
	\begin{tikzpicture}[scale=1.75]
		\node[state, initial] (s0) at (0,0)     {$x_0$};
		\node[state] (s1) at (-1.5,-1)  {$x_1^0$};
		\node[state] (s2) at (1.5,-1) {$x_1^1$};
		\node[state] (s3) at (-2.5,-2)  {$x_2^0$};
		\node[state] (s4) at (-0.5,-2) {$x_2^1$};
		\node[state] (s5) at (0.5,-2)  {$x_2^2$};
		\node[state] (s6) at (2.5,-2) {$x_2^3$};
		\node[state, accepting] (s+) at (-1.5,-4) {\textcolor{OliveGreen}{$\boldsymbol{x^+}$}};
		\node[state, accepting] (s-) at (0.5,-4) {\textcolor{red}{$\boldsymbol{x^-}$}};
		
		\draw[->] (s0) edge (s1);
		\draw[->, dashed] (s0) edge (s2);
		\draw[->] (s1) edge (s3);
		\draw[->, dashed] (s1) edge (s4);
		\draw[->] (s2) edge (s5);
		\draw[->, dashed] (s2) edge (s6);
		\draw[->, dashed] (s3) edge (s-);
		\draw[->] (s3) edge[bend right] (s-);
		\draw[->] (s4) edge (s+);
		\draw[->, dashed] (s4) edge (s-);
		\draw[->] (s5) edge (s-);
		\draw[->, dashed] (s5) edge[bend left] (s-);
		\draw[->] (s6) edge (s-);
		\draw[->, dashed] (s6) edge[bend left] (s-);
	\end{tikzpicture}
\caption{A hard instance for $H=4$ with two actions: $\mathrm{left}$ (solid arrow) and $\mathrm{right}$ (dashed arrow). The complete binary tree portion ranges from timestep $0$ to $2$, and $x^+,x^-$ belong to timestep 3. On timestep $h=2$, only $(x^*_2, a_2^*) = (x_2^1, \mathrm{left})$ transits to the good state $x^+$, and all other state-action pairs transit to bad state $x^-$. Rewards for all state-action pairs are $0$ other than $R_3(x^+,\mathrm{NULL})=1$.}
\label{fig:hard_instance}
\end{figure}

\paragraph{Constructing the feature class} We now construct a feature class $\Philc$ such that
for any MDP $M\in\Mcal$, $\Philc$ satisfies \pref{assum:realizability_linear_completeness} (i.e., the linearly complete feature under $M$ belongs to $\Philc$). We define the feature class in the following way: for each timestep $h \in [H-1]$, we define $\Philc_h = \big\{\phi_h^i: i \in [2^{h+1}], \phi_h^i[j,a] = \one[i=2\cdot j+a],\forall j \in [2^h], a \in \{0,1\}\big\} $, where $\phi_h^i[j,a]$ denotes the value of feature $\phi_h^i$ on the $j$-th state ($x_h^j$) and action $a$ at level $h$. Finally, the two nodes at timestep $H-1$ have a feature value of $\phi_{H-1}(x^+,\mathrm{NULL})=1$ for the rewarding node and $\phi_{H-1}(x^-,\mathrm{NULL}) = 0$ for the non-rewarding node. Since we define a feature class of size $\abr{\Philc_h} = 2^{h+1}$ for each level $h \in [H-1]$, the total size of the product class is $\abr{\Philc} = \Pi_{h=0}^{H-2} \abr{\Philc_h} = 2^{(H-1)H/2}$.

\paragraph{Verifying \pref{assum:realizability_linear_completeness}}
Notice that from our construction of $\Mcal$, there is a one-on-one correspondence between one of $2^{H-1}$ state-action pair $(x_{H-2}^*,a_{H-2}^*)$ and one of $2^{H-1}$ MDP $M\in\Mcal$. Therefore, for $i$-th such state-action pair ($i \in [2^{H-1}]$) at level $H-2$, we use $M^i$ to denote its corresponding MDP. Now consider any MDP $M^i \in \Mcal$. For any level $h \in [H-1]$, let $i_h$ denote the state-action pair (among $2^{h+1}$-many state-action pairs at depth $h$) which lies along the path from the root to the rewarding node $x^+$. To verify the realizability condition, we show that 
the feature $\philci = \rbr{\phi_0^{i_0}, \ldots, \phi^{i_h}_h, \ldots, \phi^{i_{H-2}}_{H-2}, \phi_{H-1}}\in\Philc$ satisfies the linear completeness structure (\pref{def:lin_completeness}).
Firstly, note that by definition $\norm{\phi_h^{i_h}(x_h,a_h)}_2 \le 1$ for all $h \in [H],x_h,a_h$. Now, we verify that the requirements in \pref{def:lin_completeness} are satisfied.

For any $h\in[H-1]$ and pair $(x_h,a_h)$ with $\philci_h(x_h,a_h) = 0$, all subsequent states $x_{h+1}$ reachable from $x_{h}$ and any action $a$ also have $\philci_{h+1}(x_{h+1},a) = 0$: zero-feature intermediate state-action pairs only transit to zero feature value states at the next timestep. 
Therefore, the backup condition is satisfied by default:
\begin{align*}
   \inner{\philci_{h}(x_h,a_h)}{\theta_{h}}-\rbr{\Tcal^\zero_h  \inner{\philci_{h+1}}{\theta_{h+1}}}(x_h,a_h) = 0-0=0.
\end{align*}

On the other hand, for any $h\in[H-1]$ and pair $(x_h,a_h)$ with $\philci_h(x_h,a_h)=1$, we have $\philci_{h+1}(x_{h+1},a)=1$ for one action along the path to $x^+$ and $\philci_{h+1}(x_{h+1},a')=0$ for the other. For any $\theta_{h+1} \in \RR$ (notice that $\dlc=1$ in our construction), we have 
\begin{align*}
    \rbr{\Tcal^\zero_h  \inner{\philci_{h+1}}{\theta_{h+1}}}(x_h,a_h) = \begin{cases} \theta_{h+1} & \theta_{h+1} \ge 0\\
    0 & \theta_{h+1} < 0.
    \end{cases}
\end{align*}
Thus, for both cases, we can set $\theta_h = \theta_{h+1}$ or $0$ to satisfy the linear completeness condition $ \langle\philci_{h}(x_h,a_h),\theta_{h}\rangle-\rbr{\Tcal^\zero_h  \langle\philci_{h+1},\theta_{h+1}\rangle}(x_h,a_h) = 0$.
Hence, the chosen feature mapping $\philci$ satisfies the linear completeness structure in \pref{def:lin_completeness}.

\paragraph{Lower bound for exploration} Learning in this family of MDPs $\Mcal$ is provably hard as the feature and reward classes do not reveal any information about the pair $(x^*_{H-2}, a^*_{H-2})$ and the agent has to try each of the $2^{H-1}$ paths \citep{krishnamurthy2016pac}. Hence, any learning agent has to sample $\Omega(2^{H})$ trajectories to find the optimal policy in any given instance $M \in \Mcal$. The stated lower bound statement follows from the fact that $\dlc=1, A=2$, $1/\veps$ is constant and the sample complexity is $\Omega(2^{H})$ which scales with $\abr{\Philc} = 2^{(H-1)H/2}$ and $\abr{\Xcal} = 2^{H-1}+1$.
\end{proof}

\paragraph{Discussions} 
The family $\Mcal$ of hard instances highlights a fundamental distinction between the low-rank and linear completeness settings when underlying true representations are unknown. Our result further highlights that assuming reachability \citep{modi2021model} and/or explorability \citep{zanette2020provably} does not alleviate the fundamental hardness. Reachability is satisfied as for each MDP in $\Mcal$, each node at every level can be reached with probability $1$ by taking the correct actions which lie along the path from the root node. Similarly, for explorability, we need to verify that for any $\theta \in \RR$, the constant $\max_\pi \min_{|\theta|=1}\abr{\EE_{\pi}[\langle \phi_h(x_h,a_h), \theta\rangle]}$ is large for all $h \in [H]$ (notice that $\theta$ is one dimension so $\|\theta\|_2=|\theta|$). Again, it is easy to see that for both values of $\theta \in \{-1,1\}$, the policy corresponding to the path from root node $x_0$ to $x^+$ maximizes this constant for all steps with a value of $1$.

Moreover, our constructed family of hard instances is quite general as it is applicable to the settings of online reward-specific exploration and learning with a generative model. In order to verify this for the former setting, note that our reward class is a singleton reward $\{R\}$ and exposing this reward (reward class) to the agent still does not disclose any information about the pair $(x_{H-2}^*, a_{H-2}^*)$ to the agent. Hence, the required number of trajectories to identify this pair is again $\Omega(2^{H})$. Similarly, for a generative model, the problem of identifying the pair $(x_{H-2}^*, a_{H-2}^*)$ is inherently a best-arm identification problem among the $2^{H-1}$ possibilities. Thus, the existing lower bounds for best-arm identification \citep{krishnamurthy2016pac} directly lead to a sample complexity bound of $\Omega(2^{H})$. In addition, we can see from the construction that our hardness result also shows that a polynomial in $|\Xcal|$ dependence is unavoidable in this case. We also remark that the stated hardness result can be easily tweaked to show a $1/\veps^2$ dependence for identifying an $\veps$-optimal policy by moving from deterministic transitions at timestep $H-1$ to stochastic transition probabilities: $P_{H-1}(x^+\mid x_{H-2}^*, a_{H-2}^*) = \frac{1}{2} + \veps$, $P_{H-1}(x^-\mid x_{H-2}^*, a_{H-2}^*) = \frac{1}{2} - \veps$, and $P_{H-1}(x^+\mid x_{H-2}, a_{H-2}) =P_{H-1}(x^-\mid x_{H-2}, a_{H-2}) =  \frac{1}{2}$ if $x_{H-2}\neq x_{H-2}^*$ or $a_{H-2}\neq a_{H-2}^*$. The realizable feature $\philci$ will be a two-dimensional representation after this modification, where we change previous one dimension values $0$ and $1$ to two dimension $(0,0)$ and $(1,-1)$ respectively. 

The hardness result highlights the insufficiency of realizability of a linearly complete feature (\pref{assum:realizability_linear_completeness}) in the representation learning setting and indicates that realizability of stronger completeness style features may be necessary for provably efficient reward-free RL.

\section{Algorithm-specific counterexample of \rfolive}
\label{app:alg_counterexp}
In this section, we show an algorithm-specific counterexample of \rfolive (\pref{alg:rf_olive}) that satisfies realizability (\pref{assum:realizability_F}) and has a low Bellman Eluder dimension, while only violates completeness (\pref{assum:completeness_F}). Together with the positive results (\pref{thm:rfolive_q} and \pref{thm:rfolive_v}), we conjecture that realizability-type assumptions are not sufficient for statistically efficient reward-free RL. As we know that \olive \citep{jiang2017contextual,jin2021bellman} only requires realizability and low Bellman Eluder dimension for reward-aware RL and \rfolive is its natural extension to the reward-free setting, we believe that the hardness between reward-aware and reward-free RL has a deep connection to the sharp separation between realizability and completeness \citep{chen2019information,wang2020statistical,wang2021exponential,xie2021batch,weisz2021query,weisz2021exponential,weisz2022tensorplan,foster2021offline}.

\begin{theorem}
\label{thm:alg_counter}
There exists an MDP $M$, a function class $\Fcal$, a reward class $\Rcal$, where \pref{assum:realizability_F} holds and the function class $\Fcal-\Fcal$ has a low Bellman Eluder dimension ($\dqbee$ defined in \pref{thm:rfolive_q}). However, with probability 0.25, (Q-type) \rfolive with infinite amount of data cannot output a 0.1-optimal policy for some $R\in\Rcal$.
\end{theorem}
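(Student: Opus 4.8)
The plan is to exhibit a single small deterministic layered MDP $M$, a finite value-function class $\Fcal$, and a finite reward class $\Rcal$ for which the surrogate-function argument underlying Claim~1 in the proof of \pref{thm:rfolive_q} provably breaks. I would take $M$ with short horizon (e.g.\ $H=3$), start state $x_0$ branching into a \emph{decoy} subtree and a \emph{payoff} subtree, and all reward mass of $\Rcal$ placed on leaves of the payoff subtree, so that a one-line backward induction makes each $R\in\Rcal$ realized by $\Fcal+R$ and \pref{assum:realizability_F} holds. I would then choose $\Fcal$ as a handful of explicitly tabulated functions so that $\Fon=\Fcal-\Fcal$ is a constant-size class whose $\zero$-reward Bellman differences only ever ``live'' on the decoy subtree; this keeps $\dqbee$ (computed as in \pref{thm:rfolive_q}) bounded by a constant, so the instance meets every hypothesis of \pref{thm:rfolive_q} \emph{except} \pref{assum:completeness_F}. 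The completeness failure will be arranged so that, for a designated $R^\dagger\in\Rcal$, there is a ``bad'' $g=f+R^\dagger\in\Foff(R^\dagger)$ whose Bellman difference $f_h-\Tcal^\zero_h g_{h+1}$, backed up toward $x_0$ as in \cref{eq:off_to_on}, \emph{escapes} $\Fon$; hence the canonical witness of $g$'s error is simply not among the functions \olive operates on in the online phase.

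Granting the construction, the argument proceeds in three steps. \textbf{(i) The online phase records only decoy constraints.} The online phase is exactly Q-type \olive run on $\Fon$ with $\zero$ reward, so (using only $\zero\in\Fon$ and the constant bound on $\dqbee$) it terminates within $\dqbee H+1$ iterations; by construction every optimistic $f^t$ with large $V_{f^t}(x_0)$ carries its on-policy average Bellman error only at levels/states inside the decoy subtree, so each saved tuple $(h^t,\pi^t,\Dcal^t)$ is a roll-in that never reaches the payoff subtree. The instance is built with deliberate ties in the optimistic selection of \pref{line:optimism} (and/or in the ``pick any'' choice of $h^t$ in \pref{line:deviation}): resolving them one way would make \olive record a payoff-subtree constraint that would also eliminate $g$, but with probability at least $1/4$ they resolve so that \emph{no} recorded constraint witnesses the Bellman error of $g$. \textbf{(ii) The offline phase fails to eliminate $g$.} When $R^\dagger$ is revealed, $g$ over-estimates, $V_g(x_0)\ge v^*_{R^\dagger}+0.1$, by routing value through the payoff subtree, while its large average Bellman error sits exactly at a level/state untouched by the decoy constraints; with infinite data the estimates in \cref{eq:Qest_bellman_errs} are exact, equal to $\Ecal^{R^\dagger}_{\mathrm{Q}}(g,\pi^t,h^t)$, which is small for every recorded $(t,h^t)$, so $g$ survives the elimination in \pref{line:off_elim}. \textbf{(iii) A suboptimal policy is returned.} Being optimistic, $g$ is selected by $\hat g=\argmax_{g'\in\Fsur(R^\dagger)}V_{g'}(x_0)$, and a direct computation on the deterministic tree gives $v^{\pi_{\hat g}}_{R^\dagger}\le v^*_{R^\dagger}-0.1$. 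On the probability-$\ge 1/4$ event of (i), \rfolive therefore fails to output a $0.1$-optimal policy for $R^\dagger$.

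The main obstacle is step (i) together with the simultaneous design constraints: the tabulated functions must be chosen so that (a) $\dqbee$ for $\Fon$ is genuinely $O(1)$ (otherwise the separation from the positive results is not clean), (b) realizability holds for \emph{every} $R\in\Rcal$, and yet (c) the only members of $\Fon$ that over-estimate at $x_0$ have their $\zero$-reward Bellman error in the decoy subtree, so that \olive's constraint-gathering --- which is tuned to $\Fon$ --- is blind to the bad $g\in\Foff(R^\dagger)$. Reconciling (a)--(c) is exactly where the completeness violation is used: the backed-up Bellman difference of $g$ is the missing witness, and one must verify by direct inspection that it lies outside $\Fcal-\Fcal$ while everything else stays in place. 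A secondary point is to pin down the tie-breaking so that the online loop genuinely terminates (rather than cycling) and the bad event has probability at least $1/4$ --- most cleanly by enumerating the favorable versus unfavorable resolutions of the planted ties.
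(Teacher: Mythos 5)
Your high-level strategy is the same as the paper's: violate only completeness so that the backed-up Bellman-difference ``witness'' of a bad reward-appended function $g\in\Foff(R)$ falls outside $\Fon=\Fcal-\Fcal$, arrange adversarial tie-breaking in the online phase so that with probability $1/4$ the recorded constraints $(h^t,\pi^t,\Dcal^t)$ fail to witness $g$'s average Bellman error under the revealed reward, and conclude that with exact (infinite-data) estimates $g$ survives the offline elimination, is selected by optimism, and induces a $0.1$-suboptimal policy. However, as written the proposal has a genuine gap: the theorem is an existence statement, and its entire content is the explicit instance, which you never produce. You yourself identify the ``main obstacle'' as simultaneously achieving (a) constant $\dqbee$ for $\Fon$, (b) realizability for every $R\in\Rcal$, and (c) blindness of the zero-reward constraint-gathering to the bad $g$ --- but you leave that reconciliation unverified, so steps (i)--(iii) are conditional on a construction whose feasibility is exactly what must be demonstrated. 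In particular, your picture that the saved tuples ``never reach the payoff subtree'' is not something you check against the actual dynamics of \olive's optimistic selection; one must trace the optimistic iterates, confirm termination, and enumerate the tie-break resolutions explicitly.

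For comparison, the paper resolves the design constraints with a very small concrete example: $H=2$, a start state with two actions leading to $x_A$ and $x_B$, a singleton-plus-one reward class $\Rcal_1=\{R_1,R_2\}$, and tabulated classes $\Fcal_0=\{\zero,f_{R_1,0},f_{R_2,0},f_{\mathrm{bad},0}\}$, $\Fcal_1=\{\zero,f_{\mathrm{bad},1}\}$; realizability is read off the table, the Bellman Eluder dimension is bounded via Bellman rank by the number of states (\pref{prop:br_be}), and completeness fails only because $\Tcal^\zero_0(f_{\mathrm{bad},1}+R_{2,1})\notin\Fcal_0$. Crucially, the two factors of $1/2$ come not from confining exploration to a ``decoy subtree'' (the algorithm in fact visits both successor states) but from adversarial resolution of \emph{which timestep $h^t$} is chosen for data collection when both levels show large estimated error; the resulting constraints at $(x_0,\mathrm{left})$ and $(x_B,\mathrm{NULL})$ give zero average Bellman error for $f_{\mathrm{bad}}+R_2$, which therefore survives and wins the optimistic selection. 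To complete your proof you would need to supply and verify such a table (your $H=3$ decoy/payoff sketch may work, but the verification of (a)--(c) and of the algorithm's trajectory under each tie-break is the proof, not an afterthought).
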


\begin{proof}
We first discuss the counterexample shown in \pref{fig:alg_counter_rfolive} and \cref{table:alg_counter_rfolive} at a high level. In our construction, with probability 0.25, the agent will only explore and collect data at some specific place because it is sufficient to eliminate all candidate functions predicting large positive values at $x_0$. Then in the offline phase, the agent cannot eliminate some bad function because of lack of support in the collected data. Then performing function elimination in the offline phase fails. We provide more details in the sequel.

\begin{figure}[htb]
	\center
	\begin{tikzpicture}[scale=3]
		\node[state] (s0)      {$x_0$};
		\node[state] (s1) [below left=3em of s0]  {$x_A$};
		\node[state] (s2) [below right =3em of s0] {$x_B$};
		
		\draw[->] (s0) -> node[above = 3.3 em,left=.5em] {$\mathrm{left}$} (s1);
		\draw[->] (s0) -> node[above = 3.3 em,right=.5em] {$\mathrm{right}$} (s2);
	\end{tikzpicture}
\caption{Algorithm-specific counterexample of \rfolive without completeness assumption (\pref{assum:completeness_F}).}
\label{fig:alg_counter_rfolive}
\end{figure}

\renewcommand{\arraystretch}{1.25}
\begin{table}[htb]
\begin{center}
	\begin{tabular}{ |c|c|c|c|c| } 
		\hline
		&$(x_0,\mathrm{left})$ & $(x_0,\mathrm{right})$ & $(x_A,\mathrm{NULL})$ & $(x_B,\mathrm{NULL})$ \\ 
		\hline
		$R_1$ & 0 & 0 & 1 & 0 \\ 
		$R_2$ & 0 & 0 & 0.2 & 0.1 \\ 
		\hline 
		$Q_{R_1}^*$ & 1 & 0 & 1 & 0 \\ 
		$Q_{R_2}^*$ & 0.2 & 0.1 & 0.2 & 0.1 \\ 
		\hline		
		$f_{R_1}$ & 1 & 0 & 0 & 0 \\ 
		$f_{R_2}$ & 0.2 & 0.1 & 0 & 0 \\ 
		$f_{\mathrm{bad}}$ & 0.21 & 0.3 & 0.01 & 0.1 \\ 
		\hline		
		$f_{R_1}+R_1=Q_{R_1}^*$ & 1 & 0 & 1 & 0  \\ 
		$f_{R_2}+R_2=Q_{R_2}^*$ & 0.2 & 0.1 & 0.2 & 0.1  \\ 
		$f_{R_1}+R_2$ & 1 & 0 & 0.2 & 0.1 \\ 
		$f_{\mathrm{bad}}+R_2$ & 0.21 & 0.3 & 0.21 & 0.1 \\ 
		\hline
	\end{tabular}
\end{center}
\caption{Algorithm-specific counterexample for \rfolive without completeness assumption (\pref{assum:completeness_F}).}
\label{table:alg_counter_rfolive}
\end{table}

\paragraph{Construction} In \pref{fig:alg_counter_rfolive}, taking action $\mathrm{left}$ and action $\mathrm{right}$ in state $x_0$ transits to $x_A$ and $x_B$ respectively. We denote the null action at $x_A,x_B$ and the null state at level $H=2$ as $\mathrm{NULL}$ and $x_{\mathrm{NULL}}$ respectively. In this example, the length of horizon is $H=2$. We construct $\Fcal=\Fcal_0\times\Fcal_1$, where $\Fcal_0=\{\zero,f_{R_1,0},f_{R_2,0},f_{\mathrm{bad},0}\}$ and $\Fcal_1=\{\zero,f_{\mathrm{bad},1}\}$. In addition, we construct $\Rcal=\Rcal_0\times\Rcal_1$, where $\Rcal_0=\{\zero\}$ and $\Rcal_1=\{R_{1,1},R_{2,1}\}$. Recall that the second subscript of $f\in\Fcal_0,\Fcal_1$ and $R\in\Rcal_1$ is the index for the timestep. The details are shown in \cref{table:alg_counter_rfolive}. Notice that here we use the layered MDP for simplicity. To convert it to a non-layered MDP, we only need to set corresponding values in the transition function, reward function, and $f\in\Fcal$ to be 0.

\paragraph{Verifying realizability and low Bellman Eluder dimension} One can immediately see that realizability (\pref{assum:realizability_F}) is satisfied. For example, we have $Q^*_{R_1,1}=f_{R_1,1}+R_{1,1}=\zero+R_{1,1}$, which implies that $Q^*_{R_1,1}\in\Fcal_1+\Rcal_1$. Similarly, we can verify that $Q^*_{R_2,1}\in\Fcal_1+\Rcal_1,Q^*_{R_1,0}\in\Fcal_0+\Rcal_0,Q^*_{R_2,0}\in\Fcal_0+\Rcal_0$. 

In addition, $\Fcal-\Fcal$ has a low Bellman Eluder dimension. It is because the Bellman Eluder dimension can be upper bounded by the Bellman rank (\pref{prop:br_be}) and the Bellman rank can be upper bounded by the number of states \citep{jiang2017contextual}. Therefore, the Bellman Eluder dimension is just a small bounded finite number. Later we will show that with even infinite amount data \rfolive fails, which implies that we cannot get a polynomial sample complexity bound in this case.

\paragraph{Violation of completeness} 
We can easily see that for $f_{\mathrm{bad},1}\in\Fcal_1$ and $R_{2,1}\in\Rcal_1$, its Bellman backup $\Tcal^\zero_0(f_{\mathrm{bad},1}+R_{2,1})\notin \Fcal_0$. This means that completeness (\pref{assum:completeness_F}) does not hold.


\paragraph{\rfolive fails in the counterexample} We first consider running (Q-type) \rfolive on this counterexample during the online phase. In the following, we will assume the more favorable case where the agent can collect infinitely many number of samples in \pref{line:estimate_BE} and
\pref{line:collect_nelim} (i.e., no statistical/estimation error for the average Bellman error in the empirical version).

The agent will pick the most optimistic function for exploration. In the first iteration, such an optimistic function at level 0 will be equal to $f_{R_1,0}-\zero$. Therefore, starting from $x_0$, the agent will choose the action $\mathrm{left}$. With at least probability 0.5, it will pick level 0 to eliminate and collect data (i.e., collecting data at ($x_0,\mathrm{left}$) in \pref{line:collect_nelim}). The reason is that for \pref{line:deviation}, the large average Bellman error always exists at ($x_0,\mathrm{left}$) while the Bellman error could be large at ($x_{\mathrm{left}}, \mathrm{NULL}$). By adversarial tie-breaking, there is at least 0.5 probability that ($x_0,\mathrm{left}$) is chosen. 

Now consider the case that the agent pick ($x_0,\mathrm{left}$) to collect data in \pref{line:collect_nelim}. We can see that only function $\zero,f_{\mathrm{bad}}-f_{R_2},f_{R_2}-f_{\mathrm{bad}}\in\Fcal-\Fcal$ will survive while all other functions violate the collected constraint. Here we notice that for any $f\in\Fcal-\Fcal$ we have $V_f(x_A)=0$ or $V_f(x_A)=\pm 0.01$. So the survived function $f\in\Fcal-\Fcal$ belongs to one of the following cases: (i) $f(x_0,\mathrm{left})=0$ and $V_f(x_A)=0$, (ii) $f(x_0,\mathrm{left})=  0.01$ and $V_f(x_A)= 0.01$, or (iii) $f(x_0,\mathrm{left})= -0.01$ and $V_f(x_A)=- 0.01$. At the second iteration, \rfolive will choose $f_{\mathrm{bad}}-f_{R_2}$ (i.e., case (ii)) and action $\mathrm{right}$. Due to adversarial tie-breaking, we have that with probability 0.5, the agent collects data at ($x_B,\mathrm{NULL}$) and eliminates $f_{\mathrm{bad}}-f_{R_2}$. In this case, for the third iteration, the agent chooses function $\zero$ and then terminates in \pref{line:valid} since the average Bellman error is 0. 

For the offline phase, let us consider the reward function $R_2$ and the elimination on $\Foff(R_2)= \Fcal+R_2$. Recall that in the online phase, we only collect data on ($x_0,\mathrm{left},x_A$) and ($x_B,\mathrm{NULL},x_{\mathrm{NULL}}$). It is easy to see that we will eliminate $f_{R_1}+R_2$ from this constraint. However, we cannot eliminate either $f_{R_2}+R_2$ or $f_{\mathrm{bad}}+R_2$ because they all have zero average Bellman error under these two constraints. Then by optimistic selection criteria, the agent will pick $f_{\mathrm{bad}}+R_2$. This induces a sub-optimal policy ($\mathrm{right}$) with accuracy $\veps = 0.1$.

Therefore, with probability at least 0.25, (Q-type) \rfolive fails to output a 0.1 optimal policy for some reward $R\in\Rcal$ in this counterexample.
\end{proof}

\section{Discussions on other variants of \olive}
\label{app:other_variant}

In this section, we briefly discuss that some other variant of \olive in the reward-free setting could easily fail under \pref{assum:realizability_F}, \pref{assum:completeness_F}, and low Bellman Eluder dimension (where we know that \rfolive works). 

One adaptation of \olive to the reward free case is to perform exploration on the joint function class space and we call it \jolive. More specifically, we maintain a version space $\Fcal^t+\Rcal^t\subseteq \Fcal + \Rcal$ during the online phase. In each online iteration, we pick the most optimistic function $f^t_{\mathrm{on}}=f^t+R^t=\argmax_{f+R\in \Fcal^t+\Rcal^t} V_{f+R}(x_0)$ and explore according to $\pi^t=\pi_{f_{\mathrm{on}}^t}$. Here for $f^t_{\mathrm{on}}$, we decompose it as the sum of $f^t\in\Fcal^t$ and $R^t\in\Rcal^t$. Then we roll out policy $\pi^t$ and estimate the average Bellman error. For the termination condition, like \olive, we check whether $f_{\mathrm{on}}^t$ has a small average Bellman error under reward $R^t$. If the algorithm is not terminated, we pick a level for elimination and collect the constraint. At the end of each online iteration, we shrink the version space of $\Fcal^t$ and $\Rcal^t$ according to the average Bellman error. For the offline phase, we use collected constraints to perform elimination like \rfolive and then output the greedy policy of the optimistic survived function. We will show that this variant could get stuck in the following counterexample even in the case that we are allowed to collect infinite amount of samples to build estimates (i.e., no statistical/estimation error).

\paragraph{Construction} 
We consider the MDP in \pref{fig:alg_counter_rfolive} and reward function class $\Rcal=\Rcal_0\times\Rcal_1$, where $\Rcal_0=\{\zero\}$ and $\Rcal_1=\{R_{1,1},R_{2,1}\}$. The function class $\Fcal=\Fcal_0\times\Fcal_1$ is constructed as $\Fcal_0=\{\zero,f_{R_1,0},f_{R_2,0},f_{\mathrm{bad},0}\}$ and $\Fcal_1=\{\zero\}$. Compared with the counterexample in \pref{thm:alg_counter}, the difference is that $f_{\mathrm{bad}}$ is changed and now we only have a single function $\zero$ in $\Fcal_1$. The details as shown in \cref{table:alg_counter_joint_olive}. As discussed in the proof of \pref{thm:alg_counter}, we can convert the layered MDP here to a non-layered one.

\renewcommand{\arraystretch}{1.25}
\begin{table}[htb]
\begin{center}
	\begin{tabular}{ |c|c|c|c|c| } 
		\hline
		&$(x_0,\mathrm{left})$ & $(x_0,\mathrm{right})$ & $(x_A,\mathrm{NULL})$ & $(x_B,\mathrm{NULL})$ \\ 
		\hline
		$R_1$ & 0 & 0 & 1 & 0 \\ 
		$R_2$ & 0 & 0 & 0.2 & 0.1 \\ 
		\hline 
		$Q_{R_1}^*$ & 1 & 0 & 1 & 0 \\ 
		$Q_{R_2}^*$ & 0.2 & 0.1 & 0.2 & 0.1 \\
		\hline		
		$f_{R_1}$ & 1 & 0 & 0 & 0 \\ 
		$f_{R_2}$ & 0.2 & 0.1 & 0 & 0 \\ 
		$f_{\mathrm{bad}}$ & 0.2 & 0.3 & 0 & 0 \\ 
		\hline		
		$f_{R_1}+R_1=Q_{R_1}^*$ & 1 & 0 & 1 & 0  \\ 
		$f_{R_2}+R_2=Q_{R_2}^*$ & 0.2 & 0.1 & 0.2 & 0.1 \\ 
		$f_{R_1}+R_2$ & 1 & 0 & 0.2 & 0.1 \\ 
		$f_{\mathrm{bad}}+R_2$ & 0.2 & 0.3 & 0.2 & 0.1 \\ 
		\hline
	\end{tabular}
\end{center}
\caption{Algorithm-specific counterexample for \jolive under all assumptions.}
\label{table:alg_counter_joint_olive}
\end{table}

\paragraph{Verifying realizability, completeness, and low Bellman Eluder dimension} Realizability and low Bellman Eluder dimension can be verified in the same way as the counterexample for \rfolive in \pref{thm:alg_counter}. For completeness (\pref{assum:completeness_F}), one can easily verify that by noticing we have $\Fcal_1=\{\zero\}$ now.

\paragraph{\jolive fails in the counterexample} In \jolive, the agent will pick the optimistic function in the joint function space to explore during the online phase. At the first iteration, there are two candidates $f_{R_1}+R_1$ and $f_{R_1}+R_2$. By adversarial tie-breaking, the agent will choose $f_{R_1}+R_1$ with probability 0.5 and choose action $\mathrm{left}$. Then the agent will terminate immediately because the average Bellman error is 0 for $f_{R_1}+R_1$ everywhere under reward $R_1$.

For the offline phase, we similarly consider reward $R_2$. It is easy to see that $f_{R_1}+R_2$ will be eliminated while both $f_{R_2}+R_2$ and $f_{\mathrm{bad}}+R_2$ will survive. Then by optimistic selection, the agent will choose the greedy policy of $f_{\mathrm{bad}}+R_2$. This induces a sub-optimal policy ($\mathrm{right}$) with accuracy $\veps=0.1$. 

Therefore, with probability 0.5, \jolive fails in this counterexample.

\section{Auxiliary results}
\label{app:auxuliary}
In this section, we provide auxiliary results for the paper. We show covering number arguments in \pref{app:covering} and some bounds on Bellman Eluder dimensions in \pref{app:dimBE}.
\subsection{Covering number}
\label{app:covering}
In this part, we present the covering number argument for the linear function class used in the paper.
\begin{lemma}[Size of $\varepsilon$-cover for linear function class]
\label{lem:covering}
We have three claims here
\begin{enumerate}
    \item Consider $\Fcal(\{\philc\})=\Fcal_0(\{\philc\},H-1)\times\ldots\times \Fcal_{H-1}(\{\philc\},0)$, where $\Fcal_h(\{\philc\},B_h) = \big\{f_h(x_h,a_h) = \inner{\philc_h(x_h,a_h)}{\theta_h} : \|\theta_h\|_2 \le B_h\sqrt{\dlc},\langle \philc_h(\cdot),\theta_h\rangle \in[-B_h, B_h] \big\}$. Then we have $\Ncal_{\Fcal(\{\philc\})}(\veps)\le\rbr{\frac{2H^2\sqrt{\dlc}}{\veps}}^{\dlc}$.
    \item Consider $\Fcal(\{\philr\})=\Fcal_0(\{\philr\},H-1)\times\ldots\times \Fcal_{H-1}(\{\philr\},0)$, where $\Fcal_h(\{\philr\},B_h) = \big\{f_h(x_h,a_h) = \inner{\philr_h(x_h,a_h)}{\theta_h} : \|\theta_h\|_2 \le B_h\sqrt{\dlr},\langle \philr_h(\cdot),\theta_h\rangle \in[-B_h, B_h] \big\}$. Then we have $\Ncal_{\Fcal(\{\philr\})}(\veps)\le\rbr{\frac{2H^2\sqrt{\dlr}}{\veps}}^{\dlr}$.
    \item Consider $\Fcal(\Philr)=\Fcal_0(\Philr,H-1)\times\ldots\times \Fcal_{H-1}(\Philr,0)$, where $\Fcal_h(\Philr,B_h) = \big\{f_h(x_h,a_h) = \inner{\phi_h(x_h,a_h)}{\theta_h} : \phi_h\in\Philr_h,\|\theta_h\|_2 \le B_h\sqrt{\dlr},\langle \phi_h(\cdot),\theta_h\rangle \in[-B_h, B_h] \big\}$. Then we have $\Ncal_{\Fcal(\Philr)}(\veps)\le|\Philr|\rbr{\frac{2H^2\sqrt{\dlr}}{\veps}}^{\dlr}$.
\end{enumerate}
\end{lemma}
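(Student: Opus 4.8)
The statement to prove is Lemma~\ref{lem:covering}, a standard covering-number bound for bounded linear function classes over $H$ timesteps, with three variants (known feature $\philc$, known feature $\philr$, and a finite feature class $\Philr$).

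\textbf{Overall approach.} The plan is to build a cover of the product class $\Fcal$ by covering each coordinate $\Fcal_h$ separately, then taking products. For a single coordinate $\Fcal_h(\{\philc\}, B_h)$, every function is $f_h(\cdot) = \inner{\philc_h(\cdot)}{\theta_h}$ with $\|\theta_h\|_2 \le B_h\sqrt{\dlc}$ and $\|\philc_h(\cdot)\|_2 \le 1$. By Cauchy--Schwarz, if $\|\theta_h - \theta_h'\|_2 \le \alpha$ then $\|f_h - f_h'\|_\infty \le \|\theta_h - \theta_h'\|_2 \le \alpha$, so it suffices to take an $\alpha$-cover of the Euclidean ball of radius $B_h\sqrt{\dlc}$ in $\RR^{\dlc}$, which by the standard volumetric argument has size at most $(1 + 2B_h\sqrt{\dlc}/\alpha)^{\dlc} \le (3B_h\sqrt{\dlc}/\alpha)^{\dlc}$. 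Since $B_h \le H-1 \le H$, this is at most $(3H\sqrt{\dlc}/\alpha)^{\dlc}$.

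\textbf{Key steps in order.} First I would record the single-coordinate bound just described. Second, to cover the product $\Fcal = \Fcal_0 \times \cdots \times \Fcal_{H-1}$ under the metric $\sigma(f,f') = \max_h \|f_h - f_h'\|_\infty$, I note that a product of $\alpha$-covers of the $\Fcal_h$'s is an $\alpha$-cover of $\Fcal$ (since the max over $h$ of the coordinatewise distances is at most $\alpha$). Thus $\Ncal_{\Fcal}(\veps) \le \prod_{h=0}^{H-1} \Ncal_{\Fcal_h}(\veps) \le \left( 3H\sqrt{\dlc}/\veps \right)^{H\dlc}$. To match the claimed bound $(2H^2\sqrt{\dlc}/\veps)^{\dlc}$ one has to be a little careful: the exponent $H\dlc$ in the naive product bound is larger than $\dlc$. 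The right way is to observe that the elements of $\Fcal$ are parametrized by a single vector $(\theta_0, \ldots, \theta_{H-1})$, but what actually controls the $\|\cdot\|_\infty$ distance timestep-by-timestep: actually the cleanest route giving exponent $\dlc$ (not $H\dlc$) is to note that we only need a \emph{joint} $\veps$-net where the bound is $\prod_h (\text{stuff})$ but each factor's radius/accuracy tradeoff — hmm, this does not reduce the exponent. So I would instead re-examine the claim: the stated bound $\Ncal_{\Fcal(\{\philc\})}(\veps)\le (2H^2\sqrt{\dlc}/\veps)^{\dlc}$ with exponent $\dlc$ is correct only if one uses a shared parametrization; more honestly, the intended statement should be that $\log \Ncal_{\Fcal(\{\philc\})}(\veps) \le \dlc \log(2H^2\sqrt{\dlc}/\veps)$ up to the $H$ factors being absorbed — i.e., I'd produce $\prod_{h}(3H\sqrt{\dlc}/\veps)^{\dlc} = (3H\sqrt{\dlc}/\veps)^{H \dlc}$ and then bound $H\dlc \log(3H\sqrt{\dlc}/\veps) \le \dlc \log\big((3H\sqrt{\dlc}/\veps)^H\big) \le \dlc \log(2H^2\sqrt{\dlc}/\veps)^{?}$ — that does not work cleanly either. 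Given this, I would just prove the clean, correct bound $\Ncal_{\Fcal(\{\philc\})}(\veps) \le (2H^2\sqrt{\dlc}/\veps)^{H\dlc}$ or note that for the purposes of the main theorems only $\log\Ncal_\Fcal = \tilde O(\dlc)$ matters (the extra $H$ in the exponent is absorbed into the polylog $\iota$), and present the per-coordinate argument as the substantive content. For the third claim, I would additionally union over the $|\Philr|$ choices of feature map at each timestep: $\Ncal_{\Fcal_h(\Philr, B_h)}(\veps) \le |\Philr_h| \cdot (3H\sqrt{\dlr}/\veps)^{\dlr}$, and then take the product over $h$, picking up the factor $|\Philr| = \prod_h |\Philr_h|$.

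\textbf{Main obstacle.} The only real subtlety is reconciling the exponent: a literal product-of-covers argument gives exponent $H\dlc$ (resp.\ $H\dlr$), not $\dlc$ (resp.\ $\dlr$) as literally written. I expect the resolution is that the paper intends (and uses, in $\iota = \poly\log$ factors) the bound where the $H$ dependence is only polylogarithmic, so stating $\Ncal_{\Fcal}(\veps) \le (2H^2\sqrt{\dlc}/\veps)^{\dlc}$ is a mild abuse; I would carry out the honest computation, flag that the exponent is $H\dlc$ under the max-metric (or $\dlc$ if one instead covers the shared $\theta$-parametrization coordinate-block by coordinate-block with a single net of the product ball in $\RR^{H\dlc}$ and absorbs $H$ into the base — which again gives exponent $H\dlc$), and keep everything else routine. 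The $\|\cdot\|_2 \to \|\cdot\|_\infty$ reduction via Cauchy--Schwarz, the volumetric ball-cover bound $(1+2r/\alpha)^d$, and the product/union steps are all standard and short.
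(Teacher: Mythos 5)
Your proposal follows essentially the same route as the paper's (one-line) proof: cover each coordinate ball $\{\theta_h:\|\theta_h\|_2\le B_h\sqrt{\dlc}\}$ with a standard volumetric $\ell_2$-net, transfer to the function class via Cauchy--Schwarz using $\|\philc_h(\cdot)\|_2\le 1$, combine the per-level covers under the max metric, and for the third claim union over the feature choices to pick up the $|\Philr|$ factor. The discrepancy you agonize over is real, but it is a slip in the lemma statement rather than a gap in your argument: the paper's proof likewise builds a net for each level's ball (at scale $\veps/H$, which is where the $2H^2$ in the base comes from) and then simply quotes the single-ball bound, so the stated exponent $\dlc$ omits the product over the $H$ timesteps; the honest bound is $\log\Ncal_{\Fcal(\{\philc\})}(\veps)\le H\dlc\log\rbr{2H^2\sqrt{\dlc}/\veps}$, and this is in fact what the downstream results are consistent with (e.g.\ $\nelim = H^7\dlc^2\iota^3/\veps^2$ and the $H^8\dlc^3$ rate in the linear-completeness corollary arise from $\log\Ncal_\Fcal\approx H\dlc\log(\cdot)$, not $\dlc\log(\cdot)$). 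One correction to your closing remark: the extra factor is $H$, which is polynomial rather than polylogarithmic, so it is not absorbed into $\iota$; it simply shifts one power of $H$ in the sample complexity, exactly as reflected in the corollaries.
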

\begin{proof}
This is a standard result. For the first one, we can construct a cover over $\{\theta_h:\|\theta_h\|_2\le (H-h-1)\sqrt{\dlc}\}$ in the 2-norm at scale $\varepsilon/H$ for each level $h\in[H]$. Then this cover immediately implies a cover over the function $\Fcal(\{\philc\})$. The covering number directly follows the covering number of the 2-norm ball. The second follows the same steps. For the third result, we additionally union over $\phi\in\Philr$.
\end{proof}

\subsection{Bounds on the Bellman Eluder dimension}
\label{app:dimBE}

In this part, we show that Q-type and V-type Bellman Eluder dimensions for the instantiated linear MDP, low-rank MDP, and linear completeness with known feature settings are indeed small. We will use the following relation between Bellman rank and Bellman Eluder dimension from \citet{jin2021bellman}:
\begin{proposition}[Bellman rank $\subseteq$ Bellman Eluder dimension, Proposition 11 and 21 in \citet{jin2021bellman}]
\label{prop:br_be}
If an MDP with function class $\Fcal$ has Q-type (or V-type) Bellman
rank $\dbr$ with normalization parameter $\zeta$, then the respective Bellman Eluder dimension $\dqbe^R(\Fcal, \Dcal_\Fcal, \varepsilon)$ (or $\dvbe^R(\Fcal, \Dcal_\Fcal, \varepsilon)$) is bounded by $\tilde O \rbr{1+\dbr^R\log\rbr{1+\frac{\zeta}{\veps}}}$.
\end{proposition}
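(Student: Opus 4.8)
The plan is to reduce the Distributional Eluder dimension of a Bellman-rank-$\dbr$ class to the ordinary (linear) Eluder dimension of bounded-norm vectors in $\RR^{\dbr}$, and then bound the latter by a standard elliptical-potential (determinant) argument. I will carry out the Q-type case in detail; the V-type case is identical after swapping the Q-type factorization and state-action roll-in distributions for their V-type analogues (\pref{def:dim_vbe}), so I would only flag the substitution.

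First I would unfold the definitions. By \pref{def:dim_qbe}, $\dqbe^R(\Fcal,\Dcal_\Fcal,\veps)=\max_h \dde\big((I-\Tcal_h^R)\Fcal,\Dcal_{\Fcal,h},\veps\big)$. An element of the difference class is $g_h=f_h-\Tcal_h^R f_{h+1}$ for some $f\in\Fcal$, an element of $\Dcal_{\Fcal,h}$ is $\mu=d_h^{\pi_{f'}}$ for some $f'\in\Fcal$, and $\EE_\mu[g_h]=\EcalQ(f,\pi_{f'},h)$ is exactly the Q-type average Bellman error. The Bellman-rank-$\dbr$ hypothesis supplies maps $W_h,X_h:\Fcal\to\RR^{\dbr}$ with $\EcalQ(f,\pi_{f'},h)=\langle W_h(f),X_h(f')\rangle$, normalized (by convention) so that $\|X_h(f')\|_2\le 1$ and $\|W_h(f)\|_2\le\zeta$ (only the product of the two radii enters the final bound). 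Substituting into \pref{def:ind_dist}, the $\veps'$-independence of a candidate sequence $\mu_i=d_h^{\pi_{f^{(i)}}}$ witnessed by the Bellman difference of some $f$ becomes the purely linear-algebraic statement: writing $x_i:=X_h(f^{(i)})$ and $w:=W_h(f)$, we have $\sum_{j<i}\langle w,x_j\rangle^2\le(\veps')^2$ while $\langle w,x_i\rangle^2>(\veps')^2$. Hence $\dde(\cdot)$ is at most the length of the longest such sequence of vectors in the unit ball with witnesses in the ball of radius $\zeta$.

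Next I would run the elliptical-potential argument on a sequence $x_1,\dots,x_n$. Set $V_i=\lambda I+\sum_{j<i}x_j x_j^\top$ with $\lambda=(\veps')^2/\zeta^2$. Then $\langle w,V_i w\rangle=\lambda\|w\|_2^2+\sum_{j<i}\langle w,x_j\rangle^2\le\lambda\zeta^2+(\veps')^2=2(\veps')^2$, and Cauchy--Schwarz gives $\langle w,x_i\rangle^2\le\|x_i\|_{V_i^{-1}}^2\,\langle w,V_i w\rangle$, so the independence condition $\langle w,x_i\rangle^2>(\veps')^2$ forces $\|x_i\|_{V_i^{-1}}^2>1/2$. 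The matrix-determinant lemma yields $\det(V_{i+1})=\det(V_i)\big(1+\|x_i\|_{V_i^{-1}}^2\big)\ge\tfrac32\det(V_i)$, hence $\det(V_{n+1})\ge\lambda^{\dbr}(3/2)^n$; meanwhile $\mathrm{tr}(V_{n+1})\le\lambda\dbr+n$ and AM--GM give $\det(V_{n+1})\le(\lambda+n/\dbr)^{\dbr}$. Combining and taking logarithms gives $n\log\tfrac32\le\dbr\log\big(1+n\zeta^2/(\dbr(\veps')^2)\big)$, and resolving this self-bounding inequality (using that $a\le b\log(ca)$ implies $a=\tilde O(b\log(bc))$) yields $n=\tilde O\big(\dbr\log(1+\zeta/\veps')\big)$. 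Since a single $\veps'\ge\veps$ governs the whole sequence in \pref{def:DE}, this is at most $\tilde O\big(\dbr\log(1+\zeta/\veps)\big)$; taking the maximum over $h$ and adding the trivial $+1$ term gives the stated Q-type bound, and the V-type bound follows verbatim from its own factorization.

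The main obstacle I anticipate is not the potential argument, which is routine, but the bookkeeping in the reduction: one must track precisely which argument of the factorization indexes the roll-in distribution (here $f'$, embedded by $X_h$ with unit-norm bound) versus the test Bellman difference (here $f$, embedded by $W_h$ with norm bound $\zeta$), and verify that the $\ell_2$-aggregated $\veps'$-independence condition of \pref{def:ind_dist} maps cleanly onto the inner-product form with the correct norm budgets. Getting this correspondence exactly right---including the point that one fixed $\veps'\ge\veps$ controls the entire sequence, which is what lets the final bound depend on $\veps$ rather than on a sequence-dependent threshold---is where the care is required.
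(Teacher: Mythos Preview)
Your argument is correct and is precisely the standard elliptical-potential proof of this result. However, the paper does not actually prove this proposition: it is imported verbatim as Propositions~11 and~21 of \citet{jin2021bellman} and used as a black box in \pref{app:dimBE}. So there is no ``paper's own proof'' to compare against; you have reconstructed the argument that appears in the cited reference, and your reduction (unfolding \pref{def:dim_qbe} and \pref{def:DE}, mapping the average Bellman error to $\langle W_h(f),X_h(f')\rangle$, then running the determinant/AM--GM argument on $V_i=\lambda I+\sum_{j<i}x_jx_j^\top$ with $\lambda=(\veps')^2/\zeta^2$) matches the route taken there.
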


\paragraph{Linear/low-rank MDPs} Before stating the result for low-rank MDPs, we recall the following well-known property for the class:
\begin{lemma}[\citet{jin2019provably,modi2021model}]
\label{lem:linear_mdp}
Consider a low-rank MDP $M$ (\pref{def:lowrank}) with embedding dimension $\dlr$. For any function $f: \Xcal \rightarrow [-c,c]$, we have:
\begin{align*}
    \EE \sbr{f(x_{h+1}) \mid x_h,a_h } = \inner{\philr_h(x_h,a_h)}{\theta^*_f}
\end{align*}
where $\theta^*_f \in \RR^{\dlr}$ and we have $\|\theta^*_f\|_2 \le c\sqrt{\dlr}$. A similar linear representation is true for $\EE_{a \sim \pi_{h+1}}[f(x_{h+1}, a)\mid x_h,a_h]$ where $f: \Xcal \times \Acal \rightarrow [-c,c]$ and a policy $\pi_{h+1}: \Xcal \rightarrow \Acal$.
\end{lemma}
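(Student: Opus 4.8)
The plan is to unfold the low-rank factorization directly. By \pref{def:lowrank}, for each $h$ we may write $P_h(x' \mid x,a) = \inner{\philr_h(x,a)}{\mulr_h(x')}$. Given any $f:\Xcal \to [-c,c]$, I would simply compute
\begin{align*}
\EE\sbr{f(x_{h+1}) \mid x_h, a_h} = \int f(x') P_h(x' \mid x_h,a_h)\, dx' = \inner{\philr_h(x_h,a_h)}{\int f(x') \mulr_h(x')\, dx'},
\end{align*}
where the last step is linearity of the inner product in its second argument (pulling the $x_h,a_h$-independent integral inside). This identifies the claimed parameter as $\theta^*_f \defeq \int f(x') \mulr_h(x')\, dx' \in \RR^{\dlr}$.

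For the norm bound, I would apply the second normalization condition in \pref{def:lowrank}: for every $f':\Xcal \to [-1,1]$ we have $\norm{\int f'(x) \mulr_h(x)\, dx}_2 \le \sqrt{\dlr}$. Since $f/c$ maps into $[-1,1]$, this gives $\norm{\theta^*_f}_2 = c \cdot \norm{\int (f(x')/c)\,\mulr_h(x')\, dx'}_2 \le c\sqrt{\dlr}$, as desired.

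For the second claim, concerning $\EE_{a \sim \pi_{h+1}}[f(x_{h+1},a)\mid x_h,a_h]$ with $f:\Xcal\times\Acal \to [-c,c]$ and $\pi_{h+1}:\Xcal \to \Acal$, I would reduce to the first claim by defining the state-only function $\bar f(x') \defeq f(x',\pi_{h+1}(x'))$, which again maps $\Xcal \to [-c,c]$ since $\pi_{h+1}$ is deterministic. Then $\EE_{a\sim\pi_{h+1}}[f(x_{h+1},a)\mid x_h,a_h] = \EE[\bar f(x_{h+1})\mid x_h,a_h]$, and applying the already-established first part to $\bar f$ yields the linear representation with $\theta^*_{\bar f}$ satisfying $\norm{\theta^*_{\bar f}}_2 \le c\sqrt{\dlr}$.

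There is no real obstacle here; the statement is a direct consequence of the defining low-rank structure and its normalization. The only point requiring minor care is the normalization convention: \pref{def:lowrank} as stated bounds $\norm{\int f'(x)\mulr_h(x)\,dx}_2$ for $f'$ valued in $[-1,1]$ (rather than $[0,1]$, as in some references), which is exactly what is needed to get the clean $c\sqrt{\dlr}$ bound after rescaling by $c$; the remark before \pref{corr:low_rank}'s full statement already flags that both conventions are interchangeable up to constants.
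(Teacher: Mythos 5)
Your proof is correct and follows essentially the same route as the paper: unfold $P_h(x'\mid x_h,a_h)=\inner{\philr_h(x_h,a_h)}{\mulr_h(x')}$, pull the $(x_h,a_h)$-independent integral into the second slot to identify $\theta^*_f=\int f(x')\mulr_h(x')\,dx'$, and invoke the normalization in \pref{def:lowrank} (via the rescaling $f/c$) for the norm bound, with the second claim handled by marginalizing the policy into a state-only function exactly as the paper does. No gaps.
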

\begin{proof}
For state-value function $f$, we have 
\begin{align*}
    \EE \sbr{f(x_{h+1})\mid x_h,a_h } &= \int f(x_{h+1})P_{h}(x_{h+1}\mid x_h,a_h) d(x_{h+1})\\
    &=\int f(x_{h+1})\inner{\philr_h(x_h,a_h)}{\mulr_h(x_{h+1})} d(x_{h+1})\\
    &=\inner{\philr_h(x_h,a_h)}{\int f(x_{h+1})\mulr_h(x_{h+1})d(x_{h+1})}\\
    &=\inner{\philr_h(x_h,a_h)}{\theta_f^*}, 
\end{align*}
where $\theta_f^*\coloneqq \int f(x_{h+1})\mulr_h(x_{h+1})d(x_{h+1})$ is a function of $f$. Additionally, we obtain $\|\theta_f^*\|_2\le c\sqrt {\dlr}$ from \pref{def:lowrank}.

For Q-value function $f$, we similarly have
\begin{align*}
    \EE_{a\sim\pi_{h+1}} \sbr{f(x_{h+1},a)\mid x_h,a_h } =\inner{\philr_h(x_h,a_h)}{\theta_f^*}, 
\end{align*}
where $\theta_f^*\coloneqq \iint f(x_{h+1},a_{h+1})\pi(a_{h+1}\mid x_{h+1}) \mulr_h(x_{h+1})d(x_{h+1})d(a_{h+1})$ and $\|\theta_f^*\|_2 \le c\sqrt {\dlr}$.
\end{proof}

Now, we can state the following bound on the V-type Bellman Eluder dimension for low-rank MDPs:
\begin{proposition}[Low-rank MDP] 
\label{prop:lr_vbe}
Consider a low-rank MDP $M$ of embedding dimension $\dlr$ with a realizable feature class $\Philr$ (\pref{assum:realizability_low_rank}). Define the corresponding linear function class $\Fcal(\Philr)=\Fcal_0(\Philr,H-1)\times\ldots\times \Fcal_{H-1}(\Philr,0)$ using 
\[
\Fcal_h(\Philr,\hspace{-.1em}B_h)\hspace{-.2em}=\hspace{-.2em}\cbr{\hspace{-.2em} f_h(x_h,a_h) \hspace{-.2em}=\hspace{-.2em} \inner{\phi_h(x_h,a_h)}{\hspace{-.1em}\theta_h}\hspace{-.2em} :\hspace{-.2em} \phi_h \in \Philr_h, \hspace{-.2em} \|\theta_h\|_2 \hspace{-.2em}\le\hspace{-.2em} B_h\sqrt{\dlr},\langle\phi_h(\cdot),\hspace{-.1em}\theta_h\rangle\hspace{-.2em} \in\hspace{-.2em} [-B_h,\hspace{-.1em}B_h] \hspace{-.2em}}\hspace{-.2em}.
\]
Then, for the difference class $\Fon = \Fcal(\Philr) - \Fcal(\Philr)$ we have
\begin{align*}
    \dvbe^\zero(\Fon, \Dcal_{\Fon}, \veps) \le O\rbr{1+\dlr \log \rbr{1+\frac{H \sqrt{\dlr}}{\veps}}}.
\end{align*}
\end{proposition}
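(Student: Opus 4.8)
The plan is to reduce the V-type Bellman Eluder dimension bound to a V-type Bellman rank bound and then invoke \pref{prop:br_be}. The key observation is that in a low-rank MDP, the V-type Bellman differences of any fixed-rank linear class factorize through the $\dlr$-dimensional feature $\philr$. Concretely, I would fix a step $h$ and take any $\tilde f = f - f' \in \Fon$ with $f,f' \in \Fcal(\Philr)$, say $f_h = \langle \phi_h, \theta_h\rangle$, $f'_h = \langle \phi'_h, \theta'_h\rangle$ for some $\phi_h, \phi'_h \in \Philr_h$. The state-wise V-type Bellman difference is $x \mapsto (\tilde f_h - \Tcal^\zero_h \tilde f_{h+1})(x, \pi_{\tilde f_h}(x))$. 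Using \pref{lem:linear_mdp} applied to the bounded function $V_{\tilde f_{h+1}}: \Xcal \to [-c,c]$ (with $c = O(H)$ given the range of $\Fcal(\Philr)-\Fcal(\Philr)$), the backup term $\Tcal^\zero_h \tilde f_{h+1} = \EE[V_{\tilde f_{h+1}}(x_{h+1}) \mid x_h, a_h]$ equals $\langle \philr_h(x_h,a_h), \theta^*_{\tilde f_{h+1}}\rangle$ with $\|\theta^*_{\tilde f_{h+1}}\|_2 \le c\sqrt{\dlr}$; the prediction term $\tilde f_h(x,a)$ is a difference of linear functions of $\phi_h, \phi'_h$. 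The subtlety is that $\tilde f_h$ is not a priori linear in $\philr_h$ — only the \emph{backup} is. So instead I would bound the average Bellman error directly: for roll-in distribution $\rho = d^{\pi}_h$ induced by some greedy policy, the V-type average Bellman error $\EE_{\rho}[(\tilde f_h - \Tcal^\zero_h \tilde f_{h+1})(x,\pi_{\tilde f_h}(x))]$ can be split, and the backup part, evaluated at the action $a = \pi_{\tilde f_h}(x)$, is $\EE_{(x,a)\sim\rho'}[\langle \philr_h(x,a), \theta^*_{\tilde f_{h+1}}\rangle] = \langle \EE_{\rho'}[\philr_h(x,a)], \theta^*_{\tilde f_{h+1}}\rangle$ where $\rho'$ is $\rho$ pushed through $\pi_{\tilde f_h}$.

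The cleanest route, which I would actually follow, is to recall that low-rank MDPs have V-type Bellman rank at most $\dlr$ with normalization $\zeta = O(H\sqrt{\dlr})$ for this kind of linear class — this is essentially the content of the low-rank MDP examples in \citet{jin2021bellman} and \citet{modi2021model}. The Bellman factorization writes $\EcalVzero(\tilde f, \pi, h) = \langle X_h(\pi), W_h(\tilde f)\rangle$ where $X_h(\pi) = \EE_{\pi}[\philr_{h-1}(x_{h-1}, a_{h-1})]$ (a $\dlr$-vector arising because the roll-in distribution at step $h$ is itself a linear functional of $\philr_{h-1}$) and $W_h(\tilde f) = \theta_h$-type object, up to the usual telescoping. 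One checks the norm bounds: $\|X_h(\pi)\|_2 \le 1$ from $\|\philr\|_2 \le 1$, and $\|W_h(\tilde f)\|_2 = O(H\sqrt{\dlr})$ from the range and norm constraints defining $\Fcal_h(\Philr, B_h)$ with $B_h \le H$. Hence the V-type Bellman rank of $\Fon$ is at most $\dlr$ with $\zeta = O(H\sqrt{\dlr})$.

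Given that, \pref{prop:br_be} immediately yields
\[
\dvbe^\zero(\Fon, \Dcal_{\Fon}, \veps) \le \tilde O\!\left(1 + \dlr \log\!\left(1 + \frac{H\sqrt{\dlr}}{\veps}\right)\right),
\]
which is the claimed bound. The main obstacle — and the step requiring genuine care rather than citation — is establishing the Bellman rank factorization for the \emph{difference} class $\Fon = \Fcal(\Philr) - \Fcal(\Philr)$ rather than $\Fcal(\Philr)$ itself, and with the $\zero$ reward: one must verify that the factorization still holds when the two linear pieces use \emph{different} features $\phi_h, \phi'_h \in \Philr_h$, and that the roll-in distribution $d^{\pi_{\tilde f}}_h$ genuinely lives in a $\dlr$-dimensional space (which follows because $d^{\pi}_h(\cdot)$ is a linear functional of $\philr_{h-1}(x_{h-1},a_{h-1})$ averaged under $\pi$, by \pref{lem:linear_mdp}). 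The norm accounting — tracking the extra factors of $H$ and $\sqrt{\dlr}$ from the larger range of $\Fon$ versus $\Fcal(\Philr)$, and confirming they only contribute inside the logarithm — is routine but is where the stated constants come from. I expect no difficulty from the $\max_h$ in the definition of $\dvbe$ since the per-step bound is uniform in $h$.
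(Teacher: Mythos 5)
Your proposal follows essentially the same route as the paper's proof: bound the V-type Bellman rank of $\Fon$ by $\dlr$ with normalization $O(H\sqrt{\dlr})$ by factoring $\EcalVzero(\tilde f,\pi,h) = \langle \EE_{\pi}[\philr_{h-1}(x_{h-1},a_{h-1})],\, w(\tilde f)\rangle$ via two applications of \pref{lem:linear_mdp} (once to linearize the backup term in $\philr_h$, once to push the resulting bounded step-$h$ residual back through $P_{h-1}$), and then invoke \pref{prop:br_be}. The subtlety you flag --- that $\tilde f_h$ need not be linear in the true $\philr_h$, so the factorization must go through $\philr_{h-1}$ rather than $\philr_h$ --- is exactly how the paper handles the difference class, and your loosely described $W_h(\tilde f)$ is in the paper simply the integral of the step-$h$ Bellman residual against $\mulr_{h-1}$, whose norm is bounded by $4(H-h-1)\sqrt{\dlr}$ via the $\mulr$ normalization in \pref{def:lowrank}.
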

\begin{proof}
We start by showing that the V-type Bellman rank for function class $\Fon$ in the low-rank case is small. To that end, consider the Bellman error defined for any roll-in policy $\pi$ and function $f \in \Fon$:
\begin{align*}
    \EcalVzero(f,\pi,h)= {} & \EE\sbr{f_h(x_h,a_h)-f_{h+1}(x_{h+1},a_{h+1})\mid a_{0:h-1}\sim\pi, a_{h:h+1}\sim \pi_f} \\
    = {} & \EE\sbr{f_h(x_h,a_h) - \inner{\philr_h(x_h,a_h)}{\theta^*_{f,h}} \mid a_{0:h-1}\sim\pi, a_{h}\sim \pi_f} 
\end{align*}
where we used \pref{lem:linear_mdp} for low-rank MDPs to write $\EE\sbr{f_{h+1}(x_{h+1},a_{h+1})\mid x_h,a_h, a_{h+1} \sim \pi_f}$ as $\langle\philr_h(x_h,a_h),$ $\theta^*_{f,h}\rangle$. Here, $f_{h+1} \in [-2(H-h-2), 2(H-h-2)]$ and $f_h \in [-2(H-h-1),2(H-h-1)]$ implying that $f_h - \inner{\philr_h(x_h,a_h)}{\theta^*_{f,h}} \in [-4(H-h-1), 4(H-h-1)]$. Therefore, using \pref{lem:linear_mdp} again, we have:
\begin{align*}
    \EcalVzero(f,\pi,h)= {} & \EE\sbr{f_h(x_h,a_h) - \inner{\philr_h(x_h,a_h)}{\theta^*_{f,h}} \mid a_{0:h-1}\sim\pi, a_{h}\sim \pi_f} \\
    = {} & \EE\sbr{\inner{\philr_{h-1}(x_{h-1},a_{h-1})}{\tilde \theta(f)} \mid x_{h-1},a_{h-1},a_{0:h-1} \sim \pi} \\
    = {} & \inner{\nu(\pi)}{\tilde \theta(f)}
\end{align*}
where $\|\tilde \theta(f)\|_2 \le 4(H-h-1) \sqrt{\dlr}$ and $(\nu(\pi))(x_{h-1},a_{h-1}) = \EE[\philr_{h-1}(x_{h-1},a_{h-1}) \mid a_{0:h-1} \sim \pi]$. Hence, the V-type Bellman rank for this function class is bounded by $\dlr$ with normalization parameter $4(H-h-1) \sqrt{\dlr}$. Finally using \pref{prop:br_be}, we get the desired bound on the Bellman Eluder dimension.
\end{proof}

For linear MDPs, where the feature $\philr$ is the known feature case, we show that its Q-type Bellman Eluder dimension is also small:
\begin{proposition}[Linear MDP] 
\label{prop:linear_qbe}
Consider a low-rank MDP $M$ (\pref{def:lowrank}) with embedding dimension $\dlr$ and $\philr$ is known. Define the corresponding linear class $\Fcal(\{\philr\})=\Fcal_0(\{\philr\},H-1)\times\ldots\times \Fcal_{H-1}(\{\philr\},0)$ using 
\begin{align*}
\Fcal_h(\{\philr\},B_h)\hspace{-.15em}=\hspace{-.15em}\cbr{f_h(x_h,a_h) \hspace{-.15em}=\hspace{-.15em} \inner{\philr_h(x_h,a_h)}{\theta_h} : \|\theta_h\|_2 \le B_h\sqrt{\dlr},\langle\philr_h(\cdot),\theta_h\rangle \in [-B_h,B_h] }.
\end{align*}
Then, for the difference class $\Fon = \Fcal(\{\philr\}) - \Fcal(\{\philr\})$ we have
\begin{align*}
    \dqbe^\zero(\Fon, \Dcal_{\Fon}, \veps)  \le O\rbr{1+\dlr \log \rbr{1+\frac{H \sqrt{\dlr}}{\veps}}}.
\end{align*}
and
\begin{align*}
     \dvbe^\zero(\Fon, \Dcal_{\Fon}, \veps) \le O\rbr{1+\dlr \log \rbr{1+\frac{H \sqrt{\dlr}}{\veps}}}.
\end{align*}
\end{proposition}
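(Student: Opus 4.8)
\textbf{Proof proposal for Proposition~\ref{prop:linear_qbe} (linear MDP, both Q-type and V-type BE dimensions).}

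The plan is to mirror the argument of Proposition~\ref{prop:lr_vbe}, but now carry it out for \emph{both} the Q-type and V-type Bellman Eluder dimensions, exploiting that in the linear MDP case the feature $\philr$ is fixed and known (so there is no union over a feature class). First I would observe that $\Fon=\Fcal(\{\philr\})-\Fcal(\{\philr\})$ is itself a class of linear functions in $\philr_h$: any $f_h\in\Fon_h$ has the form $f_h(x,a)=\inner{\philr_h(x,a)}{\theta_h}$ with $\|\theta_h\|_2\le 2(H-h-1)\sqrt{\dlr}$ and range in $[-2(H-h-1),2(H-h-1)]$. This is the single structural fact I need and it follows immediately from the definition of $\Fcal_h(\{\philr\},B_h)$.

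Next I would bound the \emph{Bellman rank} of $\Fon$ under zero reward, separately for the two types. For the Q-type: for any $f\in\Fon$, roll-in policy $\pi$, and step $h$, write
\[
\EcalQzero(f,\pi,h)=\EE\sbr{f_h(x_h,a_h)-V_f(x_{h+1})\mid a_{0:h}\sim\pi}.
\]
By Lemma~\ref{lem:linear_mdp} applied to the state-value function $V_f$ (which is bounded in magnitude by $2(H-h-2)$), $\EE[V_f(x_{h+1})\mid x_h,a_h]=\inner{\philr_h(x_h,a_h)}{\theta^*_{f,h}}$ with $\|\theta^*_{f,h}\|_2\le 2(H-h-2)\sqrt{\dlr}$, so $f_h(x_h,a_h)-\EE[V_f(x_{h+1})\mid x_h,a_h]=\inner{\philr_h(x_h,a_h)}{\theta_h-\theta^*_{f,h}}$, a linear function of $\philr_h$ with norm $O(H\sqrt{\dlr})$. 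Taking the expectation over the roll-in then factorizes as $\inner{\EE_\pi[\philr_h(x_h,a_h)]}{\theta_h-\theta^*_{f,h}}$, exhibiting $\EcalQzero$ as a bilinear form with inner dimension $\dlr$ and normalization $O(H\sqrt{\dlr})$; hence the Q-type Bellman rank is at most $\dlr$. For the V-type, I would repeat exactly the computation in the proof of Proposition~\ref{prop:lr_vbe}: use Lemma~\ref{lem:linear_mdp} once on $f_{h+1}(x_{h+1},\pi_f(x_{h+1}))$ to turn $\EcalVzero(f,\pi,h)$ into $\EE[f_h(x_h,a_h)-\inner{\philr_h(x_h,a_h)}{\theta^*_{f,h}}\mid a_{0:h-1}\sim\pi,a_h\sim\pi_f]$, and then a second time (pushing the expectation one step back through $\philr_{h-1}$) to write it as $\inner{\nu(\pi)}{\tilde\theta(f)}$, again with inner dimension $\dlr$ and $O(H\sqrt{\dlr})$ normalization. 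Finally I would invoke Proposition~\ref{prop:br_be} (Bellman rank $\subseteq$ Bellman Eluder dimension) to convert each bound on the Bellman rank into the stated bound $O\bigl(1+\dlr\log(1+H\sqrt{\dlr}/\veps)\bigr)$ on $\dqbe^\zero(\Fon,\Dcal_{\Fon},\veps)$ and $\dvbe^\zero(\Fon,\Dcal_{\Fon},\veps)$ respectively.

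I do not expect a genuine obstacle here; the proposition is a routine instantiation. The one place requiring a little care is tracking the value-range constants through the differences ($f_h,f_h'\in\Fcal_h$ each with range $H-h-1$, so their difference has range $2(H-h-1)$, and after a Bellman backup the relevant quantity $f_h-\Tcal_h^\zero f_{h+1}$ can be as large as $O(H)$ in magnitude): these only affect the normalization parameter inside the logarithm, so they are absorbed into the $O(\cdot)$. The only subtlety worth flagging is that Lemma~\ref{lem:linear_mdp} must be applied with the correct range bound ($c=O(H)$ rather than $c=1$) so that the claimed norm bound $\|\theta^*_f\|_2\le c\sqrt{\dlr}$ yields the $H\sqrt{\dlr}$ appearing in the statement; once that bookkeeping is in place the two displayed inequalities follow verbatim from Proposition~\ref{prop:br_be}.
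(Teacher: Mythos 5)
Your proposal is correct and follows essentially the same route as the paper: bound the zero-reward Bellman rank of $\Fon$ via \pref{lem:linear_mdp} (with the $O(H)$ range bookkeeping you flag) and convert to Bellman Eluder dimension via \pref{prop:br_be}. The only cosmetic difference is that the paper obtains the V-type bound by directly invoking \pref{prop:lr_vbe} with the singleton feature class $\{\philr\}$ rather than repeating that computation, which is what your V-type step amounts to anyway.
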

\begin{proof}
The V-type Bellman Eluder dimension bound (\pref{prop:lr_vbe}) implies the same upper bound for $\dvbe^\zero(\Fon, \Dcal_{\Fon},\veps)$, where $\Fon$ is defined using the singleton feature class $\cbr{\philr}$. For Q-type Bellman Eluder dimension, we again start with the Q-type Bellman rank. For any $f\in\Fon$, we have:
\begin{align*}
    \EcalQzero(f,\pi,h)= {} & \EE\sbr{f_h(x_h,a_h)-f_{h+1}(x_{h+1},a_{h+1})\mid a_{0:h}\sim\pi, a_{h+1} \sim \pi_f} \\
    = {} & \EE\sbr{\inner{\philr_h(x_h,a_h)}{\theta_{h}-\theta_h'} - \inner{\philr_h(x_h,a_h)}{\theta^*_{f,h}} \mid a_{0:h}\sim\pi} \\
    = {} & \inner{\EE\sbr{\philr_h(x_h,a_h) \mid a_{0:h}\sim\pi}}{\theta_{h} -\theta_h'- \theta^*_{f,h}}.
\end{align*}
Using the same magnitude calculations for $\theta^*_{f,h}$, we have $\|\theta_{h} -\theta_h'- \theta^*_{f,h}\|_2 \le 4(H-h-1) \sqrt{\dlr}$. Therefore, we again have the Q-type Bellman rank bounded by $\dlr$ with normalization parameter $4(H-h-1) \sqrt{\dlr}$. Using \pref{prop:br_be}, we get the same bound on the Q-type Bellman Eluder dimension.
\end{proof}

\paragraph{Linear completeness setting} For the linear completeness setting in the known feature case, we show that its Q-type Bellman Eluder dimension is small. 
\begin{proposition}[Linear completeness setting]
\label{prop:lc_qbe}
Consider an MDP $M$ that satisfies linear completeness (\pref{def:lin_completeness}) with feature $\philc$. Define the corresponding linear class $\Fcal(\{\philc\})=\Fcal_0(\{\philc\},H-1)\times\ldots\times \Fcal_{H-1}(\{\philc\},0)$ using 
\begin{align*}\Fcal_h(\{\philc\},B_h)\hspace{-.15em}=\hspace{-.15em}\cbr{f_h(x_h,a_h)\hspace{-.15em} =\hspace{-.15em} \inner{\philc_h(x_h,a_h)}{\theta_h} \hspace{-.15em}:\hspace{-.15em} \|\theta_h\|_2 \le B_h\sqrt{\dlc},\langle\philc_h(\cdot),\theta_h\rangle \in [-B_h,B_h] }.
\end{align*}
Then, for the difference class $\Fon = \Fcal(\{\philc\}) - \Fcal(\{\philc\})$ we have:
\begin{align*}
    \dqbe^\zero(\Fon, \Dcal_{\Fon}, \veps) \le O\rbr{1+\dlc \log \rbr{1+\frac{H \sqrt{\dlc}}{\veps}}}.
\end{align*}
\end{proposition}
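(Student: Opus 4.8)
The plan is to follow exactly the template of \pref{prop:linear_qbe} and \pref{prop:lr_vbe}: first bound the Q-type Bellman \emph{rank} of the difference class $\Fon = \Fcal(\{\philc\}) - \Fcal(\{\philc\})$ under the zero reward by $\dlc$, with normalization parameter of order $H\sqrt{\dlc}$, and then invoke the Bellman rank $\subseteq$ Bellman Eluder dimension conversion (\pref{prop:br_be}) to obtain the stated bound.

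For the rank bound, fix $h\in[H-1]$ and take any $f=(f_0,\ldots,f_{H-1})\in\Fon$, so that each $f_{h'}=\langle\philc_{h'},\theta_{h'}-\theta_{h'}'\rangle$ with $\|\theta_{h'}\|_2,\|\theta_{h'}'\|_2\le (H-h'-1)\sqrt{\dlc}$ and $\langle\philc_{h'}(\cdot),\theta_{h'}\rangle,\langle\philc_{h'}(\cdot),\theta_{h'}'\rangle\in[-(H-h'-1),H-h'-1]$. The key observation is that, although $f_{h+1}$ is written as a difference, it is still a \emph{single} bounded linear function of $\philc_{h+1}$: setting $\tilde\theta_{h+1}\defeq\theta_{h+1}-\theta_{h+1}'$ we have $f_{h+1}=\langle\philc_{h+1},\tilde\theta_{h+1}\rangle$ with $\|\tilde\theta_{h+1}\|_2\le 2(H-h-2)\sqrt{\dlc}$ and $\langle\philc_{h+1}(\cdot),\tilde\theta_{h+1}\rangle\in[-2(H-h-2),2(H-h-2)]$, i.e.\ $f_{h+1}\in\Qcal_{h+1}(\{\philc\},2(H-h-2))$. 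Linear completeness (\pref{def:lin_completeness}, which by the discussion at the start of \pref{app:linear_completeness} may be invoked for any value of $B$) then produces a vector $\theta^*_{f,h}$ with $\|\theta^*_{f,h}\|_2\le 2(H-h-2)\sqrt{\dlc}$ such that $\Tcal^\zero_h f_{h+1}=\langle\philc_h,\theta^*_{f,h}\rangle$. Combining with $f_h=\langle\philc_h,\theta_h-\theta_h'\rangle$, the zero-reward Q-type Bellman error of $f$ on any roll-in $\pi$ factorizes as
\begin{align*}
\EcalQzero(f,\pi,h)=\EE\sbr{f_h(x_h,a_h)-\rbr{\Tcal^\zero_h f_{h+1}}(x_h,a_h)\mid a_{0:h}\sim\pi}=\inner{\nu(\pi)}{\tilde\theta(f)},
\end{align*}
where $\nu(\pi)\defeq\EE[\philc_h(x_h,a_h)\mid a_{0:h}\sim\pi]$ satisfies $\|\nu(\pi)\|_2\le 1$ and $\tilde\theta(f)\defeq(\theta_h-\theta_h')-\theta^*_{f,h}$ satisfies $\|\tilde\theta(f)\|_2\le 2(H-h-1)\sqrt{\dlc}+2(H-h-2)\sqrt{\dlc}\le 4(H-h-1)\sqrt{\dlc}$. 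Hence the Q-type Bellman rank of $\Fon$ under the zero reward is at most $\dlc$ with normalization parameter $O(H\sqrt{\dlc})$, and \pref{prop:br_be} gives $\dqbe^\zero(\Fon,\Dcal_{\Fon},\veps)\le O\rbr{1+\dlc\log\rbr{1+H\sqrt{\dlc}/\veps}}$, as claimed.

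I expect the only genuinely delicate point to be the step that brings linear completeness to bear on the difference class: one must recognize that completeness should be applied to $f_{h+1}$ \emph{as a single linear function} of $\philc_{h+1}$ rather than attempting to push $\Tcal^\zero_h$ through the subtraction (which fails because $\Tcal^\zero_h$ is nonlinear), and one must track the norm and value-range bounds carefully through the $B$-rescaling allowed by \pref{def:lin_completeness}. Once this is in place, the remainder is routine bookkeeping identical to the linear-MDP computation in \pref{prop:linear_qbe}.
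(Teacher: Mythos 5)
Your proposal is correct and follows essentially the same route as the paper's proof: factorize the zero-reward Q-type Bellman error as $\inner{\EE[\philc_h(x_h,a_h)\mid a_{0:h}\sim\pi]}{(\theta_h-\theta_h')-\theta^*_{f,h}}$ by invoking linear completeness on $f_{h+1}$ viewed as a single bounded linear function of $\philc_{h+1}$, bound the resulting normalization by $4(H-h-1)\sqrt{\dlc}$, and convert the Bellman rank bound of $\dlc$ into the Bellman Eluder dimension bound via \pref{prop:br_be}. The only differences are cosmetic (you track the slightly tighter $2(H-h-2)\sqrt{\dlc}$ bound at level $h+1$ and make the any-$B$ rescaling of \pref{def:lin_completeness} explicit, which the paper handles in its preliminary discussion).
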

\begin{proof}
Consider the Q-type Bellman rank, for any $f\in\Fon$, we have:
\begin{align*}
    {}&\EcalQzero(f,\pi,h)
    \\
    = {} & \EE\sbr{f_h(x_h,a_h)-f_{h+1}(x_{h+1},a_{h+1})\mid a_{0:h}\sim\pi, a_{h+1} \sim \pi_f} \\
    = {} & \EE\sbr{\inner{\philc_h(x_h,a_h)}{\theta_{h}-\theta_h'} - \inner{\philc_{h+1}(x_{h+1},a_{h+1})}{\theta_{h+1} - \theta'_{h+1}} \mid a_{0:h}\sim\pi, a_{h+1}\sim \pi_f} \\
    = {} & \EE\sbr{\inner{\philc_h(x_h,a_h)}{\theta_{h}-\theta_h'} - \inner{\philc_{h}(x_{h},a_{h})}{\theta^*_{f,h}} \mid a_{0:h}\sim \pi} \\
    = {} & \inner{\EE\sbr{\philc_h(x_h,a_h) \mid a_{0:h}\sim\pi}}{\theta_{h} - \theta_h'-\theta^*_{f,h}}
\end{align*}
where the penultimate step follows from \pref{def:lin_completeness} with the $\|\theta_h-\theta_h'\|_2,\|\theta^*_{f,h}\|_2, \|\theta_{h+1}-\theta'_{h+1}\|_2 \le 2(H-h-1)\sqrt{\dlc}$. Thus, we have $\|\theta_{h} - \theta^*_{f,h}\|_2 \le 4(H-h-1) \sqrt{\dlc}$ implying a Q-type Bellman rank bound of $\dlc$ with normalization parameter $4(H-h-1) \sqrt{\dlc}$. Using \pref{prop:br_be}, we get the stated bound on the Q-type Bellman Eluder dimension for linear completeness setting.
\end{proof}

\end{document}